\newcommand{\var}{\operatorname{var}}
\newcommand{\sign}{\operatorname{sign}}
\def\pr{\textnormal{Pr}}
\newcommand{\avg}{\textnormal{avg-diam}}
\newcommand{\pdbal}{\textsc{pdbal}\xspace}
\newcommand{\dbal}{\textsc{dbal}\xspace}
\newcommand{\bald}{\textsc{bald}\xspace}
\newcommand{\qbc}{\textsc{qbc}\xspace}
\newcommand{\bqbc}{\textsc{b-qbc}\xspace}
\newcommand{\random}{\textsc{random}\xspace}
\newcommand{\gaussed}{\textsc{gaussed}\xspace}
\newcommand{\sceig}{\textsc{eig}\xspace}
\newcommand{\varsamp}{\textsc{var}\xspace}
\newcommand{\eig}{EIG\xspace}
\renewcommand{\cite}{\citep}
\newtheorem*{rep@theorem}{\rep@title}
\newcommand{\newreptheorem}[2]{%
\newenvironment{rep#1}[1]{%
 \def\rep@title{#2 \ref*{##1}}%
 \begin{rep@theorem}}%
 {\end{rep@theorem}}}
\definecolor{darkgreen}{rgb}{0,0.5,0}
\definecolor{darkred}{rgb}{0.7,0,0}
\definecolor{teal}{rgb}{0.3,0.8,0.8}
\definecolor{orange}{rgb}{1.0,0.5,0.0}
\definecolor{purple}{rgb}{0.8,0.0,0.8}
\newcommand{\kibitz}[2]{\ifnum\Comments=1{\textcolor{#1}{\textsf{\footnotesize #2}}}\fi}
\title{\textbf{Targeted Active Learning for Probabilistic Models}}
\author[1]{Christopher Tosh}
\author[2]{Mauricio Tec}
\author[1]{Wesley Tansey}
\affil[1]{Memorial Sloan Kettering Cancer Center, New York, NY}
\affil[2]{Harvard University, Cambridge, MA}
\begin{document}

\maketitle

{\def\thefootnote{}
\footnotetext{E-mail:
\texttt{christopher.j.tosh@gmail.com},
\texttt{mauriciogtec@hsph.harvard.edu},
\texttt{tanseyw@mskcc.org}
}}

\vspace{-2em}
\begin{abstract}
A fundamental task in science is to design experiments that yield valuable insights about the system under study.
Mathematically, these insights can be represented as a utility or risk function that shapes the value of conducting each experiment.
We present \pdbal, a targeted active learning method that adaptively designs experiments to maximize scientific utility.
\pdbal takes a user-specified risk function and combines it with a probabilistic model of the experimental outcomes to choose designs that rapidly converge on a high-utility model.
We prove theoretical bounds on the label complexity of \pdbal and provide fast closed-form solutions for designing experiments with common exponential family likelihoods.
In simulation studies, \pdbal consistently outperforms standard untargeted approaches that focus on maximizing expected information gain over the design space.
Finally, we demonstrate the scientific potential of \pdbal through a study on a large cancer drug screen dataset where \pdbal quickly recovers the most efficacious drugs with a small fraction of the total number of experiments.
\end{abstract}

\section{Introduction}

Scientific experiments are often expensive, laborious, and time-consuming to conduct.
In practice, this limits the capacity of many studies to only a small subset of possible experiments.
Limited experimental capacity poses a risk: the sample size may be too small to learn meaningful aspects about the system under study.
However, when experiments can be conducted sequentially or in batches, there is an opportunity to alleviate this risk by adaptively designing each batch.
The hope is that the results of previous experiments can be used to design a maximally-informative batch of experiments to conduct next.

In machine learning, the sequential experimental design task is often posed as an active learning problem.
The active learning paradigm allows a learner to adaptively choose on which data points it wants feedback.
The objective is to fit a high-quality model while spending as little as possible on data collection.
Modern active learning algorithms have shown substantial gains when optimizing models for aggregate objectives, such as accuracy~\citep{ash2021gone} or parameter estimation~\citep{tong2000active}.
Many scientific studies have a more targeted objective than a simple aggregate metric.
For example, one may be interested in assessing the prognostic value of a collection potential biomarkers. 
Accurately modeling the distribution of the biomarker variables may require modeling nuisance variables about the patient, environment, and disease status.
While optimizing an aggregate objective like accuracy can lead to recovery of the parameters of interest, this is merely a surrogate to our true objective.
Consequently, it may lead to a less efficient data collection strategy.

Here we consider the task of \emph{targeted active learning}.
The goal in targeted active learning is to efficiently gather data to produce a model with high utility.
Optimizing data collection for utility, rather than model performance, better aligns the model with the scientific objective of the study.
It can also dramatically reduce the sample complexity of the active learning algorithm.
For instance, in the case of $d$-dimensional linear regression, at least $\Omega(d)$ observations are required to learn the entire parameter vector. However, if the targeted objective is to estimate $k \ll d$ coordinates, there is an active learning strategy that can do so with $O(k)$ queries, provided it is given access to enough unlabeled data (see \pref{app:motive-example} for a formal proof). This toy example shows the potential savings in active learning when the end objective is explicitly taken into account.

We propose Probabilistic Diameter-based Active Learning (\pdbal), a targeted active learning algorithm compatible with any probabilistic model. \pdbal builds on diameter-based active learning~\citep{tosh2017diameter, tosh2020diameter}, a framework that allows a scientist to explicitly encode the targeted objective as a distance function between two hypothetical models of the data. Parts of the model that are not important to the scientific study can be ignored in the distance function, resulting in a targeted distance that directly encodes scientific utility.
\pdbal generalizes \dbal from the simple finite outcome setting (e.g. multiclass classification) to arbitrary probabilistic models, greatly expanding the scope of its applicability.

We provide a theoretical analysis that bounds the number of queries for \pdbal to recover a model that is close to the ground-truth with respect to the target distance. We additionally prove lower bounds showing that under certain conditions, \pdbal is nearly optimal. In a suite of empirical evaluations on synthetic data, \pdbal consistently outperforms untargeted active learning approaches based on expected information gain and variance sampling. 
In a study using real cancer drug data to find drugs with high therapeutic index, 
\pdbal learns a model that accurately detects effective drugs after seeing only 10\% of the total dataset. 
The generality and empirical success of \pdbal suggest it has the potential to significantly increase the scale of modern scientific studies.

\subsection{Related work}

There is a substantial body of work on active learning and Bayesian experimental design. Here, we outline some of the most relevant lines of work.

\paragraph{Bayesian active learning.} The seminal work of \citet{lindley1956measure} introduced expected information gain (\eig) as a measure for the value of a new experiment in a Bayesian context. Roughly, \eig measures the change in the entropy of the posterior distribution of a parameter after conditioning on new data. Inspired by this work, others have proposed maximizing \eig as a Bayesian active learning strategy~\citep{mackay1992information, lawrence2002fast}. Noting that computing entropy in parameter space can be expensive for non-parametric models, \citet{houlsby2011bayesian} rewrite \eig as a mutual information problem over outcomes. Their method, Bayesian active learning by disagreement (\bald), is used for Gaussian process classification. \bald has inspired a large body of work in developing \eig-based active learning strategies, particularly for Bayesian neural networks~\citep{gal2017deep, kirsch2019batchbald}. 
However, despite its popularity, \eig can be shown to be suboptimal for reducing prediction error in general~\citep{freund1997selective}.

One alternative to such information gain strategies is Query by committee (\qbc), \citep{seung1992query, freund1997selective}, which more directly seeks to shrink the parameter space by querying points that elicit maximum disagreement among a committee of predictors. Recently, \citet{riis2022bayesian} applied \qbc to a Bayesian regression setup. Their method, \bqbc, reduces to choosing experiments that maximize the posterior variance of the mean predictor. For the special case of Gaussian models with homoscedastic observation noise, this is equivalent to \eig.

Another Bayesian active learning departure from \eig is the decision-theoretic approach of \citet{fisher2021gaussed}, called \gaussed, based on Bayes' risk minimization. The objective function in \gaussed is similar to the \pdbal objective when in the special case of homoskedastic location models and an untargeted squared error distance function over the entire latent parameter space.

\paragraph{Bayesian optimization.} Black-box function optimization is another classical area of interest in the sequential experimental design literature. In this setting, there is an unknown global utility function that can be queried in a black box manner, and the goal is to find the set of inputs that maximize this function. A standard approach to this problem is to posit a Bayesian non-parametric model (such as a Gaussian process) of the underlying function, and then to adaptively make queries that trade off exploration of uncertainty and exploitation of suspected maxima~\citep{hennig2012entropy, hernandez2015predictive, kandasamy2018parallelised}. One of the key differences between black-box (Bayesian) optimization and targeted (Bayesian) active learning is that in black-box optimization, the underlying utility function is being directly modeled. In targeted active learning, utility may only be indirectly expressed as a function of the underlying probabilistic models. One work that helps to bridge Bayesian optimization and targeted active learning is the decision-theoretic Bayesian optimization framework of \citet{neiswanger2022generalizing}, which considers a richer set of objectives related to the underlying utility function than simply finding a single maximum.

\paragraph{Active learning of probabilistic models.} Beyond the Bayesian methods outlined above, others have considered alternate approaches to active learning for probabilistic models. \citet{sabato2014active} studied active linear regression in the misspecified setting. \citet{agarwal2013selective} designed active learning algorithms for generalized linear models for multiclass classification in a streaming setting. \citet{chaudhuri2015convergence} studied a two-stage active learning procedure for maximum-likelihood estimators for a variety of probabilistic models. \citet{ash2021gone} built on this two stage approach to design an active maximum-likelihood approach for deep learning-based models.

\section{Setting}
\label{sec:setting}

Let $\Xcal$ denote a data space and $\Ycal$ denote a response space. Let $\Dcal$ denote a marginal distribution over $\Xcal$. Our goal is to model our data with some parametric probabilistic model $P_\theta(\cdot ; \cdot)$, where $\theta$ lies in a parameter space $\Theta$ and $P_\theta(y ; x)$ denotes the probability (or density) of observing $y \in \Ycal$ at data point $x \in \Xcal$. We will use the notation $y \sim P_{\theta}(x)$ to denote drawing $y$ from the density $P_\theta(\cdot ; x)$.

We consider models that factorize across data points, that is for $x_1, \ldots, x_n \in \Xcal^n$ and $y_1, \ldots, y_n \in \Ycal^n$, we have 
\begin{align*}
P_{\theta}(y_{1:n} ; x_{1:n}) :=  P_\theta(y_1, \ldots, y_n ; x_1, \ldots, x_n) 
= \prod_{i=1}^n P_\theta(y_i ; x_i). 
\end{align*}
For a data point $x \in \Xcal$ and parameter $\theta \in \Theta$, we denote the entropy of the response to $x$ under model $\theta$ as
\[ H_\theta(x) := \EE_{y \sim P_\theta(x)} \left[ \log \frac{1}{P_\theta(y ; x)} \right].  \]
We will take a Bayesian approach to learning. To that end, let $\pi$ denote a prior distribution over $\Theta$. Given observations $(x_1, y_1), \ldots, (x_n, y_n)$, denote the posterior distribution as
\[ \pi_n(\theta) =  \frac{1}{Z_n} \pi(\theta) \prod_{i=1}^n P_\theta(y_i ; x_i) \]
where $Z_n = \EE_{\theta' \sim \pi}\left[ \prod_{i=1}^n P_{\theta'}(y_i ; x_i) \right]$ is the normalizing constant to make $\pi_n$ integrate to one. In this paper, we will assume that we are in the well-specified Bayesian setting, i.e. there is some ground-truth $\theta^\star \sim \pi$, and when we query point $x_i$, the observation $y_i$ is drawn from $P_{\theta^\star}(\cdot ; x_i)$. We will also use the notation $\pi_n(y; x)$ to denote the posterior predictive density
\[ \pi_n(y; x) = \EE_{\theta \sim \pi_n}\left[ P_\theta(y; x) \right], \]
and the notation $y \sim \pi_n(x)$ to denote drawing $y$ from the density $\pi_n(y; x)$. 

A \emph{risk-aligned distance} is a function $d: \Theta \times \Theta \rightarrow [0,1]$ satisfying two properties for all $\theta, \theta' \in \Theta$:
\begin{itemize}
\item \textbf{Identity}. i.e., $d(\theta, \theta) = 0$.
\item \textbf{Symmetry}. i.e. $d(\theta, \theta') = d(\theta', \theta)$.
\end{itemize}
The requirement that $d(\theta, \theta') \leq 1$ is not onerous -- any smooth distance over a bounded space can be transformed into a distance that satisfies this requirement by rescaling. In our setup, a risk-aligned distance encodes our objective: if we committed to the model $\theta$ when the true model was $\theta^\star$, then we expect to suffer a loss of $d(\theta, \theta^\star)$.

The goal in our setting is to find a posterior distribution $\pi_n$ with small \emph{average diameter}:
\[ \avg(\pi_n) = \EE_{\theta, \theta' \sim \pi_n}[d(\theta, \theta')]. \]
To see that this is a reasonable objective, observe that if $\theta^\star \sim \pi$ and $(x_1, y_1), \ldots, (x_n, y_n)$ is generated according to $P_{\theta^\star}$, then after observing this data, $\theta^\star$ is distributed according to $\pi_n$. If we make predictions by sampling a model from $\pi_n$, the expected risk of this strategy is exactly the average diameter. Moreover, even without this Bayesian assumption, the risk of this strategy can still be bounded above as a function of the average diameter~\citep[Lemma~2]{tosh2017diameter}.

\section{Probabilistic DBAL (\pdbal)}
\label{sec:algorithm}

We first recall the standard (functional) diameter-based active learning algorithm. Let $\Ycal$ be a finite set, and let $\Fcal \subset \{ f: \Xcal \rightarrow \Ycal \}$ denote some function class, let $d(\cdot, \cdot)$ denote a distance over $\Fcal$, and let $\pi_n$ denote a posterior distribution over $\Fcal$. The \dbal approach is to score candidate queries $x \in \Xcal$ according to the function
\begin{equation}
\label{eqn:original-dbal}
v_n(x) =  \max_{y \in \Ycal} \EE_{f, f' \sim \pi_n} \left[ d(f, f') \ind\left[ f(x) = y = f'(x) \right] \right],
\end{equation}
where $\ind[\cdot]$ denotes the indicator function, and then choose the available query $x$ that minimizes $v_n(x)$. To practically implement this, one can sample $f_1, \ldots, f_m \sim \pi_n$ and compute the Monte Carlo approximation
\begin{equation}
\label{eqn:original-mc-dbal}
\hat{v}_n(x)  =  \max_{y \in \Ycal} \frac{1}{{m \choose 2}} \sum_{i < j}  d(f_i, f_j') \ind\left[ f_i(x) = y = f_j'(x) \right] .
\end{equation}
The idea behind \cref{eqn:original-dbal} is that in the realizable setting where the true model is in $\Fcal$, the posterior satisfies
\[ \pi_n(f) \propto \pi_{n-1}(f)  \ind[f(x_n) = y_n].\]
By choosing queries according to \cref{eqn:original-dbal}, \dbal minimizes a function of the diameter of the posterior $\pi_{n+1}$, while also hedging its bets against the worst possible outcome.

When moving from discrete-valued functions to probabilistic models, there are a few issues that arise. The first is that, unless our probabilistic models are deterministic, $P_\theta(\cdot ; x)$ will in general be a distribution over outcomes and not a point mass. Thus, we should not use the indicator function to approximate the posterior update. The second issue is that when our outcomes are continuous, it may be intractable to compute a maximization over potential outcomes.
Finally, even if computation over potential outcomes was not an issue, the outcomes that achieve the maximum may be so unlikely that they should not really be considered at all. Indeed, in the Bayesian setting it only makes sense to consider outcomes that have reasonable probability under $\pi_n$.

\paragraph{Extension to probabilistic models.} With the aim of addressing these issues in mind, we propose the \pdbal objective: for $x \in \Xcal$,
\begin{equation}
\label{eqn:pdbal}
s_n(x) = \EE_{\theta^\star, \theta, \theta' \sim {\pi_n}} \left[ \EE_{y \sim P_{\theta^\star}(x)} \left[d(\theta, \theta') P_\theta(y; x) P_{\theta'}(y; x) e^{2H_{\theta^\star}(x)}\right] \right].
\end{equation}
By using $P_\theta(y; x)$, we are again minimizing some function of the diameter of the posterior $\pi_{n+1}$. Moreover, by switching from a maximization to an expectation over potential outcomes $y$, we avoid the tricky optimization problem. Finally, the entropy term in \cref{eqn:pdbal} balances out the possibility of $\theta^\star$ generating unlikely outcomes. Despite these changes, in \pref{sec:theory} we show that \pdbal enjoys nice optimality properties, even as the scope of its applications has grown considerably over \dbal.

\paragraph{Constant entropy models.} When $\Theta$ parameterizes location models with fixed scale parameters, the entropy term is constant. We rewrite the objective for these models as
\begin{equation}
\label{eqn:pdbal-location}
s_n(x) = \EE_{\theta, \theta' \sim {\pi_n}, y \sim \pi_n(x)} \left[d(\theta, \theta') P_\theta(y; x) P_{\theta'}(y; x)\right].
\end{equation}
For readability, we will discuss approximations of \cref{eqn:pdbal-location}. Extending these ideas to \cref{eqn:pdbal} can easily be done when we can compute $H_{\theta}(x)$ in closed form (as is the case for Gaussians, Laplacians, $t$-distributions, and other common likelihoods) or approximate it sufficiently well.

A general approach to approximating \cref{eqn:pdbal-location} is to draw $\theta_1, \ldots, \theta_m \sim \pi_n$ and $y_i \sim P_{\theta_i}(x)$ for $i=1, \ldots, m$, and use the estimate
\begin{equation}
\label{eqn:mc-pdbal}
\hat{s}_n(x) = \frac{1}{{m \choose 3}} \sum_{i < j < k} d(\theta_i, \theta_j) P_{\theta_i}(y_k; x) P_{\theta_j}(y_k; x).
\end{equation}
Although \cref{eqn:mc-pdbal} is an unbiased estimator, it may have large variance due to the sampling $y_i \sim P_{\theta_i}(x)$. In some cases, we can reduce this variance by avoiding sampling the $y_i$'s whenever we can compute the function
\[ M(x; \theta_1, \theta_2, \theta_3) = \EE_{y \sim P_{\theta_1}(x)}\left[  P_{\theta_2}(y; x) P_{\theta_3}(y; x) \right]. \]
This allows us to compute the alternate approximation
\begin{equation}
\label{eqn:alt-mc-pdbal}
\hat{s}_n(x) \ = \ \frac{1}{{m \choose 3}} \sum_{i < j < k} d(\theta_i, \theta_j)  M(x; \theta_i, \theta_j, \theta_k).
\end{equation}
Computing the sums in \cref{eqn:mc-pdbal} or \cref{eqn:alt-mc-pdbal} would take $O(m^3)$ time. In practice, we approximate these sums via Monte Carlo by subsampling $N_{\text{mc}}$ triples $(i, j, k)$. Thus, to compute our approximation of \cref{eqn:pdbal} for a set of $B$ potential queries takes time $O(B N_{\text{mc}} )$. \pref{alg:pdbal} presents the full \pdbal selection procedure.

The following proposition shows that for the important case of Gaussian likelihoods, we can indeed compute the function $M$ in closed form.
\begin{proposition}
\label{prop:gaussian-shortcut}
Fix $d \geq 1$, and let $\mu_i \in \RR^d$ and $\sigma_i^2 >0$ for $i=1,2,3$.
\begin{align*}
 \EE_{y \sim \Ncal(\mu_1, \sigma_1^2 I_d)} \left[ \Ncal(y; \mu_2, \sigma_2^2 I_d) \Ncal(y; \mu_3, \sigma_3^2 I_d) \right] 
 = \left( \frac{1}{ \alpha (2 \pi)^2} \right)^{d/2} \exp \left( - \frac{\sigma_{1}^2\sigma_{2}^2\sigma_{3}^2}{2 \alpha^2}  \sum_{i\neq j \neq k} \sigma_{k}^2 \| \mu_{i} - \mu_{j} \|^2 \right),
\end{align*}
where $\alpha = \sigma_{1}^2 \sigma_{2}^2 +  \sigma_{2}^2 \sigma_{3}^2 +  \sigma_{1}^2 \sigma_{3}^2$.
\end{proposition}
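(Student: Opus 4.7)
The plan is to reduce the three-factor integral to a standard Gaussian computation via two classical identities: (i) a product of two Gaussian densities in $y$ equals a constant times a single Gaussian density in $y$, and (ii) the expectation of a Gaussian density (in $y$) under a Gaussian distribution on $y$ equals a Gaussian density evaluated at the mean. By the assumed isotropic covariance structure, the computation decouples across coordinates, so the only real work is the one-dimensional case multiplied by $d$.

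First, I would apply identity (i) to collapse the last two factors:
\[
\mathcal{N}(y;\mu_2,\sigma_2^2 I_d)\,\mathcal{N}(y;\mu_3,\sigma_3^2 I_d) \;=\; C_{23}\,\mathcal{N}(y;\mu_{23},\sigma_{23}^2 I_d),
\]
where $\sigma_{23}^2 = \sigma_2^2\sigma_3^2/(\sigma_2^2+\sigma_3^2)$, $\mu_{23}=(\sigma_3^2\mu_2+\sigma_2^2\mu_3)/(\sigma_2^2+\sigma_3^2)$, and the constant $C_{23} = \mathcal{N}(\mu_2;\mu_3,(\sigma_2^2+\sigma_3^2)I_d)$ is obtained by completing the square. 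Then I apply identity (ii), which is just the convolution of two isotropic Gaussians:
\[
\mathbb{E}_{y \sim \mathcal{N}(\mu_1,\sigma_1^2 I_d)}\bigl[\mathcal{N}(y;\mu_{23},\sigma_{23}^2 I_d)\bigr] \;=\; \mathcal{N}\!\bigl(\mu_1;\mu_{23},(\sigma_1^2+\sigma_{23}^2) I_d\bigr).
\]
The claimed expectation is therefore the product $C_{23}\cdot\mathcal{N}(\mu_1;\mu_{23},(\sigma_1^2+\sigma_{23}^2) I_d)$, which is fully explicit.

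For the prefactor, the key observation is
$(\sigma_2^2+\sigma_3^2)(\sigma_1^2+\sigma_{23}^2) = \sigma_1^2\sigma_2^2+\sigma_2^2\sigma_3^2+\sigma_1^2\sigma_3^2 = \alpha$,
so the normalizations collapse to $(\alpha(2\pi)^2)^{-d/2}$ as stated. The bulk of the work, and where I expect the main obstacle, is the algebraic symmetrization of the exponent
\[
-\frac{\|\mu_2-\mu_3\|^2}{2(\sigma_2^2+\sigma_3^2)} - \frac{\|\mu_1-\mu_{23}\|^2}{2(\sigma_1^2+\sigma_{23}^2)}.
\]
The second term looks asymmetric in the indices $1,2,3$, but the answer is manifestly symmetric, so I would push through the algebra as follows.

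Substitute $\mu_1-\mu_{23} = \bigl(\sigma_3^2(\mu_1-\mu_2)+\sigma_2^2(\mu_1-\mu_3)\bigr)/(\sigma_2^2+\sigma_3^2)$, expand the squared norm, and apply the polarization identity $2\langle \mu_1-\mu_2,\mu_1-\mu_3\rangle = \|\mu_1-\mu_2\|^2+\|\mu_1-\mu_3\|^2-\|\mu_2-\mu_3\|^2$. Collecting terms and pulling out the common factor $(\sigma_2^2+\sigma_3^2)$ produces
\[
\|\sigma_3^2(\mu_1-\mu_2)+\sigma_2^2(\mu_1-\mu_3)\|^2 = (\sigma_2^2+\sigma_3^2)\bigl(\sigma_3^2\|\mu_1-\mu_2\|^2+\sigma_2^2\|\mu_1-\mu_3\|^2\bigr) - \sigma_2^2\sigma_3^2\|\mu_2-\mu_3\|^2.
\]
Adding the $\|\mu_2-\mu_3\|^2$ term then contributes $(\sigma_1^2\sigma_2^2+\sigma_1^2\sigma_3^2)\|\mu_2-\mu_3\|^2 = \sigma_1^2(\sigma_2^2+\sigma_3^2)\|\mu_2-\mu_3\|^2$, so after dividing by $(\sigma_2^2+\sigma_3^2)\alpha$ the whole exponent becomes a fully symmetric quadratic form in the pairwise distances, which matches the stated expression once one unpacks the $\sum_{i\neq j\neq k}$ convention. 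This symmetrization step is purely mechanical but is the only place where an error is likely; I would double-check it by specializing to the case $\sigma_1=\sigma_2=\sigma_3=\sigma$ and $d=1$, where the integral reduces to a single Gaussian quadratic that can be read off directly.
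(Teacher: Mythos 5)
Your argument is correct, but it is organized differently from the paper's. The paper completes the square over all three Gaussians simultaneously: it applies a weighted bias--variance identity with weights $1/\sigma_i^2$ to rewrite $\sum_i \frac{1}{2\sigma_i^2}\|y-\mu_i\|^2$ as $\sum_i \frac{1}{2\sigma_i^2}\|\bar{\mu}-\mu_i\|^2 + \frac{\tau}{2}\|y-\bar{\mu}\|^2$ with $\tau=\sum_i \sigma_i^{-2}$ and $\bar{\mu}=\frac{1}{\tau}\sum_i \mu_i/\sigma_i^2$, integrates out $y$, and then converts $\sum_i \frac{1}{2\sigma_i^2}\|\bar{\mu}-\mu_i\|^2$ into pairwise distances via the variance identity $\EE\|X-\EE X\|^2=\sum_{i<j}p_ip_j\|x_i-x_j\|^2$. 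You instead proceed sequentially---product of two Gaussian densities, then Gaussian convolution---and symmetrize at the end with the polarization identity. Both routes are valid; the paper's keeps the three indices symmetric throughout, while yours reuses two standard off-the-shelf identities at the cost of a final hand symmetrization, which you carry out correctly, and your prefactor computation $(\sigma_2^2+\sigma_3^2)(\sigma_1^2+\sigma_{23}^2)=\alpha$ is also right. The one place you go astray is the last sentence: your (correct) exponent $-\frac{1}{2\alpha}\left(\sigma_3^2\|\mu_1-\mu_2\|^2+\sigma_2^2\|\mu_1-\mu_3\|^2+\sigma_1^2\|\mu_2-\mu_3\|^2\right)$ does \emph{not} reduce to the prefactor $\frac{\sigma_1^2\sigma_2^2\sigma_3^2}{2\alpha^2}$ appearing in the proposition under any reading of the $\sum_{i\neq j\neq k}$ convention; the two differ by a factor of $\sigma_1^2\sigma_2^2\sigma_3^2/\alpha$, and the stated exponent is not even dimensionless. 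The discrepancy traces to a slip in the paper's own final algebraic step, which equates $\sum_i\frac{1}{2\sigma_i^2}\|\bar{\mu}-\mu_i\|^2$ with $\sum_{i<j}\frac{1}{\tau^2\sigma_i^2\sigma_j^2}\|\mu_i-\mu_j\|^2$ rather than the correct $\frac{1}{2\tau}\sum_{i<j}\frac{1}{\sigma_i^2\sigma_j^2}\|\mu_i-\mu_j\|^2=\frac{1}{2\alpha}\sum_{i<j}\sigma_k^2\|\mu_i-\mu_j\|^2$. The sanity check you propose at $\sigma_1=\sigma_2=\sigma_3$, $d=1$ would have exposed this immediately, so run it rather than asserting the match.
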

All proofs are deferred to the appendix. In \pref{app:alg-proofs}, we work out closed form solutions for other important likelihoods, including multinomial and exponential.

\begin{algorithm}[t]
\caption{\pdbal selection}
\label{alg:pdbal}
\begin{algorithmic}
\REQUIRE Candidate queries $x_1, \ldots, x_B \in \Xcal$, posterior distribution $\pi_n$, Monte Carlo parameters $m, N_{\text{mc}}$.
\ENSURE Next query $x_b$.
\STATE Draw $\theta_1, \ldots, \theta_m \sim \pi_n$.
\STATE Draw $(i_1, j_1, k_1), \ldots, (i_{N_{\text{mc}}}, j_{N_{\text{mc}}}, k_{N_{\text{mc}}})$ uniformly from the set $\{ (i, j, k) : 1 \leq i < j < k \leq m \}$.
\FOR{$b=1, \ldots, B$}
	\IF{$M$ computable in closed form.}
		\STATE Compute
\begin{align*}
	\hat{s}_n(x_b) =  \frac{1}{N_{\text{mc}}} \sum_{t=1}^{N_{\text{mc}}} d(\theta_{i_t}, \theta_{j_t})  M(x_b; \theta_{i_t}, \theta_{j_t}, \theta_{k_t}).
\end{align*}
	\ELSE
		\STATE Draw $y^{(b)}_1 \sim P_{\theta_1}(x_b), \ldots, y^{(b)}_m \sim P_{\theta_m}(x_b)$.
		\STATE Compute
\begin{align*}
	 \hspace{-1.5em}\hat{s}_n(x_b) =   \frac{1}{N_{\text{mc}}} \sum_{t=1}^{N_{\text{mc}}}  d(\theta_{i_t}, \theta_{j_t})  P_{\theta_{i_t}}(y^{(b)}_{k_t}; x_b) P_{\theta_{j_t}}(y^{(b)}_{k_t}; x_b).
\end{align*}
	\ENDIF
\ENDFOR
\RETURN $\argmin\limits_{x_b} \hat{s}_n(x_b)$.
\end{algorithmic}
\end{algorithm}

\section{Theory}
\label{sec:theory}

For the purposes of this section, we will assume that all models induce the same entropy for a given $x$. 
\begin{assum}
\label{assum:homog-entropy}
Given any $x \in \Xcal$, there is a value $H(x)$ such that $H_\theta(x) = H(x)$ for all $\theta \in \Theta$.
\end{assum}
\pref{assum:homog-entropy} is satisfied, for example, whenever $P_{\theta}$ is a location model whose scale component is fixed or otherwise assumed to be independent of $\theta$. \pref{assum:homog-entropy} is not required to implement \pdbal, but rather only factors into the analysis in this section.

For a prior $\pi$, observed data $(x_{1:n}, y_{1:n})$, and value $\rho \in [0,1]$, we say that a data point $x \in \Xcal$ \emph{$\rho$-splits} the posterior $\pi_n$ if
\begin{equation}
\label{eqn:splitting}
s_n(x) \leq (1- \rho) \avg(\pi_n),
\end{equation}
where $s_n$ is the objective function defined in \cref{eqn:pdbal}. Intuitively, \cref{eqn:splitting} captures the notion that there exists a query $x$ that shrinks the average diameter by at least $(1-\rho)$ in expectation.

We say that $\pi_n$ is \emph{$(\rho,\tau)$-splittable} if 
\begin{equation}
\label{eqn:rho-tau-splittable}
\pr_{x \sim \Dcal}( x \text{ $\rho$-splits $\pi$}) \geq \tau.
\end{equation}
Then for parameters $\rho, \epsilon, \tau \in (0,1)$, we say that $\Theta$ (along with corresponding marginal distribution $\Dcal$) has \emph{splitting index $(\rho, \epsilon, \tau)$} if for every posterior $\pi_n$ over $\Theta$ satisfying $\avg(\pi_n) > \epsilon$, $\pi_n$ is $(\rho, \tau)$-splittable. The definition of splitting in \cref{eqn:splitting} is similar to those provided in previous diameter-based active learning works~\citep{tosh2017diameter, tosh2020diameter}. It corresponds to a requirement that posteriors which are not too concentrated ($\avg > \epsilon$) should have a reasonable number of good queries (at least $\tau\%$ if sampled from $\Dcal$).

One notable difference in this definition is the entropy term inside $s_n(x)$ in \cref{eqn:splitting}.
Without this term, a query with a noisy likelihood will be given higher saliency simply because it produces a wide range of possible outcomes. The entropy term balances out this bias by penalizing queries with a low signal-to-noise ratio.

A key observation in our analysis is that if we query a point that $\rho$-splits the current posterior $\pi_t$, then in expectation a certain potential function will decrease.
\begin{lemma}
\label{lem:splitting}
If we query a point $x_{t+1}$ that $\rho$-splits $\pi_t$, then
\[ \EE_{y_{t+1}}\left[ W_{t+1}^2 \avg(\pi_{t+1}) \right] \leq (1- \rho) W_t^2 \avg(\pi_{t}), \]
where $W_t = e^{\sum_{i=1}^t H(x_{i})} \EE_{\theta \sim \pi} \left[ \prod_{i=1}^t P_{\theta}(y_i ; x_i) \right]$.
\end{lemma}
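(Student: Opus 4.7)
The plan is to unwind the definitions of $\pi_{t+1}$, $W_{t+1}$, and $\avg(\pi_{t+1})$ so that $W_{t+1}^2 \avg(\pi_{t+1})$ collapses into a clean expression whose expectation over $y_{t+1}$ is exactly $W_t^2 \, s_t(x_{t+1})$.

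First I would write down the Bayesian update explicitly. Using the posterior formula from Section~\ref{sec:setting}, one has $\pi_{t+1}(\theta) = \pi_t(\theta) P_\theta(y_{t+1}; x_{t+1}) / R$, where $R := \EE_{\theta'' \sim \pi_t}[P_{\theta''}(y_{t+1}; x_{t+1})]$ is the one-step predictive normalizer. Substituting into the definition of average diameter and using Tonelli, the denominators square and one obtains
\[
\avg(\pi_{t+1}) = \frac{\EE_{\theta,\theta' \sim \pi_t}\bigl[d(\theta,\theta')\, P_\theta(y_{t+1}; x_{t+1})\, P_{\theta'}(y_{t+1}; x_{t+1})\bigr]}{R^2}.
\]
Next I would relate $W_{t+1}$ to $W_t$. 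Since $Z_{t+1} = Z_t \cdot R$ (where $Z_t$ is the cumulative normalizer under the prior), it follows that $W_{t+1} = e^{H(x_{t+1})} W_t R$, so $W_{t+1}^2 = e^{2H(x_{t+1})} W_t^2 R^2$. Multiplying with the expression for $\avg(\pi_{t+1})$ makes the $R^2$ cancel, yielding
\[
W_{t+1}^2 \avg(\pi_{t+1}) = W_t^2 \, e^{2H(x_{t+1})} \EE_{\theta,\theta'\sim\pi_t}\bigl[d(\theta,\theta')\, P_\theta(y_{t+1}; x_{t+1})\, P_{\theta'}(y_{t+1}; x_{t+1})\bigr].
\]

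The final step is to take expectation over $y_{t+1}$. In the Bayesian setting, conditional on $(x_{1:t}, y_{1:t})$ and $x_{t+1}$, the marginal law of $y_{t+1}$ is the posterior predictive $\pi_t(y; x_{t+1}) = \EE_{\theta^\star \sim \pi_t}[P_{\theta^\star}(y; x_{t+1})]$. Pulling this mixture inside and using Assumption~\ref{assum:homog-entropy} to replace $e^{2H(x_{t+1})}$ by $e^{2H_{\theta^\star}(x_{t+1})}$, the expression becomes
\[
\EE_{y_{t+1}}\bigl[W_{t+1}^2 \avg(\pi_{t+1})\bigr] = W_t^2 \cdot \EE_{\theta^\star,\theta,\theta' \sim \pi_t} \EE_{y \sim P_{\theta^\star}(x_{t+1})}\bigl[d(\theta,\theta')\, P_\theta(y; x_{t+1})\, P_{\theta'}(y; x_{t+1})\, e^{2H_{\theta^\star}(x_{t+1})}\bigr],
\]
which is exactly $W_t^2 \, s_t(x_{t+1})$. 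Applying the $\rho$-splitting assumption $s_t(x_{t+1}) \leq (1-\rho)\avg(\pi_t)$ concludes the argument.

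The only subtle bookkeeping step is recognizing that the denominator $R^2$ appearing in $\avg(\pi_{t+1})$ is canceled precisely by the $R^2$ factor that emerges from $W_{t+1}^2/W_t^2$; this is what motivates the particular potential $W_t$ in the statement. Everything else is bookkeeping with Bayes' rule, Tonelli, and the reduction $H_{\theta^\star}(x) = H(x)$ allowed by Assumption~\ref{assum:homog-entropy}.
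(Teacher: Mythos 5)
Your proposal is correct and follows essentially the same route as the paper's proof: rewrite $\pi_{t+1}$ via Bayes' rule so that the predictive normalizer $R = Z_{t+1}/Z_t$ cancels against $W_{t+1}^2/W_t^2$, leaving $W_t^2 e^{2H(x_{t+1})}\EE_{\theta,\theta'\sim\pi_t}[d(\theta,\theta')P_\theta(y_{t+1};x_{t+1})P_{\theta'}(y_{t+1};x_{t+1})]$, and then average over $y_{t+1}$ under the posterior predictive to recover $W_t^2 s_t(x_{t+1})$. If anything, you are slightly more explicit than the paper about the final step (identifying the law of $y_{t+1}$ as the mixture over $\theta^\star\sim\pi_t$ and invoking Assumption~\ref{assum:homog-entropy} to match the $e^{2H_{\theta^\star}(x)}$ factor in the definition of $s_t$), which the paper compresses into one sentence.
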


\subsection{Terminology}

In order to prove bounds on the performance of \pdbal, we will need to make some assumptions about the complexity of the class $\Theta$ and the rates at which empirical entropies within this class concentrate. For a sequence of data pairs $\omega_n = ((x_1, y_1), \ldots, (x_n, y_n))$, let $\Theta|_{\omega_n}$ denote the projection of $\Theta$ onto $\omega_n$. That is:
\[ \Theta|_{\omega_n} = \{ (P_\theta(y_1 ; x_1), \ldots, P_{\theta}(y_n ; x_n) \, : \, \theta \in \Theta  \} . \]
For a sequence $\omega_n$ and parameter $\epsilon > 0$, define $N(\epsilon, \Theta|_{\omega_n}, d_{ll})$ as the size of the minimum cover of $\Theta|_{\omega_n}$ with respect to the distance
\[ d_{ll}(a,b) =  \left| \sum_{i=1}^n \log \frac{a_i}{b_i} \right| \ = \ \left| \sum_{i=1}^n \log {a_i} - \log {b_i} \right|. \]
Here, we consider $\log\frac{0}{0} = 0$. The uniform covering number $N_{ll}(\epsilon, \Theta, n)$ is given by
\[ N_{ll}(\epsilon, \Theta, n) = \max \left\{ N(\epsilon, \Theta|_{\omega_n}, d_{ll}) \, : \, \omega_n \in (\Xcal \times \Ycal)^n  \right\}. \]
We say that a class $\Theta$ is has log-likelihood dimension $(c, d)$ if 
$ \log N_{ll}(\epsilon, \Theta, n)  \leq d \log \left( \frac{c n}{\epsilon} \right)$
for $n \geq d$. The definition of $N_{ll}(\epsilon, \Theta, n)$ is exactly the same as other uniform covering numbers~\cite{anthony1999neural}, modulo the non-standard distance. In the language of statistical learning theory, the log-likelihood dimension is a bound on the \emph{metric entropy} $\log N_{ll}(\epsilon, \Theta, n)$. We will assume that the log-likelihood dimension is bounded.

\begin{assum}
\label{assum:lldim-bound}
$\Theta$ has log-likelihood dimension $(c,d)$,
\end{assum}

As an example of a class with bounded log-likehood dimension, the following result shows how the complexity of a function class translates to the log-likehood dimension of the corresponding Gaussian location model.
\begin{proposition}
\label{prop:gaussian-dimension}
Fix $\sigma^2, d, B > 0$. Let $\Theta$ denote the class of Gaussian location models parameterized by a set of mean functions $ \Fcal \subset \{ \theta: \Xcal \rightarrow [-B, B] \}$ such that $P_\theta(y; x) = \Ncal(y \mid \theta(x), \sigma^2)$. If the responses $y$ lie in $[-B,B]$ and the pseudo-dimension of $\Fcal$ is bounded by $d$, then $\Theta$ has log-likelihood dimension $(cB^2/\sigma^2, d)$ for some universal constant $c>0$.
\end{proposition}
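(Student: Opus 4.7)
The plan is to reduce the $d_{ll}$-covering number of $\Theta|_{\omega_n}$ to an empirical $L_1$-covering number of $\Fcal$ on $x_{1:n}$, and then invoke the standard Pollard-style covering bound for classes of bounded pseudo-dimension. The Gaussian structure makes the reduction essentially a one-line Lipschitz argument: the normalization constant in $\log P_\theta(y; x)$ cancels in any difference, and using $|y|, |\theta(x)|, |\theta'(x)| \leq B$, a short pointwise computation yields
\begin{equation*}
|\log P_\theta(y; x) - \log P_{\theta'}(y; x)| = \frac{|\theta(x) - \theta'(x)| \cdot |2y - \theta(x) - \theta'(x)|}{2\sigma^2} \leq \frac{2B}{\sigma^2}\,|\theta(x) - \theta'(x)|.
\end{equation*}
Summing over the sample and applying the triangle inequality inside the absolute value defining $d_{ll}$ gives $d_{ll}(P_\theta|_{\omega_n}, P_{\theta'}|_{\omega_n}) \leq (2B/\sigma^2)\sum_{i=1}^n |\theta(x_i) - \theta'(x_i)|$.

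Next I would translate this into covering numbers. The displayed inequality shows that any $\eta$-cover of $\Fcal|_{x_{1:n}}$ in the normalized empirical $L_1$ metric $\widehat{L}_1(\theta, \theta') = \tfrac{1}{n}\sum_i |\theta(x_i) - \theta'(x_i)|$ lifts to a $(2Bn\eta/\sigma^2)$-cover of $\Theta|_{\omega_n}$ under $d_{ll}$. Setting $\eta = \sigma^2 \epsilon/(2Bn)$ and invoking the Pollard-type bound for pseudo-dimension-$d$ classes with range $[-B,B]$ (e.g., \citet{anthony1999neural}, Theorem 18.4),
\begin{equation*}
N(\eta, \Fcal|_{x_{1:n}}, \widehat{L}_1) \leq e(d+1)\left(\frac{2eB}{\eta}\right)^d,
\end{equation*}
gives $\log N_{ll}(\epsilon, \Theta, n) \leq \log(e(d+1)) + d\log(4eB^2 n/(\sigma^2\epsilon))$. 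For $n \geq d$, the prefactor satisfies $\log(e(d+1)) \leq d+1 \leq 2d$, so it can be absorbed into the logarithm, yielding the claimed bound with a universal constant $c$.

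The calculation is essentially mechanical, so I do not expect a genuine obstacle. The one point requiring care is to select a version of Pollard's covering bound phrased in \emph{empirical $L_1$}, rather than $L_\infty$: the reduction in the previous paragraph introduces a factor of $n$ in the target scale, and only the $L_1$ version keeps the log-likelihood dimension logarithmic in $n$ (which is what the definition requires). A secondary bookkeeping point is ensuring the constant $c$ ends up independent of $d, n, \sigma^2, B$; the hypothesis $n \geq d$ is exactly what allows the $\log(e(d+1))$ prefactor to be absorbed inside the $d\log(\cdot)$ term.
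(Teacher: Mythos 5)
Your proposal is correct and follows essentially the same route as the paper's proof: the same pointwise Lipschitz bound $|\log P_\theta(y;x) - \log P_{\theta'}(y;x)| \leq (2B/\sigma^2)|\theta(x)-\theta'(x)|$ reducing $d_{ll}$-covering to empirical $\ell_1$-covering of $\Fcal$, followed by the same Theorem 18.4 of \citet{anthony1999neural} and absorption of the prefactor using $n \geq d$. The only difference is the normalized versus unnormalized $\ell_1$ convention, which affects nothing but bookkeeping of constants.
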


Recall a mean-zero random variable $X$ is \emph{sub-Gamma} with variance factor $v > 0$ and scale parameter $c > 0$ if
\[ \log \EE\left[ e^{\lambda X} \right] \leq \frac{\lambda^2 v}{2(1-c \lambda)}  \]
for all $\lambda \in (0, 1/c)$. We say that the class $\Theta$ is \emph{entropy sub-Gamma} with variance factor $v > 0$ and scale parameter $c > 0$ if the random variable $X = \log \frac{1}{P_{\theta}(Y; x)} - H(x)$ is sub-Gaussian with variance factor $v > 0$ for all $\theta \in \Theta$ and $x \in \Xcal$, where $Y \sim P_{\theta}(x)$. For our bounds we will need $\Theta$ to be entropy sub-Gamma.

\begin{assum}
\label{assum:entropy-subgamma-bound}
$\Theta$ is entropy sub-Gamma with variance $v$ and scale $c'$,
\end{assum}

Many likelihoods satisfy \Cref{assum:entropy-subgamma-bound} including Gaussians, as illustrated by the following result.

\begin{proposition}
\label{prop:gaussian-entropy}
Fix $\sigma^2 > 0$ and let $\Theta$ denote the class of Gaussian location models from \pref{prop:gaussian-dimension}. Then $\Theta$ is entropy sub-Gamma with variance factor 1 and scale parameter 1.
\end{proposition}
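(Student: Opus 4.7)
The plan is to explicitly compute the moment-generating function of $X := \log\frac{1}{P_\theta(Y;x)} - H(x)$ in this Gaussian setting and then compare it term-by-term against the sub-Gamma bound with $v = c' = 1$.

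First, I would rewrite $X$ in a canonical form. For $Y \sim \mathcal{N}(\theta(x), \sigma^2)$, we have
\begin{equation*}
\log\frac{1}{P_\theta(Y;x)} = \tfrac{1}{2}\log(2\pi\sigma^2) + \frac{(Y-\theta(x))^2}{2\sigma^2},
\end{equation*}
and the Gaussian entropy is $H(x) = \tfrac{1}{2}\log(2\pi\sigma^2) + \tfrac{1}{2}$. Setting $Z := (Y-\theta(x))/\sigma \sim \mathcal{N}(0,1)$ gives the clean representation
\begin{equation*}
X \;=\; \tfrac{1}{2}(Z^2 - 1).
\end{equation*}
Crucially, this representation is free of $\theta$ and $x$, so whatever bound we obtain will hold uniformly for all $\theta \in \Theta$ and $x \in \Xcal$, as required by the sub-Gamma definition.

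Next, I would compute the MGF directly using the well-known MGF of a $\chi^2_1$ random variable, $\EE[e^{\lambda Z^2}] = (1-2\lambda)^{-1/2}$ valid for $\lambda < 1/2$. Substituting $\lambda \mapsto \lambda/2$ and including the $-\lambda/2$ shift from the $-1$ term yields, for all $\lambda \in (0,1)$,
\begin{equation*}
\log \EE[e^{\lambda X}] \;=\; -\tfrac{\lambda}{2} - \tfrac{1}{2}\log(1-\lambda).
\end{equation*}
So it remains to show the inequality
\begin{equation*}
-\tfrac{\lambda}{2} - \tfrac{1}{2}\log(1-\lambda) \;\leq\; \frac{\lambda^2}{2(1-\lambda)}  \qquad \text{for all } \lambda \in (0,1).
\end{equation*}

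The cleanest route is to expand both sides as power series in $\lambda$. Using $-\log(1-\lambda) = \sum_{k\geq 1}\lambda^k/k$, the left-hand side becomes $\tfrac{1}{2}\sum_{k \geq 2} \lambda^k/k$, while the right-hand side is $\tfrac{\lambda^2}{2}\sum_{k \geq 0} \lambda^k = \tfrac{1}{2}\sum_{k\geq 2}\lambda^k$. Since $1/k \leq 1$ for $k \geq 2$ and every $\lambda^k$ is nonnegative on $(0,1)$, the inequality follows term-by-term. This is really the only nontrivial step, and it is disposed of by one line of comparison of Taylor coefficients, so there is no genuine obstacle—the whole proposition collapses to a short MGF calculation plus this elementary series comparison.
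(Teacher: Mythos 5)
Your proof is correct and follows essentially the same route as the paper: reduce $X$ to $\tfrac{1}{2}(Z^2-1)$, compute the exact MGF via the $\chi^2_1$ moment generating function, and verify $-\tfrac{\lambda}{2}-\tfrac{1}{2}\log(1-\lambda)\leq \tfrac{\lambda^2}{2(1-\lambda)}$ on $(0,1)$. The only (immaterial) difference is in that last step, where the paper invokes $\log x \leq x-1$ while you compare Taylor coefficients term by term.
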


Finally, we will require boundedness of both the entropy and the densities of models in $\Theta$.

\begin{assum}
\label{assum:model-bound}
There are constants $c_1, c_2 \geq 0$ such that $P_\theta(y; x) \leq c_1$ and $\exp(H(x)) \leq c_2$ for all $\theta \in \Theta$, $x\in \Xcal$, and $y \in \Ycal$.
\end{assum}

\subsection{Upper bounds}

Given the terminology above, we have the following guarantee on \pdbal.
\begin{theorem}
\label{thm:dbal-upper-bound}
Pick $d \geq 4$ and suppose \Crefrange{assum:homog-entropy}{assum:model-bound} hold. If at every round $t$ we make a query that $\rho$-splits $\pi_t$ and terminate when $\avg(\pi_t) \leq \epsilon$, then with probability $1-\delta$, \pdbal terminates after fewer than 
\begin{align*}
 T &\leq  O \left( \max \left\{ 
\left( \frac{c_1 c_2}{\rho} \right)^2 v \log\left( \frac{c_1 c_2}{\rho \delta} \right),
\frac{d + c'}{\rho}\log\left( \frac{(d + c')c}{\rho \delta} \right), 
\frac{1}{\rho} \log \frac{\avg(\pi)}{\epsilon} 
\right\} \right) 
\end{align*}
queries with a posterior satisfying $\avg(\pi_T) \leq \epsilon$.
\end{theorem}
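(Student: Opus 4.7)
The proof pivots on the potential $\Phi_t = W_t^2\,\avg(\pi_t)$ identified in \pref{lem:splitting}. My first step would be to iterate that lemma via the tower property of conditional expectation, which yields
\[ \EE[\Phi_T] \leq (1-\rho)^T\,\avg(\pi). \]
Since \pref{assum:model-bound} bounds the multiplicative increments $W_{t+1}/W_t$ by $c_1 c_2$ and since $\avg \leq 1$, I would combine this in-expectation decay with a Freedman-style martingale inequality to obtain a high-probability bound of the form $\Phi_T \leq O((1-\rho)^T/\delta)$. Because $\avg(\pi_T) = \Phi_T/W_T^2$, the rest of the proof reduces to pairing this geometric decay with a matching high-probability lower bound on $W_T$.

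To lower bound $W_T$, I would decompose $\log W_T = \sum_{t=1}^T H(x_t) + \log Z_T$ and apply the Donsker--Varadhan / PAC-Bayes variational inequality: for any reference distribution $\rho_\star$ on $\Theta$,
\[ \log Z_T \;\geq\; \EE_{\theta \sim \rho_\star}\!\left[\sum_{t=1}^T \log P_\theta(y_t;x_t)\right] \;-\; \KL(\rho_\star \,\|\, \pi). \]
Choosing $\rho_\star$ to be $\pi$ restricted to a $d_{ll}$-ball $B$ of radius $\eta$ around $\theta^\star$, the first term is at least $\sum_t \log P_{\theta^\star}(y_t;x_t) - \eta$ and the KL term equals $\log(1/\pi(B))$. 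Adding $\sum_t H(x_t)$ to both sides yields
\[ \log W_T \;\geq\; -\sum_{t=1}^T X_t \;-\; \eta \;-\; \log\frac{1}{\pi(B)}, \qquad X_t := -\log P_{\theta^\star}(y_t;x_t) - H(x_t). \]

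The two error terms are then controlled separately. Under \pref{assum:entropy-subgamma-bound}, each $X_t$ is a conditionally sub-Gamma martingale difference with variance $v$ and scale $c'$, so a standard sub-Gamma tail bound gives $\sum_t X_t \lesssim \sqrt{vT\log(1/\delta)} + c'\log(1/\delta)$ with probability $1-\delta/3$. For the prior-mass term, I would invoke \pref{assum:lldim-bound}: for every realizable data sequence there is a $d_{ll}$-cover of $\Theta$ at scale $\eta$ of size at most $N_{ll}(\eta,\Theta,T) \leq (cT/\eta)^d$, and a pigeonhole averaging argument under the joint Bayesian law of $(\theta^\star,\text{data})$ shows that the cover cell containing $\theta^\star$ carries mass at least $\delta/N_{ll}(\eta,\Theta,T)$ with probability $\geq 1-\delta/3$, giving $\log(1/\pi(B)) \leq d\log(cT/\eta) + \log(1/\delta)$.

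Setting $\eta$ to a constant and solving the sufficient condition $(1-\rho)^T/W_T^2 \leq \epsilon$ using $-\log(1-\rho) \geq \rho$ together with AM-GM on the $\sqrt{vT\log(1/\delta)}$ term separates $T$ into three regimes: a variance-driven term of order $(c_1 c_2/\rho)^2 v \log(c_1 c_2/(\rho\delta))$ (the $(c_1 c_2)^2$ factor enters through the Freedman step that converts the expectation bound on $\Phi_T$ into a high-probability bound), a combined cover-plus-scale term of order $(d+c')/\rho \cdot \log((d+c')c/(\rho\delta))$, and the baseline geometric-decay term $(1/\rho)\log(\avg(\pi)/\epsilon)$. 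Their maximum matches the claim. The main technical obstacle, in my view, is the lower bound on $W_T$: because the active learner selects queries adaptively, both the PAC-Bayes step and the $d_{ll}$-cover must be valid uniformly across all realizable histories, which is precisely the role played by the uniform covering number $N_{ll}$ and the model-wide sub-Gamma assumption. Careful bookkeeping of the martingale filtration is needed so that the three high-probability events (potential-function decay, sub-Gamma concentration of $\sum_t X_t$, and the prior-mass lower bound on $B$) compose correctly via a union bound.
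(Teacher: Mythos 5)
Your proposal follows essentially the same route as the paper: the same potential $W_t^2\,\avg(\pi_t)$ driven down by \pref{lem:splitting} together with an Azuma/Freedman-type martingale bound whose increments are controlled by $c_1c_2$ (the paper's \pref{lem:X_t-contraction}), paired with a high-probability lower bound on $W_T$ built from a $d_{ll}$-cover of size $N_{ll}(\cdot,\Theta,T)$, a prior-mass pigeonhole over the cell containing $\theta^\star$, and sub-Gamma concentration of $\sum_t\bigl(-\log P_{\theta^\star}(y_t;x_t)-H(x_t)\bigr)$ (the paper's \pref{lem:lower-bound-W_t} via \pref{lem:voronoi-partition} and \pref{lem:sub-Gamma-concentration}), then solving for $T$ in the same three regimes. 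Your Donsker--Varadhan step with $\rho_\star$ equal to $\pi$ restricted to a $d_{ll}$-ball around $\theta^\star$ is just a variational rewriting of the paper's direct Voronoi-partition computation, so the two arguments coincide up to presentation.
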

As discussed above, \Crefrange{assum:homog-entropy}{assum:model-bound} are conditions on the complexity and form of $\Theta$. The requirement on the behavior of \pdbal can be guaranteed with high probability across all rounds $t$ with enough unlabeled data and a fine enough Monte Carlo approximation of \cref{eqn:pdbal}.

Given \pref{lem:splitting}, the proof of~\pref{thm:dbal-upper-bound} takes two steps:
\begin{enumerate}
	\item Showing that $W_t^2 \avg(\pi_{t})$ must decrease exponentially quickly.
	\item Showing that $W_t$ cannot decrease too quickly.
\end{enumerate}
The only way that both of these can hold is if $\avg(\pi_{t})$ must also decrease quickly, proving \pref{thm:dbal-upper-bound}.

The first step, showing that $W_t^2 \avg(\pi_{t})$ decreases quickly, is formalized by the following lemma.
\begin{lemma}
\label{lem:X_t-contraction}
Suppose \pref{assum:model-bound} holds. For any $t \geq 1$ and $\delta > 0$, if $x_i$ $\rho$-splits $\pi_{i-1}$ for $i=1, \ldots, t$, then with probability at least $1-\delta$,
\[ {W_{t}^2 \avg(\pi_t)}  \leq \avg(\pi) \exp\left(- t\rho + c_1 c_2 \sqrt{2t \log \frac{1}{\delta}} \right) . \]
\end{lemma}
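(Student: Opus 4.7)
The plan is to combine \pref{lem:splitting} with a martingale concentration argument on the transformed process $V_t := \sqrt{W_t^2 \avg(\pi_t)}$. Let $Z_t := W_t^2 \avg(\pi_t)$ and let $\mathcal{F}_t$ be the $\sigma$-algebra generated by $(x_{1:t}, y_{1:t})$. The splitting lemma gives the supermartingale inequality $\EE[Z_t \mid \mathcal{F}_{t-1}] \leq (1-\rho) Z_{t-1}$, with initial value $Z_0 = \avg(\pi)$. Iterating in expectation alone captures the $(1-\rho)^t$ decay, but to obtain a high-probability bound with deviation proportional to $c_1 c_2$ (rather than $c_1^2 c_2^2$) it is essential to work with $V_t = \sqrt{Z_t}$ and run Azuma-Hoeffding on its multiplicative increments.

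Two facts drive the argument. First, unpacking the definition gives
\[ Z_t = \EE_{\theta, \theta' \sim \pi}\!\left[ d(\theta, \theta') \prod_{i=1}^t e^{2H(x_i)} P_\theta(y_i; x_i) P_{\theta'}(y_i; x_i) \right], \]
so $Z_t/Z_{t-1}$ equals the expectation of $e^{2H(x_t)} P_\theta(y_t; x_t) P_{\theta'}(y_t; x_t)$ under the joint distribution on $(\theta,\theta')$ obtained by tilting $\pi \otimes \pi$ by $d(\theta,\theta') \prod_{i<t} P_\theta P_{\theta'}$. By \pref{assum:model-bound} the integrand is at most $c_1^2 c_2^2$, so $V_t/V_{t-1} \leq c_1 c_2$ almost surely. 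Second, concavity of $\sqrt{\cdot}$ combined with the splitting inequality gives $\EE[V_t/V_{t-1} \mid \mathcal{F}_{t-1}] \leq \sqrt{1-\rho} \leq 1 - \rho/2$.

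Now set $D_i := V_i/V_{i-1} - \EE[V_i/V_{i-1}\mid \mathcal{F}_{i-1}]$, a martingale difference sequence whose conditional range has length at most $c_1 c_2$. The range form of Azuma-Hoeffding gives, with probability at least $1-\delta$,
\[ \sum_{i=1}^t D_i \;\leq\; c_1 c_2 \sqrt{\tfrac{1}{2} t \log(1/\delta)}. \]
Combining this with $\log x \leq x - 1$ and the drift bound on $\EE[V_i/V_{i-1}\mid\mathcal{F}_{i-1}]$ yields
\[ \log(V_t/V_0) \;\leq\; \sum_{i=1}^t (V_i/V_{i-1} - 1) \;\leq\; -t\rho/2 + c_1 c_2 \sqrt{\tfrac{1}{2} t \log(1/\delta)}. \]
Squaring, exponentiating, and using $Z_0 = \avg(\pi)$ gives the stated inequality.

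The main subtlety is recovering the constant $c_1 c_2$ exactly rather than $c_1^2 c_2^2$. A direct Azuma argument on $\log Z_t$ fails because $\log(Z_t/Z_{t-1})$ is unbounded below, while applying Azuma to $Z_t$ itself would produce $c_1^2 c_2^2$ in the deviation term. Passing through $V_t = \sqrt{Z_t}$ simultaneously bounds the multiplicative increments in $[0, c_1 c_2]$ and preserves a $\sqrt{1-\rho}$-type drift; the factor of $2$ picked up when squaring back combines with the factor $1/\sqrt{2}$ from the range form of Azuma-Hoeffding to produce exactly $c_1 c_2 \sqrt{2 t \log(1/\delta)}$.
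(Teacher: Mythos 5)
Your proof is correct, but it takes a genuinely different route from the paper's. The paper works additively: it sets $\Delta_i = 1 - \frac{W_i^2\avg(\pi_i)}{W_{i-1}^2\avg(\pi_{i-1})}$, notes that the ratio lies in $[0, c_1^2c_2^2]$ so that $S_t = \sum_i(\Delta_i - \rho)$ is a submartingale with bounded increments, applies Azuma--Hoeffding to $S_t$, and converts back multiplicatively via $1-x \leq e^{-x}$. As written, that argument only supports a deviation term proportional to $c_1^2c_2^2$ (the bounded-difference constant it actually establishes), yet the paper states $c_1c_2\sqrt{2t\log(1/\delta)}$ in the conclusion --- a mismatch unless $c_1c_2 \leq 1$. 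Your square-root device repairs exactly this: passing to $V_t = \sqrt{W_t^2\avg(\pi_t)}$ compresses the multiplicative increments into $[0, c_1c_2]$, Jensen's inequality preserves the drift as $\EE[V_i/V_{i-1}\mid \Fcal_{i-1}] \leq \sqrt{1-\rho} \leq 1-\rho/2$, and the factor of $2$ from squaring back combines with the $1/\sqrt{2}$ from the range form of Azuma to give precisely the claimed constant. Both proofs silently skip the degenerate event $W_{i-1}^2\avg(\pi_{i-1}) = 0$, where the ratio is undefined; this is harmless since the process is then absorbed at zero and the bound holds trivially, but it is worth a sentence. In short: same high-level strategy (splitting lemma plus Azuma on the per-step ratios), but your choice of the transformed process is what actually delivers the stated $c_1c_2$ dependence.
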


The second step, showing that $W_t$ does not decrease too quickly, is formalized by the following lemma.
\begin{lemma}
\label{lem:lower-bound-W_t}
Suppose \pref{assum:lldim-bound} and \pref{assum:entropy-subgamma-bound}. If $\theta^\star \sim \pi$ and $y_i \sim P_{\theta^\star}(\cdot; x_i)$ for $i=1,\ldots, t$, then with probability at least $1-\delta$
\[ W_t \geq \exp\left(-2 - d \log(c t) - \sqrt{ 2 t v \log \frac{2}{\delta}} -  (c'+1) \log \frac{2}{\delta} \right).\]
\end{lemma}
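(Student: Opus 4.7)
The plan is to decompose $\log W_t$ into two pieces and bound each separately. Write
\[
\log W_t \;=\; \underbrace{\sum_{i=1}^t \bigl(\log P_{\theta^\star}(y_i;x_i) + H(x_i)\bigr)}_{=:\,A_t} \;+\; \underbrace{\log \EE_{\theta\sim\pi}\!\left[\prod_{i=1}^t \frac{P_\theta(y_i;x_i)}{P_{\theta^\star}(y_i;x_i)}\right]}_{=:\,\log V_t}.
\]
The first term $A_t$ is a sum of mean-zero random variables (using \pref{assum:homog-entropy} to replace $H_{\theta^\star}(x_i)$ by $H(x_i)$). By \pref{assum:entropy-subgamma-bound}, each summand $-g_i := \log(1/P_{\theta^\star}(y_i;x_i)) - H(x_i)$ is sub-Gamma with variance factor $v$ and scale $c'$; independence across $i$ (conditional on $\theta^\star$ and the $x_i$'s) makes $-A_t$ sub-Gamma with variance $tv$ and scale $c'$. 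The standard sub-Gamma upper-tail inequality then gives, with probability at least $1-\delta/2$,
\[
A_t \;\geq\; -\sqrt{2tv\log(2/\delta)} \;-\; c'\log(2/\delta).
\]

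For the second piece, I would use the log-likelihood cover promised by \pref{assum:lldim-bound}. Taking resolution $\epsilon=1$, for every data sequence $\omega_t$ there is a cover of $\Theta|_{\omega_t}$ in $d_{ll}$ of size $N \leq \exp(d\log(ct))$. Pulling this cover back to $\Theta$ induces a partition $\{S_j^{\omega_t}\}_{j\leq N}$ in which every pair $\theta,\theta'$ satisfies $|\sum_i \log(P_\theta(y_i;x_i)/P_{\theta'}(y_i;x_i))| \leq 2$. Restricting the expectation defining $V_t$ to the cell $S^\star$ containing $\theta^\star$ gives
\[
V_t \;\geq\; e^{-2}\,\pi(S^\star).
\]
It remains to show that $\pi(S^\star)$ is not too small. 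Since the cover is data-dependent but of size at most $N$, I would apply Markov's inequality to $1/\pi(S^\star)$ after bounding its expectation by (a constant times) $N$: conditional on $\omega_t$, $\theta^\star$ is distributed as the posterior $\pi_t$, and the $d_{ll}$-cover property makes the posterior-to-prior mass ratio in each cell controlled by a factor $e^{2}$ via the bounded likelihood ratios inside a cell. This yields $\pi(S^\star)\geq \delta/(2N)$ with probability at least $1-\delta/2$, so $\log\pi(S^\star)\geq -d\log(ct) - \log(2/\delta)$.

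Combining the two events by a union bound and adding the bounds yields
\[
\log W_t \;\geq\; -2 \;-\; d\log(ct)\;-\;(c'+1)\log(2/\delta)\;-\;\sqrt{2tv\log(2/\delta)},
\]
matching the claim. The main obstacle is the prior-mass step: naive Markov wants a fixed partition, whereas the $d_{ll}$-cover is $\omega_t$-dependent and $\omega_t$ itself depends on $\theta^\star$. The argument must therefore change the order of integration (condition on $\omega_t$, use that $\theta^\star\mid\omega_t$ follows the posterior, and exploit that the posterior density ratio $\pi_t/\pi$ within a single $d_{ll}$-cell is controlled by a bounded constant) before Markov can be applied cleanly. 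The other steps are essentially bookkeeping once the cover is in place.
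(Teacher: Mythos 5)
Your decomposition is the same one the paper uses: the sub-Gamma control of $\sum_i\bigl(\log P_{\theta^\star}(y_i;x_i)+H(x_i)\bigr)$ is exactly \pref{lem:sub-Gamma-concentration}, and the bound $V_t\geq e^{-2}\pi(S^\star)$ obtained by restricting the prior expectation to the unit-$d_{ll}$-cover cell containing $\theta^\star$ is exactly \pref{lem:voronoi-partition}; the constants all match. The one place you genuinely deviate is the justification that $\pi(S^\star)\geq\delta/(2N)$ with probability $1-\delta/2$, and that step as you describe it does not work. You propose to bound $\EE[1/\pi(S^\star)]$ by conditioning on $\omega_t$, using $\theta^\star\mid\omega_t\sim\pi_t$, and claiming that the posterior-to-prior mass ratio of each cell is controlled by $e^2$. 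That claim is false: the cover only bounds \emph{within-cell} likelihood ratios by $e^{\pm 2}$, so for any $\theta_j\in S_j$ one gets $\pi_t(S_j)/\pi(S_j)\leq e^{2}\prod_i P_{\theta_j}(y_i;x_i)/Z_t$, and the factor $\prod_i P_{\theta_j}(y_i;x_i)/Z_t$ is unbounded --- the entire point of the analysis is that the posterior concentrates, so a single cell's posterior mass can vastly exceed its prior mass. Under your conditioning, $\EE[1/\pi(S^\star)\mid\omega_t]=\sum_j\pi_t(S_j)/\pi(S_j)$ need not be $O(N)$, and Markov does not deliver the claimed bound.

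The paper's argument for this step is simpler and never touches the posterior: since $\theta^\star\sim\pi$, the probability that $\theta^\star$ lands in the union of the at most $N$ cells with prior mass below $\delta/(2N)$ is at most $N\cdot\delta/(2N)=\delta/2$ by a union bound. (Equivalently, your Markov route goes through if the expectation is taken under the \emph{prior} over a fixed partition, where $\EE_{\theta^\star\sim\pi}[1/\pi(S^\star)]=N$ exactly --- no $e^2$ factor and no posterior needed.) You are right that the data-dependence of the cover is the delicate point: the partition is built from $\omega_t$, which is correlated with $\theta^\star$, so treating it as fixed requires justification; the paper's own proof silently elides this by computing $\pr(\theta^\star\in\Theta_i)=\pi(\Theta_i)$ for $\omega_t$-dependent cells. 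So you have correctly located the weak joint of the argument, but your proposed repair replaces it with a step that fails outright rather than fixing it.
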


\subsection{Lower bounds}

We now turn to showing that in some cases, any optimal active learning strategy must have some dependence on the splitting index of the class. Our first result along these lines is in the deterministic setting.

\begin{theorem}
\label{thm:deterministic-lower-bound}
Let ${\Theta}$ denote a class of deterministic models that is not $(\rho, \epsilon, \tau)$-splittable for some $\rho, \epsilon \in (0,1/4)$ and $\tau \in (0,1/2)$. Let $\pi$ be any prior distribution satisfying $\avg(\pi) \geq 4 \epsilon$ which is not $(\rho, \tau)$-splittable. Then any active learning strategy that, with probability at least $5/6$ (over the random samples from $\Dcal$ and the observed responses), finds a posterior distribution satisfying $\avg(\pi_t) \leq \epsilon$ must either observe at least $\frac{1}{2\tau}$ unlabeled data points or make at least $\frac{1}{2\rho}$ queries.
\end{theorem}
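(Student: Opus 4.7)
My plan is to argue by contradiction: suppose an algorithm achieves $\avg(\pi_T) \leq \epsilon$ with probability at least $5/6$ using $m < 1/(2\tau)$ unlabeled samples and $T < 1/(2\rho)$ queries; the goal is to show the failure probability must exceed $1/6$.

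First, since $\pi$ fails to be $(\rho, \tau)$-splittable, a random $x \sim \Dcal$ $\rho$-splits $\pi$ with probability strictly less than $\tau$, so a union bound over the $m$ pool samples gives $\Pr[A] > 1 - m\tau > 1/2$, where $A$ is the event that no pool sample $\rho$-splits $\pi$. I would then convert non-splittability into a pair-level ``cut'' bound: in the deterministic setting, $s_0(q) = \sum_y \pi(y;q)^3 \avg(\pi_{|q,y}) > (1 - \rho)\avg(\pi)$ combined with $\pi(y;q) \leq 1$ yields $\Psi_{\text{agree}}(q) := \EE_{\theta,\theta' \sim \pi}[d(\theta,\theta') \ind[f_\theta(q) = f_{\theta'}(q)]] > (1-\rho)\avg(\pi)$, so the edge mass cut by any pool query $q$ is $\Psi_{\text{cut}}(q) := \avg(\pi) - \Psi_{\text{agree}}(q) < \rho \avg(\pi)$.

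I would then track the potential $\bar\Phi_t := \EE_{\theta,\theta' \sim \pi}[d(\theta,\theta') \ind[f_\theta \equiv f_{\theta'} \text{ on } x_{1:t}]]$, the $\pi$-weighted distance among model pairs still indistinguishable after $t$ queries. Under event $A$, every query is a pool query, so a union bound gives $\bar\Phi_T \geq \avg(\pi) - \sum_{i=1}^T \Psi_{\text{cut}}(x_i) > (1 - T\rho) \avg(\pi) > 2\epsilon$ since $T\rho < 1/2$ and $\avg(\pi) \geq 4\epsilon$. Decomposing over response sequences, $\bar\Phi_T$ can be written as a $\pi$-expectation of $W_T \cdot \avg(\pi_T)$, where $W_T \leq 1$ is the posterior normalizer, so $\EE_{\theta^\star}[\avg(\pi_T) \mid A] \geq \bar\Phi_T > 2\epsilon$. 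A reverse-Markov inequality using $\avg(\pi_T) \in [0, 1]$ then yields $\Pr[\avg(\pi_T) > \epsilon \mid A] > \epsilon/(1-\epsilon)$; combined with $\Pr[A] > 1/2$, the overall failure probability exceeds $(1/2)\,\epsilon/(1-\epsilon)$, and the theorem's parameter constraints ($\epsilon, \rho \in (0,1/4)$, $\tau \in (0,1/2)$) are chosen so that careful constant tracking pushes this past $1/6$, contradicting the assumed $5/6$ success rate.

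The main obstacle is lifting the ``prior-level'' potential $\bar\Phi_T$ (counting $\pi$-weighted model pairs indistinguishable on the queries) to a statement about the random posterior diameter $\avg(\pi_T)$; the key trick is to decompose $\bar\Phi_T$ over response sequences and exploit $W_T \leq 1$. A secondary challenge is the delicate constant tracking to produce the final $5/6$ threshold, and an alternative route --- directly tracking $\Phi_t = W_t^2 \avg(\pi_t)$ via the converse of \pref{lem:splitting} --- would require propagating non-splittability from $\pi$ through all intermediate posteriors visited by the algorithm, which is not straightforward.
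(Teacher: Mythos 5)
Your overall architecture matches the paper's: condition on the event $A$ that the unlabeled pool contains no point that $\rho$-splits $\pi$ (probability $>1/2$ when $m<1/(2\tau)$), show that the splitting deficits of the queried points accumulate additively so that the expected posterior diameter after $T<1/(2\rho)$ queries stays above $\frac{1}{2}\avg(\pi)\geq 2\epsilon$, then finish with a reverse Markov inequality. Your union bound over the cut mass $\Psi_{\text{cut}}$ is a clean, direct substitute for the paper's induction on \pref{lem:combine-split} (the $h=0$ case), and your identity $\bar\Phi_T=\EE_{\theta^\star\sim\pi}\left[W_T\,\avg(\pi_T)\right]$ together with $W_T\leq 1$ plays exactly the role that \pref{lem:splitting} plus $Z_n\leq 1$ plays in the paper. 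Up to that point the argument is sound.

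The gap is in the final step. Applying reverse Markov to $\avg(\pi_T)\in[0,1]$ with $\EE[\avg(\pi_T)\mid A]>2\epsilon$ gives $\pr\left(\avg(\pi_T)>\epsilon\mid A\right)>\frac{2\epsilon-\epsilon}{1-\epsilon}=\frac{\epsilon}{1-\epsilon}$, so your overall failure probability is only $>\frac{1}{2}\cdot\frac{\epsilon}{1-\epsilon}$. But $\frac{1}{2}\cdot\frac{\epsilon}{1-\epsilon}<\frac{1}{6}$ precisely when $\epsilon<1/4$, i.e., on the entire parameter range of the theorem, and the bound vanishes as $\epsilon\to 0$. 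No amount of constant tracking rescues this; the problem is that the absolute bound $\avg(\pi_T)\leq 1$ is far too loose when $\avg(\pi)$ is itself close to $4\epsilon$, since a mean of $2\epsilon$ is then consistent with $\avg(\pi_T)$ being near zero except on a set of probability $O(\epsilon)$. The paper instead applies reverse Markov to the normalized variable $\avg(\pi_n)/\avg(\pi)$ with the a.s.\ upper bound $\avg(\pi_n)\leq\avg(\pi)$, which converts $\EE[U]\geq 1/2$ and threshold $\alpha=\epsilon/\avg(\pi)\leq 1/4$ into the $\epsilon$-free constant $\frac{1/2-1/4}{1-1/4}=\frac{1}{3}$, and hence failure probability $\geq\frac{1}{2}\cdot\frac{1}{3}=\frac{1}{6}$. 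To close your proof you would need to supply this missing ingredient: a bound showing the posterior diameter cannot exceed the prior diameter (or a fixed multiple of it) on the relevant event. Note this is not automatic --- conditioning can increase average pairwise distance in general --- so it genuinely requires an argument rather than a remark.
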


We can relax the constraint that our models are deterministic to the case where they have bounded entropy at the expense of a slightly weaker lower bound.
\begin{theorem}
\label{thm:lowent-lower-bound}
Let ${\Theta}$ denote a class of models that is not $(\rho, \epsilon, \tau)$-splittable for some $\rho, \epsilon \in (0,1/4)$ and $\tau \in (0,1/2)$ such that $H(x) = h < \rho^{3/2}/6$ and $P_\theta(y;x) \leq 1$ for all $x \in \Xcal$, $\theta \in \Theta$ and $y \in \Ycal$. Let $\pi$ be any prior distribution satisfying $\avg(\pi) \geq 4 \epsilon$ which is not $(\rho, \tau)$-splittable. Then any active learning strategy that, with probability at least $5/6$ (over the random samples from $\Dcal$ and the observed responses), finds a posterior distribution satisfying $\avg(\pi_t) \leq \epsilon$ must either observe at least $\frac{1}{2\tau}$ unlabeled data points or make at least $\frac{1}{2\sqrt{\rho}}$ queries.
\end{theorem}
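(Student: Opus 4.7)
The plan is to mirror the two-step structure of \pref{thm:deterministic-lower-bound}, with the low-entropy assumption $h < \rho^{3/2}/6$ now playing the role that zero entropy played trivially in the deterministic case. I would show that if the algorithm uses fewer than $n = 1/(2\tau)$ unlabeled samples \emph{and} fewer than $T = 1/(2\sqrt{\rho})$ queries, then its success probability must fall below $5/6$, yielding the claimed dichotomy. The unlabeled-sample piece is unchanged from the deterministic argument: the non-$(\rho,\tau)$-splittability of $\pi$ gives $\pr_{x \sim \Dcal}(x \text{ } \rho\text{-splits } \pi) < \tau$, and a union bound puts the event $E_1 = \{\text{no sampled } x_i \text{ } \rho\text{-splits } \pi\}$ at probability greater than $1 - n\tau > 1/2$, so on $E_1$ every query the algorithm issues is a non-splitter of $\pi$.

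For the query portion I would condition on $E_1$ and track the potential $X_t := W_t^2 \avg(\pi_t)$ from \pref{lem:splitting}. A calculation paralleling that lemma's proof yields the identity $\EE_{y_{t+1} \sim \pi_t(\cdot;x_{t+1})}[X_{t+1} \mid \mathcal{F}_t] = W_t^2 \cdot s_t(x_{t+1})$, so the failure of $x_{t+1}$ to $\rho$-split $\pi_t$ upgrades this to $\EE[X_{t+1} \mid \mathcal{F}_t] > (1-\rho)X_t$. Iterating and using $X_0 = \avg(\pi) \geq 4\epsilon$ gives $\EE[X_T] > (1-\rho)^T \cdot 4\epsilon \geq (1-T\rho)\cdot 4\epsilon \geq 2\epsilon$ under $T < 1/(2\sqrt{\rho})$. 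Meanwhile, $P_\theta(y;x) \leq 1$ forces $Z_t \leq 1$, giving the almost-sure bound $X_t \leq W_t^2 \leq e^{2th}$. Plugging in $T < 1/(2\sqrt{\rho})$ and $h < \rho^{3/2}/6$ yields $2Th < \rho/6$, so $W_T^2$ lies within a constant factor of $1$. Hence $\avg(\pi_T) \geq e^{-2Th} X_T$ almost surely, and a reverse-Markov argument comparing $\EE[X_T] \geq 2\epsilon$ against the a.s.\ ceiling $e^{2Th}$ delivers a tail bound on $\{\avg(\pi_T) > \epsilon\}$; paired with $\Pr(E_1) > 1/2$ this forces an overall failure probability exceeding $1/6$, contradicting the $5/6$ success guarantee.

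The main obstacle is twofold. First, the recursion $\EE[X_{t+1} \mid \mathcal{F}_t] > (1-\rho)X_t$ requires each intermediate $\pi_t$ along the algorithm's trajectory (not just the prior $\pi$) to fail $(\rho,\tau)$-splittability; propagating this from $\pi$ to every posterior visited on $E_1$ is where the entropy hypothesis must do real work, since $h < \rho^{3/2}/6$ controls how much the Bayesian update can reconcentrate the posterior step-by-step. Second, the reverse-Markov step at the end must produce a probability bound strong enough to clear the $1/6$ threshold, and this is precisely why the exponent $3/2$ appears in the hypothesis: the joint constraints $2Th \lesssim \rho$ (so that $e^{2Th}$ stays a constant close to $1$) and $T\rho \lesssim 1$ (so that $(1-\rho)^T$ stays bounded below) force $h \lesssim \rho^{3/2}$ exactly at the threshold $T \asymp 1/\sqrt{\rho}$ targeted by the theorem, explaining the $\sqrt{\rho}$ weakening from the deterministic $\rho$.
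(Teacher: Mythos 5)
There is a genuine gap at the heart of your query-counting step. Your recursion $\EE[X_{t+1}\mid \Fcal_t] > (1-\rho)X_t$ requires that the query $x_{t+1}$ fail to $\rho$-split the \emph{intermediate posterior} $\pi_t$, but the hypotheses only give non-splittability of the \emph{prior} $\pi$: the theorem assumes $\pi$ is not $(\rho,\tau)$-splittable, and the event $E_1$ you condition on only says that no sampled point $\rho$-splits $\pi$. The non-splittability assumption on $\Theta$ is existential (some posterior with large average diameter fails to be splittable), so nothing prevents $\pi_1, \pi_2, \ldots$ from being highly splittable, and the recursion can break already at $t=1$. You flag this yourself as ``the main obstacle'' and suggest the entropy hypothesis ``must do real work'' in propagating non-splittability to later posteriors, but you do not supply that argument --- and no such propagation is what the paper uses. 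The paper's proof sidesteps intermediate posteriors entirely: it treats the $n$ queries as one \emph{combined} query $x_{1:n}$ against the prior and uses the approximate sub-additivity of splitting values (\pref{lem:combine-split}, iterated by a doubling induction in \pref{lem:all-split-lowent}) to show that if each $x_i$ has splitting value below $\rho$ with respect to $\pi$, then the combined query has splitting value below $n^2\rho$ with respect to $\pi$. A single application of \pref{lem:splitting} to this combined query then yields $\EE[W_n^2\avg(\pi_n)] \geq (1-n^2\rho)\avg(\pi)$ in one shot, with no step-by-step recursion.

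This combined-query mechanism is also where the exponent $3/2$ and the $\sqrt{\rho}$ weakening actually come from, and your explanation of them is not the operative one. The condition $h < \rho^{3/2}/6$ guarantees $h \le \rho/(6n)$ for all $n \le 1/\sqrt{\rho}$, which is exactly the precondition for the factor-of-two version of \pref{lem:combine-split} at every level of the induction; that factor of two accumulating over $\log_2 n$ levels is what degrades the combined splitting value from $n\rho$ (the deterministic case, \pref{thm:deterministic-lower-bound}) to $n^2\rho$, and hence the query lower bound from $1/(2\rho)$ to $1/(2\sqrt{\rho})$. Your endgame numerics (reverse Markov, the almost-sure bound $W_n^2 \le e^{2nh} \le 3/2$, and combining with $\pr(E_1) \ge 1/2$ to exceed a $1/6$ failure probability) do match the paper's, but they rest on an expectation lower bound for $\avg(\pi_n)$ that your argument does not establish.
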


The key ingredient to proving these lower bounds is demonstrating that splitting values are (approximately) sub-additive.

\begin{lemma}
\label{lem:combine-split}
Let $\rho_1, \rho_2$ satisfy $\rho_1 + \rho_2 < 1$. Suppose $x_1$ $\rho_1$-splits $\pi$, $x_2$ $\rho_2$-splits $\pi$, and $H(x_1) = H(x_2) = h$. Then the following holds:
\begin{itemize}
	\item If $h = 0$, then the combined query $(x_1, x_2)$ has splitting value at most $\rho_1 + \rho_2$. 
	\item If $0 \leq h < \frac{\rho_1 + \rho_2}{6}$, then the combined query $(x_1, x_2)$ has splitting value at most $2(\rho_1 + \rho_2)$. 
\end{itemize} 
\end{lemma}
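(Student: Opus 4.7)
I would treat $(x_1, x_2)$ as a single augmented query. By the factorization $P_\theta(y_1, y_2; x_1, x_2) = P_\theta(y_1; x_1) P_\theta(y_2; x_2)$ and \pref{assum:homog-entropy} (so that the combined entropy is $2h$), the score splits as
\[
s_n((x_1, x_2)) = e^{4h}\,\EE_{\theta^\star, \theta, \theta' \sim \pi}\bigl[d(\theta, \theta')\, \tilde{A}_1\, \tilde{A}_2\bigr],
\]
where $\tilde{A}_k(\theta,\theta',\theta^\star) := \EE_{y \sim P_{\theta^\star}(x_k)}[P_\theta(y; x_k) P_{\theta'}(y; x_k)]$ and analogously $s_n(x_k) = e^{2h}\EE[d\,\tilde{A}_k]$. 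Taking the splitting value of $x_k$ to be exactly $\rho_k$ (so that the hypothesis is saturated, $s_n(x_k) = (1-\rho_k)\avg(\pi)$) then yields the single-query identity $\EE[d\,\tilde{A}_k] = e^{-2h}(1-\rho_k)\,\avg(\pi)$.

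The combinatorial heart of the argument is the identity
\[
\tilde{A}_1 \tilde{A}_2 = 1 - (1 - \tilde{A}_1) - (1 - \tilde{A}_2) + (1 - \tilde{A}_1)(1 - \tilde{A}_2).
\]
The (implicit) boundedness $P_\theta(\cdot; x) \leq 1$ yields $\tilde{A}_k \in [0,1]$, so the cross term is non-negative. Multiplying through by $d(\theta,\theta')$, taking expectations, dropping the non-negative term, and substituting the single-query identity produces the master inequality
\[
s_n((x_1, x_2)) \geq \avg(\pi)\,\bigl(e^{2h}(2 - r) - e^{4h}\bigr), \qquad r := \rho_1 + \rho_2.
\]
Setting $h = 0$ gives $s_n((x_1, x_2)) \geq (1-r)\,\avg(\pi)$, which is exactly the first case.

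For the case $0 \leq h < r/6$, it remains to verify $g(h) := e^{2h}(2-r) - e^{4h} + 2r - 1 \geq 0$. A short computation shows $g(0) = r$, $g'(0) = -2r$, and $g''(h) = 4 e^{2h}[(2-r) - 4 e^{2h}] < 0$ on $[0,\infty)$, so $g$ is strictly decreasing and concave, and it suffices to check $g(r/6) \geq 0$. Substituting $u = e^{2h}$ factors $g$ as $-(u - u_+)(u - u_-)$ with roots $u_\pm = [(2-r) \pm \sqrt{r^2 + 4r}]/2$; since $e^{2h} \geq 1 > u_-$ for $r > 0$, the task reduces to the elementary bound $e^{r/3} < u_+$. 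The main obstacle is this final check: using $\sqrt{r^2+4r} \geq 2\sqrt{r}$ together with the Taylor bound $e^{r/3} \leq 1 + r/3 + r^2/9$ (valid on $[0,1]$), the inequality collapses to $5\sqrt{r}/6 + r^{3/2}/9 \leq 1$, which is monotonically increasing in $r$ and attains $17/18 < 1$ at $r = 1$. This finishes the second case.
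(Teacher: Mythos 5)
Your proof is correct and follows essentially the same route as the paper's: factorize the combined score across the two queries, lower-bound $\EE[UV]$ by $\EE[U]+\EE[V]-1$ via nonnegativity of $(1-U)(1-V)$ under the distance-tilted measure, and reduce both bullets to the scalar inequality $(2-r)e^{2h}-e^{4h}+2r-1\ge 0$ on $[0,r/6]$. The only difference is cosmetic, in that final elementary check: the paper uses $r/6\le\tfrac12\log(1+r/2)$ and evaluates at $e^{2h}=1+r/2$, whereas you factor the quadratic in $e^{2h}$ and compare $e^{r/3}$ against its larger root via a Taylor bound.
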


\section{Empirical results}
\label{sec:empirical}

In this section, we present our results on a suite of synthetic regression simulations as well as the results of a real-data study on a cancer drug discovery problem.

\subsection{Synthetic regression simulations}
\label{subsec:empirical:synth}

\paragraph{Probabilistic models.}  We evaluated \pdbal on several probabilistic regression models.\footnote{Code for our simulations can be found at: \texttt{https://github.com/tansey-lab/pdbal}.} In each of our regression experiments, the model was parameterized by a coefficient vector $\theta \in \RR^d$. Presented in this section are our results for linear regression with homoscedastic Gaussian noise, logistic regression, Poisson regression with the exponential link function, and Beta regression using the mean parameterization~\citep{ferrari2004beta}:
\[ P_\theta(y ; x) = \text{Beta} \left( y \mid  \phi \mu, \phi(1- \mu) \right), \]
where $\mu = \frac{1}{1 + e^{-\langle x, \theta \rangle}}$, $x \in \RR^d$ is the feature vector, and $\phi>0$ is a fixed constant.

For all experiments, we used a normal prior distribution on $\theta$ with identity covariance. For the linear regression setting, the posterior can be computed in closed form. We implemented the other models in PyStan and sampled from the posteriors using the No-U-Turn Sampler (NUTS)~\citep{hoffman2014nuts, pystan, stan}.

\paragraph{Objectives and distances.} We considered three objectives with corresponding distance measures over parameters. 
\begin{itemize}
	\item[(i)] First coordinate sign identification with corresponding distance
\[ d_{\text{first}}( \theta, \theta' ) = \ind[ \sign(\theta_1) \neq \sign(\theta'_1)]. \]
	\item[(ii)] Maximum magnitude coordinate identification:
\[ d_{\text{max}}( \theta, \theta' ) = \ind[ \argmax_{i} |\theta_i| \neq \argmax_{i} |\theta'_i| ]. \]
	\item[(iii)] Coordinate magnitude ranking:
\[ d_{\text{kendall}}( \theta, \theta' ) =  \frac{1}{2} \left( 1 - \tau(|\theta|, |\theta'|)  \right) \]
where $\tau(|\theta|, |\theta'|)$ is Kendall's $\tau$ correlation between of the pairs $(|\theta_1|, |\theta_1'|), \ldots, (|\theta_d|, |\theta_d'|)$.
\end{itemize}
In \pref{app:more-simul}, we present results on more settings.%

\paragraph{Baseline comparisons.} We compared against three baselines: \random, \varsamp, and \sceig. \random chooses its queries uniformly at random from the set of available queries. The learning curve of \random mimics what we would expect to see in a standard passive learning setting. \varsamp is the variance sampling strategy -- it chooses queries based on maximizing the posterior predictive variance:
\[ \var_{y \sim \pi_n(x)}(y) = \EE_{y \sim \pi_n(x)}[y^2] - \EE_{y \sim \pi_n(x)}[y]^2. \]
The law of total variance allows us to rewrite this objective as
\[ \var_{y \sim \pi_n(x)}(y) = \EE_{\theta \sim \pi_n(x)} \left[ \var_{y \sim P_\theta(x)}(y) \right] + \var_{\theta \sim \pi_n(x)} \left( \EE_{y \sim P_\theta(x)}[y] \right) . \]
For all the likelihoods we consider in this section, both $\var_{y \sim P_\theta(x)}(y)$ and $ \EE_{y \sim P_\theta(x)}[y]$ can be computed in closed form.

\sceig is the expected information gain strategy. We use the BALD formulation of \sceig~\citep{houlsby2011bayesian} which chooses queries based on maximizing the mutual information between the outcome and the latent parameter $\theta$:
\begin{align*}
 \Ical(y; \theta | x, \pi_n) &=  H_{\pi_n}(x) - \EE_{\theta \sim \pi_n} \left[ H_\theta(x) ]  \right],
\end{align*}
where $H_{\pi_n}(x)$ is the entropy of the posterior predictive $\pi_n(x)$. In some cases, such as linear regression, this can be computed in closed form, as it is proportional to the posterior predictive variance. For our other settings, we approximate it via numerical integration. 

In our experiments, the ground truth $\theta^\star$ was drawn uniformly from vectors of length 2. The data points were drawn from a mixture distribution: with probability $1-p$ it is drawn uniformly from vectors of length 1, and with probability $p$ each coordinate is set to 0 with probability $1/d$ and the remaining coordinates are drawn so that the vector is of length 1. For some objectives, this sparse distribution contains rare but informative data points. For all of our simulations, the data dimension was set to $d=10$ and the mixing proportion was set to $p=1/10$. All simulations were performed with 250 random seeds, and 95\% confidence intervals are depicted using shading in our plots.

\begin{figure}
\begin{center}
\includegraphics[width=0.9\textwidth]{./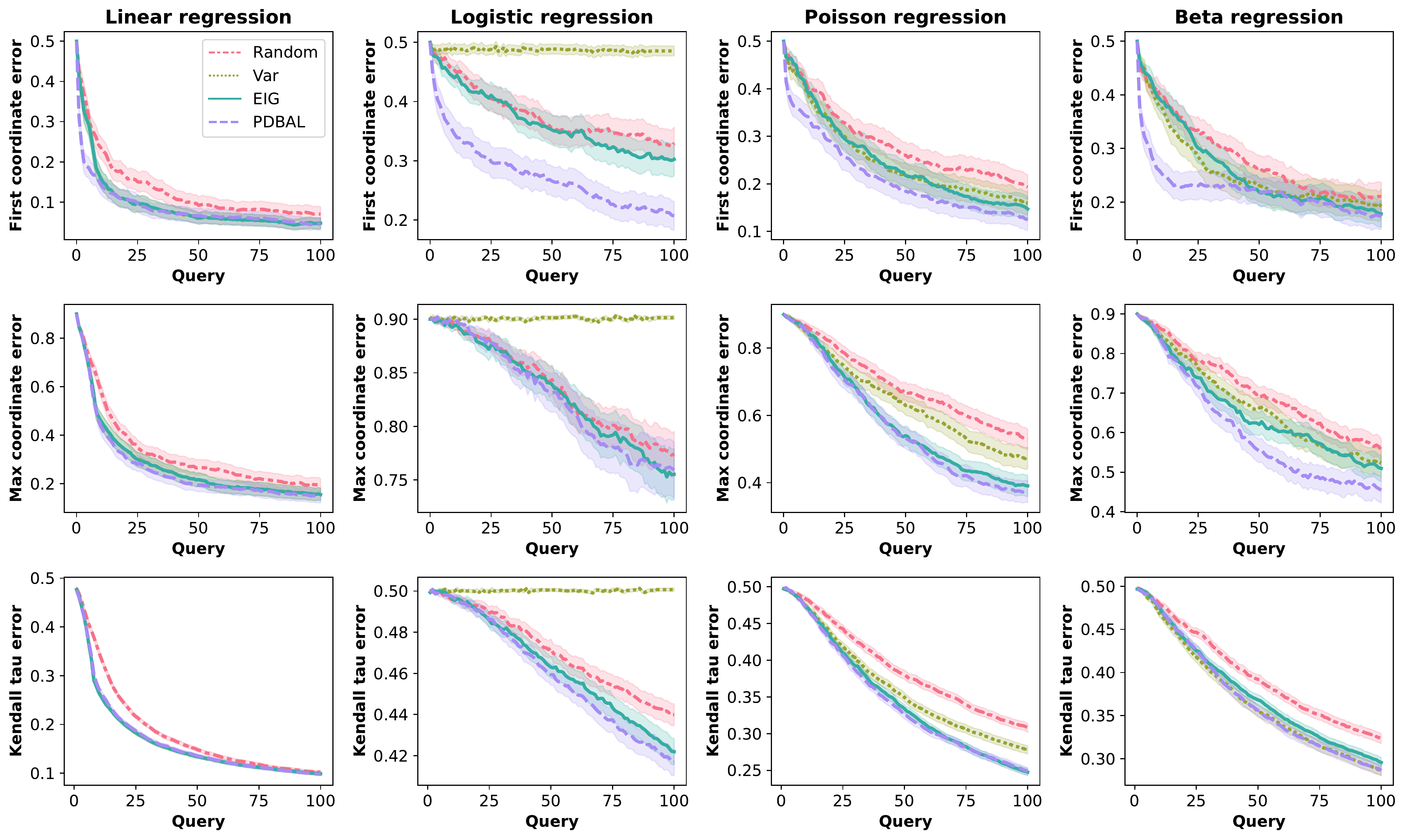}
\end{center}
\caption{Synthetic regression simulations. Columns correspond to the different probabilistic models, and rows correspond to the different objectives.}
\label{fig:simul_main}
\end{figure}

\paragraph{Results.} \pref{fig:simul_main} shows the results of our simulation study. In all settings, we see that \pdbal never does any worse than the baselines, and it does significantly better in some. We see that for the more focused objectives like the first coordinate identification, \pdbal has a much stronger separation from the untargeted baselines. We also see the influence of probabilistic model on performance, with the least gains coming from linear regression and the largest gains coming from logistic and Beta regression. This may be due to the fact that in our bounded linear regression setting, extreme values are unlikely to occur and so \pdbal is unable to make large changes to the posterior, whereas in Beta regression, values close to 0 and 1 are possible and allow for large changes to the posterior.

\subsection{Cancer drug screen experiments}

The Genomics of Drug Sensitivity in Cancer (GDSC) database~\citep{garnett2012systematic, yang2012genomics} is a large public database of cancer cell line experiments across a range of anticancer drug agents. Each drug is tested at a range of seven doses, allowing the full dose-response curve to be estimated. The reported outcome for each experiment is cell viability, defined as the proportion of cells alive after treatment. Large scale screens like GDSC are expensive and time-consuming; reducing the number of experiments required to accurately estimate responses and discover effective drugs could substantially expand the scale and speed of possible experiments.

We study the potential of \pdbal to adaptively screen anti-cancer drugs in a retrospective experiment on GDSC. At each step, the algorithm is allowed to conduct a single trial of a drug tested against a cancer cell line. We consider coarse- and fine-grained settings for drug selection. In the coarse setting, the algorithm selects the drug and cell line then observes the responses at each of the seven doses; in the fine-grain setting, the algorithm additionally specifies a dose. We used a subset of $100$ drugs and $20$ cell lines for a total of $n=2K$ possible coarse-grained experiments and $n=14K$ possible fine-grained experiments. Performance is evaluated as the error over each (cell line, drug, dose) when compared to what the underlying probabilistic model would learn from the full data.

All strategies use the same common Bayesian factor model,
\begin{equation}
\begin{aligned}\label{eq:gdsc-model}
y_{i{j_d}} &\sim \textrm{Normal}(\mu_{i{j_d}}, \sigma^2)
\\ \mu_{i{j_d}} &= a + b_i + c_{j_d} + \boldsymbol{v}_{i}^\top \boldsymbol{w}_{j_d},
\end{aligned}
\end{equation}
where  $i$ indexes the cell lines; $j_d$ indexes drug $j$ at dose $d$; $y_{i{j_d}}$ is the viability projected to the real line with a logistic transformation; $a$, $b_i$ and $c_{j_d}$ are scalar intercepts; and $\boldsymbol{w}_{i}$ and $\boldsymbol{v}_{j_d}$ are $q$-dimensional embeddings governing the interaction between cancer cell lines and drugs. The model is completed with hierarchical shrinking and smoothness-inducing priors. Similar factor models have previously been employed for modeling cancer drug screens~\citep{tansey2022bayesian}. \pref{app:gdsc-details} provides additional details about the model specification, experimental setup, and data pre-preprocessing.

To implement \pdbal, we use the mean-squared error distance of viability in probability space
\begin{align*}
    d_\text{v-mse}(\theta, \theta') =  \frac{1}{M} \sum_{i, j, d} (\text{sigmoid}(\mu_{i{j_d}}) - \text{sigmoid}(\mu_{i{j_d}}'))^2,
\end{align*}
where $\mu$ is the predictive mean in \cref{eq:gdsc-model} and $M$ is the total number of cell lines $\times$ drugs $\times$ doses.

\paragraph{Results.} \pref{fig:results-gdsc} shows the results of our experiments using the same baselines considered in~\Cref{subsec:empirical:synth}. The y-axis represents the target error $d_\text{v-mse}(\bar{\mu}_t, \bar{\mu}_*)$, where $\bar{\mu}_t=\mathbb{E}_{\theta \sim \pi_n}[\mu]$ is the posterior mean of the partial model fitted after observing a fraction $t$ of the data, and $\bar{\mu}_*=\mathbb{E}_{\theta^*\sim \pi_\text{full}}[\mu]$ the true predictive mean obtained from fitting the model with the full data.

In both experiments, \pdbal outperforms the other selection strategies. The fine-grained setting has more degrees of freedom and thus shows a larger performance gain for \pdbal. We did not evaluate EIG in the fine-grained experiment because the numerical integration required to approximate the objective was computationally prohibitive.
Additional details on the experiment results are in~\pref{app:gdsc-details}.

\begin{figure}[htb]
\begin{subfigure}[t]{0.5\columnwidth}
\centering
\includegraphics[width=\textwidth]{./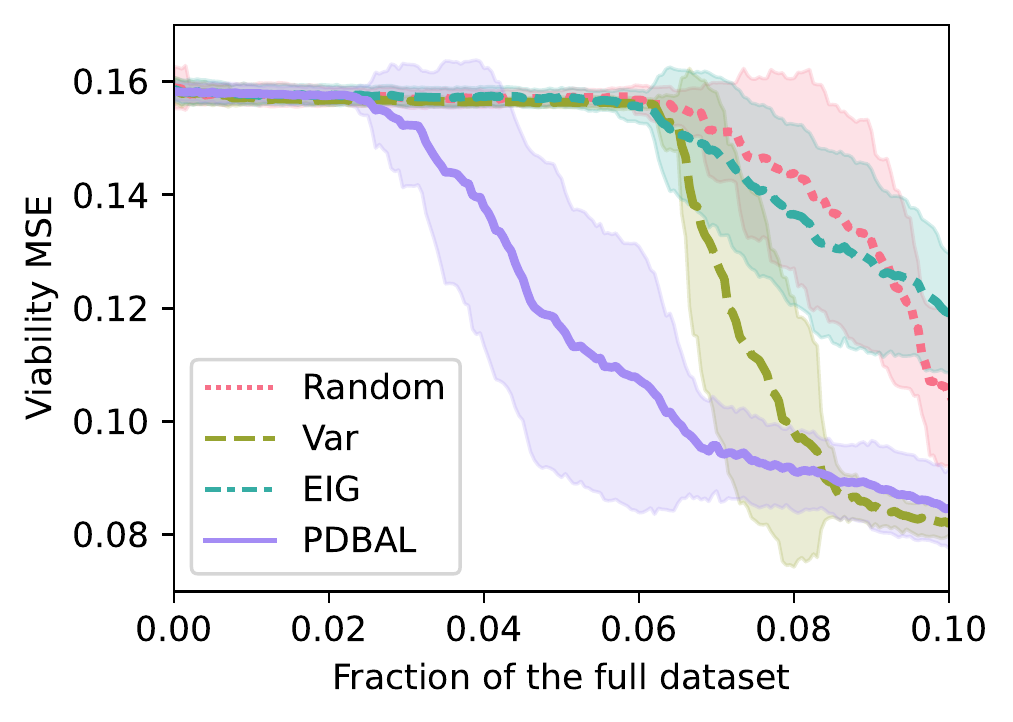}
\end{subfigure}\hfill
\begin{subfigure}[t]{0.5\columnwidth}
\centering
\includegraphics[width=\textwidth]{./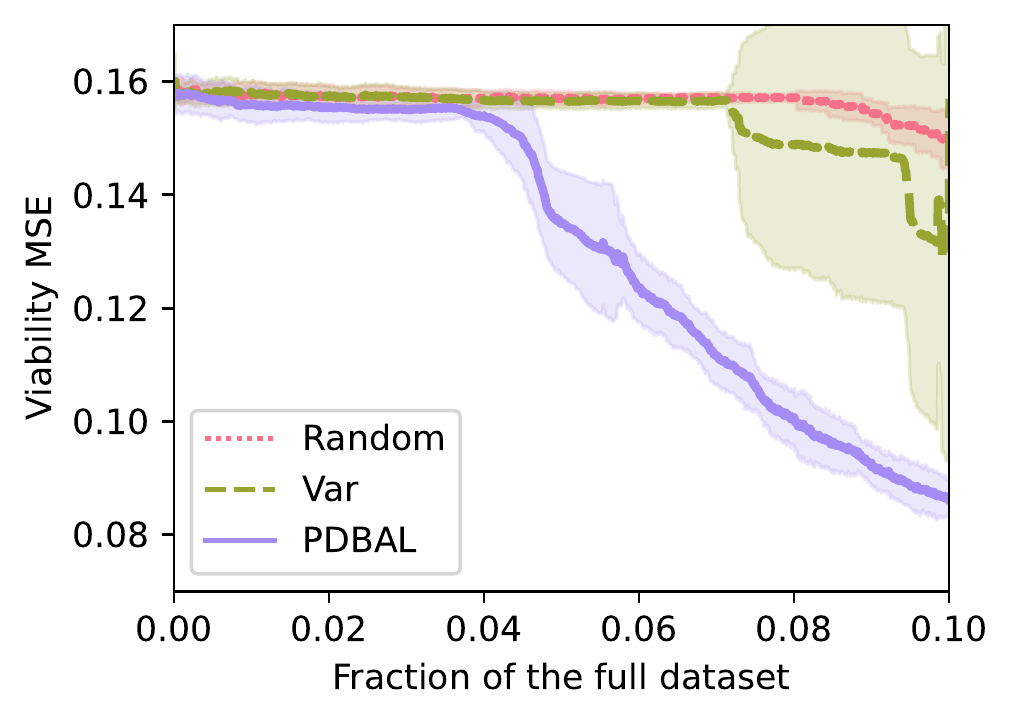}
\end{subfigure}
\caption{Results on GDSC. \emph{Left}: Coarse-grained experiment; each observation contains the full dose-response curve observations. \emph{Right}: Fine-grained experiment; each observation consists of a single dose selected by the active learning strategy.}
\label{fig:results-gdsc}
\end{figure}

To investigate the potential for early drug discovery, we conducted an additional evaluation comparing the ability to identify drugs that have targeted effects on specific cell lines. To measure selectivity, we follow the procedure outlined by \citet{tansey2022dose}, which we briefly summarize here for completeness. A drug is \emph{selective} if it has a dose at which it is safe on the majority of the cell lines but is highly toxic on at least one cell line. Here, safety and toxicity are defined as viability above $0.8$ and below $0.5$, respectively. The model trained on the whole dataset identified $9$ drugs from the pool of $100$. For each active learning method we used the corresponding model's posterior probability that a drug is selective to provide a ranking of drugs after observing 5\% and 10\% of the data.

\begin{table}[h]
\caption{Comparison of AUC for selective drug identification.} \label{sample-table}
\begin{center}
\begin{tabular}{l|cc}
 & {AUC at 5\%} & {AUC at 10\%} \\
\hline \\
\random  & 0.5 & 0.60  \\
\varsamp & 0.52 & 0.62  \\
\sceig   & 0.50 & 0.67  \\
\pdbal   & \textbf{0.60} & \textbf{0.71} 
\end{tabular}
\end{center}
\label{tab:selectivity}
\end{table}

\pref{tab:selectivity} presents the results for the coarse-grained drug selection experiments, showing the area under the curve (AUC) of the receiver operating characteristic (ROC) curve. The results show that \pdbal can more accurately discover selective drugs, notably even after only having selected 5\% of the experiments.

\section{Discussion}
\label{sec:discussion}

We introduced \pdbal, a targeted active learning algorithm compatible with any probabilistic model and any risk-aligned distance. We proved theoretical bounds on the query complexity of \pdbal, showing that in certain cases it is nearly optimal. In our simulation study, we showed the benefits of \pdbal in a range of settings.
On a real data example, \pdbal quickly converged to an accurate model with half the data required by other adaptive methods.

\paragraph{Limitations.} On the theoretical side, there are some concrete ways in which our results can be improved. The lower bounds we prove are limited to settings with small entropy, which limits the class of probabilistic models on which we can prove that we achieve near-optimality. Further, both our upper bounds and lower bounds rely on \pref{assum:homog-entropy}, which requires the conditional entropy to not depend on the parameter. Our simulation studies show \pdbal performs well even when these assumptions do not hold, suggesting that these limitations may be due to our analysis techniques rather than \pdbal itself. Finally, our general setup is within the well-specified Bayesian regime, which limits the applicability of this work to settings where we are comfortable assuming that our prior and likelihood are sufficiently accurate models of the real world.

\paragraph{Societal impact.} Any machine learning method carries risks of misapplications. We envision our work as being utilized by scientists to reduce the number of experiments needed to make scientific discoveries. This method, however, is agnostic to what those discoveries may be, which allows for the possibility of using this (and related) methodology to quickly uncover discoveries with negative societal consequences. We believe the immediate use cases in science outweigh these risks.

\subsubsection*{Acknowledgements}

WT is supported by grants from the Tow Center for Developing Oncology and the Break Through Cancer consortium.

\bibliography{refs}

\clearpage
\appendix
\onecolumn
\section{Motivating example: Linear regression}
\label{app:motive-example}

Consider a well-specified high-dimensional linear regression setting in which the covariates are drawn from the uniform distribution over the $d$-dimensional sphere of radius 1. For a data point $x^{(i)} \in \RR^d$, suppose the corresponding label is given by
\[ y_i \ = \ \langle \beta^\star, x^{(i)} \rangle + \epsilon_i \]
where $\epsilon_i \sim \Ncal(0,1)$ is independent noise and $\beta^\star$ is some ground-truth vector satisfying $\|\beta^\star\|_2 = 1$. Suppose further, that among the covariates there is a subset of $k \ll d$ (e.g., $k = O(\log d)$) covariates of interest, and the remaining coordinates are nuisance variables.

If we try to estimate the entire vector $\beta^\star$, then both active and passive learning will require $\Omega(d)$ queries to find some $\hat{\beta}$ such that $\| \hat{\beta} - \beta^\star \|^2$ is smaller than some constant. However, if we only care about estimating the coordinates of $\beta^\star$ that correspond to the covariates of interest, then this can be done using only $O(k)$ queries, given access to enough unlabeled data.

To see how, let $S \subset \{1, \ldots, n \}$ denote the coordinates of interest. For a vector $x \in \RR^d$, let $x_S \in \RR^k$ denote the $x$ restricted to $S$. Given a pool of unlabeled data, we will restrict our queries to the points $x$ such that $\| x_S \|_2^2$ is sufficiently large. Note that for any cutoff $\alpha \in (0,1)$ and integer $m \geq 1$, there is size $N \geq 1$ such that a random draw of $N$ points will have a subset of size $m$ whose elements satisfy $\| x_S \|_2^2 \geq 1-\alpha$.

Given $n \geq k$ queries that satisfy $X_S$ is full-rank, our estimator will be the least squares estimator defined over the coordinates $S$:
\[ \hat{\beta} \ = \ (X_S^T X_S)^{-1} X_S^T y. \]
Let $\tilde{y}_i = y_i - \langle \beta^\star_{S^c}, x^{(i)}_{S^c} \rangle$ denote the (unobserved) adjusted response values. In vector form, $\tilde{y} = y - X_{S^c} \beta^\star_{S^c}$. Then we can decompose the difference between $\hat{\beta}$ and $\beta^\star$ as
\begin{align*}
 \| \hat{\beta} - \beta^\star \| \ &= \ \| (X_S^T X_S)^{-1} X_S^T y -  \beta^\star \| \\
 \ &\leq \  \|(X_S^T X_S)^{-1} X_S^T \tilde{y} -  \beta^\star \| +  \| (X_S^T X_S)^{-1} X_S^T (y - \tilde{y}) \| \\
 \ &= \ \| (X_S^T X_S)^{-1} X_S^T \tilde{y} -  \beta^\star \| + \| (X_S^T X_S)^{-1} X_S^T X_{S^c} \beta^\star_{S^c} \| \\
 \ &= \  \| (X_S^T X_S)^{-1} X_S^T \epsilon \| + \| (X_S^T X_S)^{-1} X_S^T X_{S^c} \beta^\star_{S^c}\|.
\end{align*}
Here, the last line follows from the well-specified nature of the problem. We can bound each of the terms in the last line separately.

To analyze the first term, observe that since $\epsilon \sim \Ncal(0, I_n)$, we have 
\[ (X_S^T X_S)^{-1} X_S^T \epsilon \ \sim \ \Ncal \left(0, (X_S^T X_S)^{-1} \right) . \]
Bounding the first term comes down to bounding the $\ell_2$-norm of a random normal vector. The following lemma provides such bounds.

\begin{lemma}
\label{lemma:gaussian-l2-bound}
Let $\Sigma \in \RR^{d \times d}$ denote a positive definite covariance matrix with eigenvalues $\lambda_1, \ldots, \lambda_d$. Then for $v \sim \Ncal(0, \Sigma)$,
\begin{align*}
\pr\left( \| v \|_2^2 \leq t \sum_{j=1}^d \lambda_j \right) \ &\leq \ \sqrt{te} \ \ \ \text{ if } t < 1\\
\pr\left(\| v \|_2^2 \geq t \sum_{j=1}^d \lambda_j \right) \ &\leq \ \sqrt{t} e^{-(t-1)/2} \ \ \ \text{ if } t > 1.
\end{align*}
\end{lemma}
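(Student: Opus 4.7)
The plan is to use the standard Chernoff/Laplace-transform approach for quadratic forms in Gaussians, and then reduce the bound — which nominally depends on the individual eigenvalues — to one that depends only on $S := \sum_j \lambda_j$ using a product inequality of Weierstrass/AM-GM type.

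First I would diagonalize: by rotational invariance of the Gaussian, there is an orthogonal $U$ with $\Sigma = U \operatorname{diag}(\lambda_1,\dots,\lambda_d) U^\top$, so $\| v \|_2^2$ has the same law as $X := \sum_j \lambda_j Z_j^2$ for i.i.d.\ $Z_j \sim \Ncal(0,1)$. This lets me work with the explicit chi-square moment generating functions $\EE[e^{\lambda X}] = \prod_j (1 - 2\lambda \lambda_j)^{-1/2}$ (valid for $\lambda < 1/(2 \max_j \lambda_j)$) and $\EE[e^{-\lambda X}] = \prod_j (1 + 2\lambda \lambda_j)^{-1/2}$ for $\lambda > 0$.

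For the lower tail $\Pr(X \leq t S)$ with $t < 1$, the Chernoff bound gives $\Pr(X \leq tS) \leq e^{\lambda t S} \prod_j (1+2\lambda \lambda_j)^{-1/2}$ for any $\lambda > 0$. Since $\prod_j (1 + u_j) \geq 1 + \sum_j u_j$ for $u_j \geq 0$ (expand the product), I get the eigenvalue-free bound $e^{\lambda t S}(1 + 2\lambda S)^{-1/2}$. Optimizing in $\lambda$ gives $\lambda = (1-t)/(2tS)$, which plugs in to yield $\sqrt{t}\, e^{(1-t)/2}$; since $(1-t)/2 \leq 1/2$ for $t \geq 0$, this is bounded by $\sqrt{te}$.

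For the upper tail $\Pr(X \geq tS)$ with $t > 1$, I apply the Chernoff bound in the other direction: $\Pr(X \geq tS) \leq e^{-\lambda t S}\prod_j (1-2\lambda\lambda_j)^{-1/2}$ for $\lambda \in (0, 1/(2 S))$. Here the key step is the Weierstrass product inequality $\prod_j (1 - u_j) \geq 1 - \sum_j u_j$ for $u_j \in [0,1)$ with $\sum_j u_j < 1$, which yields $\prod_j(1 - 2\lambda\lambda_j)^{-1/2} \leq (1 - 2\lambda S)^{-1/2}$. Optimizing at $\lambda = (t-1)/(2tS)$ gives exactly $\sqrt{t}\, e^{-(t-1)/2}$.

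The main obstacle is picking the right eigenvalue-free multiplicative inequality in each direction — this is what reduces an eigenvalue-specific Chernoff bound to a bound that only depends on the trace $S = \sum_j \lambda_j$ and matches the claimed form. Everything else (diagonalization, MGF of $\chi^2_1$, scalar optimization in $\lambda$) is routine.
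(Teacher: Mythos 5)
Your proposal is correct and follows essentially the same route as the paper: diagonalize so that $\|v\|_2^2 \overset{d}{=} \sum_j \lambda_j u_j^2$ with i.i.d.\ standard normals, apply the Cram\`{e}r--Chernoff method with the $\chi^2_1$ moment generating function, use the product inequality $\prod_j(1 \pm u_j) \geq 1 \pm \sum_j u_j$ to reduce to a bound depending only on $\sum_j \lambda_j$, and optimize the exponent at the same choice of tilting parameter. No gaps; the argument matches the paper's proof step for step.
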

\begin{proof}
Let $Q \Lambda Q^T = \Sigma$ denote the eigendecomposition of $\Sigma$. Then observe that $z = \Lambda^{1/2} Q^T v \sim \Ncal(0, I_d)$ and
\begin{align*}
\| v \|_2^2 = v^T Q \Lambda Q^T v = u^T \Lambda u = \sum_{j=1}^d \lambda_j u_j^2
\end{align*}
where $u := Q^T v \sim \Ncal(0, I)$ (since $Q$ is orthonormal). Thus, we need to establish
\begin{align*}
\pr\left(  \sum_{j=1}^d \lambda_j u_j^2 \leq t \sum_{j=1}^d \lambda_j \right) \ &\leq \ \sqrt{te} \ \ \ \text{ if } t < 1\\
\pr\left( \sum_{j=1}^d \lambda_j u_j^2 \geq t \sum_{j=1}^d \lambda_j \right) \ &\leq \ \sqrt{t} e^{-(t-1)/2} \ \ \ \text{ if } t > 1
\end{align*}
for $u_1, \ldots, u_d \sim \Ncal(0,1)$ i.i.d.

We will show both inequalities through the Cram\`{e}r-Chernoff method~\citep[Chapter 2.2]{boucheron2013concentration}, starting with the first inequality. Fix any $\alpha \geq 0$ and denote $\lambda = \sum_{j=1}^d \lambda_j$, then 
\begin{align*}
\pr\left( \sum_{j=1}^d \lambda_j u_j^2 \leq t \sum_{j=1}^d \lambda_j \right) \ &= \ \EE\left[ \ind\left[ t \sum_{j=1}^d \lambda_j - \sum_{j=1}^d \lambda_j u_j^2 \geq 0  \right]  \right] \\
\ &\leq \ \EE\left[ \exp\left(\alpha \left( t \lambda - \sum_{j=1}^d \lambda_j u_j^2 \right) \right)  \right] \\
\ &= \ \exp\left(\alpha t \lambda \right) \prod_{j=1}^d \EE\left[ \exp \left(- \alpha \lambda_j u_j^2 \right) \right] \\
\ &= \ \exp\left(\alpha t \lambda \right)  \prod_{j=1}^d  \left(1 + 2 \alpha \lambda_j \right)^{-1/2} \\
\ &\leq \ \exp\left(\alpha t \lambda \right) \left(1 + 2 \alpha \lambda \right)^{-1/2}
\end{align*}
where the second-to-last line follows from the form of the Chi-squared moment generating function. Taking $\alpha = \frac{1}{2\lambda}\left( \frac{1}{t} - 1 \right)$ completes the proof of the first inequality.

For the second inequality, we follow similar steps, observing that for $\alpha \leq \frac{1}{2\lambda} \leq \min_j \frac{1}{2 \lambda_j}$ we have
\begin{align*}
\pr\left( \sum_{j=1}^d \lambda_j u_j^2 \geq t \sum_{j=1}^d \lambda_j \right) \ &= \ \EE\left[ \ind\left[ \sum_{j=1}^d \lambda_j u_j^2 -  t \sum_{j=1}^d \lambda_j \geq 0  \right]  \right] \\
\ &\leq \ \EE\left[ \exp\left(\alpha \left( \sum_{j=1}^d \lambda_j u_j^2 - t \lambda  \right) \right)  \right] \\
\ &= \ \exp\left( - \alpha t\lambda \right) \prod_{j=1}^d \EE\left[ \exp \left( \alpha \lambda_j u_j^2 \right) \right] \\
\ &= \  \exp\left( - \alpha t\lambda \right) \prod_{j=1}^d \left(1 - 2 \alpha \lambda_j \right)^{-1/2} \\
\ &\leq \ \exp\left( - \alpha t\lambda \right) \left(1 - 2 \alpha \lambda \right)^{-1/2}.
\end{align*}
Plugging in $\alpha = \frac{1}{2\lambda}\left( 1 -  \frac{1}{t} \right)$ gives us the second inequality.
\end{proof}

As a consequence of Lemma~\ref{lemma:gaussian-l2-bound}, we have with probability at least $1-\delta$,
\[ \| (X_S^T X_S)^{-1} X_S^T \epsilon \|_2^2 \ \leq \ \frac{k}{\lambda_{\min}(X_S^T X_S)} \log(1/\delta) \]
for small enough $\delta$. If our active learner draws a large enough collection of data points satisfying $\| x_S \|_2^2 \geq 1-\alpha$, then with high probability there will be a subset of $n$ data points such that the minimum eigenvalue of $X_S^T X_S$ is at least $(1-\alpha)n/2$. 
Combined with the above, this gives us
\[ \| (X_S^T X_S)^{-1} X_S^T \epsilon \|_2^2 \ \leq \ \frac{2k}{(1-\alpha) n} \log(1/\delta). \]
On the other hand, every row in $X_{S^c}$ has squared $\ell_2$-norm bounded by $\alpha$ and $\| \beta^\star \|_2^2 \leq 1$. Thus, a crude bound gives us
\[  \| (X_S^T X_S)^{-1} X_S^T X_{S^c} \beta^\star_{S^c}\|_2^2  \ \leq \ \frac{n \alpha}{\lambda_{\min}(X_S^T X_S)} \ \leq \ \frac{2 \alpha}{1-\alpha}. \]
Thus, for any target error $\epsilon > 0$ and failure probability $\delta > 0$, we can choose $\alpha = O\left( \epsilon \right)$ and $n = O\left( \frac{k}{\epsilon} \log \frac{1}{\delta }\right)$ and terminate with an estimate satisfying $\| \hat{\beta} - \beta^\star \|^2 \leq \epsilon$ with probability at least $1-\delta$.

\section{Proofs from~\pref{sec:algorithm}}
\label{app:alg-proofs}

\subsection{Proof of~\pref{prop:gaussian-shortcut}}
From the form of the Gaussian likelihood, we have
\begin{align*}
 &\EE_{y \sim \Ncal(\mu_1, \sigma_1^2 I_d)} \left[ \Ncal(y; \mu_2, \sigma_2^2 I_d) \Ncal(y; \mu_3, \sigma_3^2 I_d) \right] \\
 &= \int_{\RR^d} \left(\frac{1}{2\pi \sigma_1^2} \cdot \frac{1}{2\pi \sigma_2^2} \cdot \frac{1}{2\pi \sigma_3^2}\right)^{d/2} \exp \left( -\frac{1}{2\sigma_1^2}\|y - \mu_1  \|^2  -\frac{1}{2\sigma_2^2}\|y - \mu_2 \|^2 - \frac{1}{2\sigma_3^2}\|y - \mu_3  \|^2 \right) \, d y.
\end{align*} 
The bias-variance decomposition of squared error implies that for any $c_1, \ldots, c_n \geq 0$ and $b, b_1, \ldots, b_n \in \RR^d$, we have
\[ \sum_{i=1}^n c_i \| b_i - b \|^2 \ = \ \sum_{i=1}^n c_i \| b_i - \bar{b} \|^2 + {c}_{\text{sum}} \| b - \overline{b}\|^2  \]
where ${c}_{\text{sum}} = \sum_i c_i$ and $\overline{b} = \frac{1}{{c}_{\text{sum}}} \sum_i c_i b_i$. Applying this identity to the exponential above with the notation $\tau = \sum_{i=1}^3 \frac{1}{\sigma_i^2}$ and $\bar{\mu} = \frac{1}{\tau} \sum_{i=1}^3 \frac{\mu_i}{\sigma_i^2}$, we have 
\begin{align*}
  & \EE_{y \sim \Ncal(\mu_1, \sigma_1^2)} \left[\Ncal(y; \mu_2, \sigma_2^2) \Ncal(y; \mu_3, \sigma_3^2) \right] \\
 &= \left(\frac{1}{(2\pi)^3 \sigma_1^2 \sigma_2^2 \sigma_3^2} \right)^{d/2} \int_{\RR^d} \exp \left( - \sum_{i=1}^3 \frac{1}{2\sigma_i^2}\|\bar{\mu} - \mu_i \|^2  - \frac{\tau}{2} \|y - \bar{\mu} \|^2 \right) \, d y \\
 &= \left(\frac{1}{(2\pi)^3 \sigma_1^2 \sigma_2^2 \sigma_3^2}\right)^{d/2} \exp\left( - \sum_{i=1}^3 \frac{1}{2\sigma_i^2}\|\bar{\mu} - \mu_i \|^2 \right) \int_{\RR^d} \exp \left( - \frac{\tau}{2} \|y - \bar{\mu} \|^2 \right) \, d y \\
&= \left(\frac{1}{(2\pi)^3 \sigma_1^2 \sigma_2^2 \sigma_3^2}\right)^{d/2} \exp\left( - \sum_{i=1}^3 \frac{1}{2\sigma_i^2}\|\bar{\mu} - \mu_i \|^2 \right) \cdot \left(  \frac{2\pi}{\tau} \right)^{d/2} \\
&= \left(\frac{1}{(2\pi)^2 (\sigma_1^2 \sigma_2^2 + \sigma_2^2 \sigma_3^2 + \sigma_1^2\sigma_3^2)}\right)^{d/2} \exp\left( - \sum_{i=1}^3 \frac{1}{2\sigma_i^2}\|\bar{\mu} - \mu_i \|^2 \right)
\end{align*} 
where the second-to-last line follows from the fact that the integral is exactly the normalizing constant of a spherical Gaussian with mean $\bar{\mu}$ and variance $1/\tau$ and the last line follows from expanding the definition of $\tau$.

Any discrete random variable $X$ taking values $x_i$ with probability $p_i$ for $i=1,\ldots, m$ satisfies the following:
\[ \sum_{i=1}^m p_i \| x_i - \EE[X] \|^2 = \EE \| X - \EE[X]\|^2 = \sum_{1 \leq i < j \leq m} p_i p_j \| x_i - x_j\|^2. \]
Applying this to the discrete random variable that takes value $\mu_i$ with probability $\frac{1}{\tau \sigma_i^2}$,
\begin{align*}
\sum_{i=1}^3 \frac{1}{2\sigma_i^2}\|\bar{\mu} - \mu_i \|^2 
&= \frac{1}{\sigma_1^2 \sigma_2^2 \tau^2} \|\mu_1 - \mu_2 \|^2 + \frac{1}{\sigma_2^2 \sigma_3^2 \tau^2} \| \mu_2 - \mu_3 \|^2 + \frac{1}{\sigma_1^2 \sigma_3^2 \tau^2} \|\mu_1 - \mu_3 \|^2  \\
&= \frac{\sigma_1^2 \sigma_2^2 \sigma_3^2}{(\sigma_1^2 \sigma_2^2 + \sigma_2^2 \sigma_3^2 + \sigma_1^2\sigma_3^2)^2} \left( \sigma_{3}^2 \| \mu_{1} - \mu_{2} \|^2 
+ \sigma_{1}^2 \| \mu_{2} - \mu_{3} \|^2 
+ \sigma_{2}^2 \| \mu_{1} - \mu_{3} \|^2   \right).
\end{align*} 
Putting it all together gives us the desired identity. \qed

\subsection{Other closed form solutions}

\begin{proposition}
Fix $p^{(1)}, p^{(2)}, p^{(3)} \in \Delta^{K}$. Then
\[ \EE_{y \sim p^{(1)} }\left[ p^{(2)}_y  p^{(3)}_y \right] \ = \ \sum_{y=1}^K p^{(1)}_y p^{(2)}_y p^{(3)}_y.  \]
\end{proposition}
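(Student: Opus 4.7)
The proposition is essentially a direct application of the definition of expectation for a discrete random variable. The plan is to observe that since $y \sim p^{(1)}$ ranges over the finite set $\{1, \ldots, K\}$ with $\pr(y = k) = p^{(1)}_k$, the expectation of any function $g(y)$ unfolds as $\sum_{k=1}^K p^{(1)}_k g(k)$. Setting $g(y) = p^{(2)}_y p^{(3)}_y$ immediately yields the claimed identity
\[
\EE_{y \sim p^{(1)}}\bigl[ p^{(2)}_y p^{(3)}_y \bigr] \;=\; \sum_{y=1}^K p^{(1)}_y\, p^{(2)}_y\, p^{(3)}_y.
\]

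There are no obstacles here: no interchange of sum and integral is needed (everything is a finite sum), no measurability issues arise, and no analytic identities are invoked. The only thing worth remarking is that, in contrast to the Gaussian case of \pref{prop:gaussian-shortcut}, no further simplification is sought — the point of the proposition is simply to record that the $M$ function used in \pref{alg:pdbal} can be computed in $O(K)$ time in closed form for the multinomial likelihood by directly evaluating the sum. I would therefore write the proof as a single line applying the definition of discrete expectation, and the entire argument fits in one display. The main content of the statement is its usefulness as a building block for the algorithm rather than the difficulty of its verification.
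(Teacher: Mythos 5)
Your proof is correct and matches the paper's, which simply states that the identity follows immediately by substitution (i.e., expanding the definition of expectation for a discrete random variable supported on $\{1,\ldots,K\}$). Nothing further is needed.
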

\begin{proof}
This follows immediately by substitution.
\end{proof}

\begin{proposition}
Fix $\lambda_1, \lambda_2, \lambda_3 > 0$, and let $f_\lambda(y) = \lambda e^{-\lambda y}$ denote the density function of the exponential distribution with parameter $\lambda$. Then
\[ \EE_{y \sim f_{\lambda_1}}\left[ f_{\lambda_2}(y)  f_{\lambda_3}(y) \right] \ = \ \frac{\lambda_1 \lambda_2 \lambda_3}{\lambda_1 + \lambda_2 + \lambda_3}.  \]
\end{proposition}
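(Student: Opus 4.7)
The plan is to proceed by direct computation, since all three densities are supported on $[0,\infty)$ and have compact exponential forms that multiply cleanly. First I would expand the expectation as an integral: writing $f_{\lambda_i}(y) = \lambda_i e^{-\lambda_i y}$ for $y \geq 0$, the left-hand side becomes
\[
\int_0^\infty \lambda_1 e^{-\lambda_1 y} \cdot \lambda_2 e^{-\lambda_2 y} \cdot \lambda_3 e^{-\lambda_3 y} \, dy.
\]

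Next I would factor the $\lambda_i$ constants out of the integral and combine the exponentials into a single exponential with rate $\lambda_1 + \lambda_2 + \lambda_3$. The remaining integral $\int_0^\infty e^{-(\lambda_1 + \lambda_2 + \lambda_3) y}\, dy$ is the standard normalizing integral of an exponential density (up to the rate factor), evaluating to $1/(\lambda_1 + \lambda_2 + \lambda_3)$. Multiplying by the $\lambda_1 \lambda_2 \lambda_3$ pulled out in front yields exactly the claimed expression.

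There is essentially no obstacle here: the computation is a single change of variables, and positivity of all $\lambda_i$ guarantees both convergence of the integral and that the rate $\lambda_1 + \lambda_2 + \lambda_3$ is strictly positive so the integral is well-defined. The only minor item to note is that the expectation is taken over $y \sim f_{\lambda_1}$, which already absorbs the $\lambda_1 e^{-\lambda_1 y}$ factor into the measure; one can equivalently view the calculation as recognizing that the product $f_{\lambda_2}(y) f_{\lambda_3}(y) = \lambda_2 \lambda_3 e^{-(\lambda_2 + \lambda_3) y}$ is proportional to an exponential density with rate $\lambda_2 + \lambda_3$, whose expectation under $f_{\lambda_1}$ reduces to evaluating an integral of the same form.
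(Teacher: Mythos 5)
Your proposal is correct and matches the paper's proof exactly: both expand the expectation as the integral $\int_0^\infty \lambda_1 e^{-\lambda_1 y}\lambda_2 e^{-\lambda_2 y}\lambda_3 e^{-\lambda_3 y}\,dy$, combine the exponentials into a single rate $\lambda_1+\lambda_2+\lambda_3$, and evaluate the elementary integral. No issues.
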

\begin{proof}
Simple calculus gives us
\begin{align*}
\EE_{y \sim f_{\lambda_1}}\left[ f_{\lambda_2}(y)  f_{\lambda_3}(y) \right] 
\ &= \ \int_{0}^\infty \lambda_1 e^{-\lambda_1 y} \lambda_2 e^{-\lambda_2 y} \lambda_3 e^{-\lambda_3 y} \, d y \\
\ &= \ \lambda_1 \lambda_2 \lambda_3 \int_{0}^\infty e^{-(\lambda_1 + \lambda_2 + \lambda_3) y} \, d y \\
\ &= \ \frac{\lambda_1 \lambda_2 \lambda_3}{\lambda_1 + \lambda_2 + \lambda_3}. \qedhere
\end{align*}
\end{proof}

\begin{proposition}
Fix $p_1, p_2, p_3 \in [0,1]$, and let $f_p(y) = p(1-p)^k$ denote the mass function of the geometric distribution with parameter $p$. Then
\[ \EE_{y \sim f_{p_1}}\left[ f_{p_2}(y)  f_{p_3}(y) \right] \ = \ \frac{p_1 p_2 p_3}{ p_1 + p_2 + p_3 - p_1 p_2 - p_2p_3 - p_1p_3 + p_1 p_2 p_3}.  \]
\end{proposition}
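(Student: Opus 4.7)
The plan is to carry out the same direct calculation that worked for the exponential case, just with an infinite sum in place of an integral. Writing out the expectation using the geometric mass function gives
\begin{equation*}
\EE_{y \sim f_{p_1}}\left[ f_{p_2}(y) f_{p_3}(y) \right] = \sum_{k=0}^\infty p_1(1-p_1)^k \cdot p_2(1-p_2)^k \cdot p_3(1-p_3)^k,
\end{equation*}
and the main step is to pull the factor $p_1 p_2 p_3$ outside and collect the three base terms into a single ratio $[(1-p_1)(1-p_2)(1-p_3)]^k$, so the sum becomes a geometric series.

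Next I would sum the geometric series, which converges because $(1-p_i) \in [0,1]$ and (assuming the $p_i$ are not all zero) the common ratio is strictly less than $1$. This yields
\begin{equation*}
\EE_{y \sim f_{p_1}}\left[ f_{p_2}(y) f_{p_3}(y) \right] = \frac{p_1 p_2 p_3}{1 - (1-p_1)(1-p_2)(1-p_3)}.
\end{equation*}
The final step is simply to expand the product $(1-p_1)(1-p_2)(1-p_3) = 1 - (p_1+p_2+p_3) + (p_1p_2+p_2p_3+p_1p_3) - p_1p_2p_3$ and subtract from $1$, which produces the denominator stated in the proposition.

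There is no real obstacle: the only thing to be careful about is the degenerate boundary case where some $p_i = 0$ (so $f_{p_i}$ is improper) or where all $p_i = 1$ (in which case the denominator and numerator both reduce consistently). Assuming $p_i \in (0,1]$ with not all equal to $1$ simultaneously in a degenerate way, the geometric-series step is justified, and the identity follows.
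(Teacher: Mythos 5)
Your proposal is correct and follows essentially the same route as the paper's proof: expand the expectation as a sum, factor out $p_1 p_2 p_3$, recognize the geometric series with common ratio $(1-p_1)(1-p_2)(1-p_3)$, and expand the resulting denominator. Your extra remarks about the degenerate boundary cases are a minor refinement the paper omits, but the core argument is identical.
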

\begin{proof}
Expanding out, we have
\begin{align*}
\EE_{y \sim f_{p_1}}\left[ f_{p_2}(y)  f_{p_3}(y) \right] 
\ &= \ \sum_{y=0}^\infty p_1 p_2 p_3 \left( (1-p_1)(1-p_2)(1-p_3) \right)^k \\
\ &= \  p_1 p_2 p_3 \sum_{y=0}^\infty \left( 1 - \left(p_1 + p_2 + p_3 - p_1 p_2 - p_2p_3 - p_1p_3 + p_1 p_2 p_3 \right) \right)^k \\
\ &= \ \frac{p_1 p_2 p_3}{ p_1 + p_2 + p_3 - p_1 p_2 - p_2p_3 - p_1p_3 + p_1 p_2 p_3}. \qedhere
\end{align*}
\end{proof}

\begin{proposition}
Fix $\alpha_1, \alpha_2, \alpha_3, \beta_1, \beta_2, \beta_3 > 0$ such that $\alpha_1 + \alpha_2 + \alpha_3 > 2$, and let $f_{\alpha, \beta}(y) = \frac{\beta^\alpha}{\Gamma(\alpha)} x^{\alpha-1} e^{-\beta x}$ denote the density function of the gamma distribution with parameters $\alpha, \beta$. Then
\[ \EE_{y \sim f_{\alpha_1, \beta_1} }\left[f_{\alpha_2, \beta_2}(y)  f_{\alpha_3, \beta_3}(y) \right] \ = \ \frac{\beta_1^{\alpha_1} \beta_2^{\alpha_2} \beta_3^{\alpha_3} \Gamma( \alpha_1 + \alpha_2 + \alpha_3 - 2) }{(\beta_1 + \beta_2 + \beta_3)^{(\alpha_1 + \alpha_2 + \alpha_3 - 2)} \Gamma(\alpha_1) \Gamma(\alpha_2) \Gamma(\alpha_3) }.  \]
\end{proposition}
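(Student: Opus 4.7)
The plan is a direct computation of the integral defining the expectation, followed by recognition of the standard Gamma function identity. Substituting the definition of $f_{\alpha,\beta}$, the quantity of interest is
\begin{equation*}
\int_0^\infty \frac{\beta_1^{\alpha_1}}{\Gamma(\alpha_1)} y^{\alpha_1-1} e^{-\beta_1 y} \cdot \frac{\beta_2^{\alpha_2}}{\Gamma(\alpha_2)} y^{\alpha_2-1} e^{-\beta_2 y} \cdot \frac{\beta_3^{\alpha_3}}{\Gamma(\alpha_3)} y^{\alpha_3-1} e^{-\beta_3 y} \, d y,
\end{equation*}
and the first step is to pull the constants out of the integral and combine the powers of $y$ as well as the exponentials, collapsing to a single factor $y^{(\alpha_1+\alpha_2+\alpha_3) - 3}$ multiplied by $e^{-(\beta_1+\beta_2+\beta_3) y}$.

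Next, I would perform the standard change of variables $u = (\beta_1+\beta_2+\beta_3) y$, which rescales the integral to the canonical form defining $\Gamma(\alpha_1+\alpha_2+\alpha_3 - 2)$. The assumption $\alpha_1+\alpha_2+\alpha_3 > 2$ is precisely what ensures this exponent is positive and the integral converges at the origin; convergence at infinity is automatic from the exponential decay given $\beta_1+\beta_2+\beta_3 > 0$. Collecting the prefactors from the change of variables gives $(\beta_1+\beta_2+\beta_3)^{-(\alpha_1+\alpha_2+\alpha_3 - 2)} \Gamma(\alpha_1+\alpha_2+\alpha_3 - 2)$, and multiplying this by the constants extracted earlier yields the stated identity.

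There is really no main obstacle here: the argument is a one-line calculation using the defining property of the Gamma function, completely analogous to the exponential and geometric cases stated just above in the appendix (indeed, the exponential case is recovered as $\alpha_1=\alpha_2=\alpha_3=1$, provided one accepts the borderline interpretation). The only point that requires attention is the hypothesis on the $\alpha_i$, which is needed solely to make the Gamma integral well-defined.
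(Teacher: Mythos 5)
Your proposal is correct and matches the paper's proof essentially verbatim: both expand the integrand, combine the powers of $y$ and the exponentials into $y^{\alpha_1+\alpha_2+\alpha_3-3}e^{-(\beta_1+\beta_2+\beta_3)y}$, and identify the resulting integral as the Gamma integral (the paper phrases this as recognizing the normalizing constant of a $\mathrm{Gamma}(\alpha_1+\alpha_2+\alpha_3-2,\ \beta_1+\beta_2+\beta_3)$ distribution, which is the same computation as your change of variables). Your remark on where the hypothesis $\alpha_1+\alpha_2+\alpha_3>2$ is used is a correct, if implicit, part of the paper's argument as well.
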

\begin{proof}
Expanding out, we have
\begin{align*}
 \EE_{y \sim f_{\alpha_1, \beta_1} } \left[ f_{\alpha_2, \beta_2}(y)  f_{\alpha_3, \beta_3}(y) \right] 
 \ &= \ \frac{\beta_1^{\alpha_1} \beta_2^{\alpha_2} \beta_3^{\alpha_3} }{\Gamma(\alpha_1) \Gamma(\alpha_2) \Gamma(\alpha_3)}
 \int_0^\infty  \left( x^{\alpha_1-1} e^{-\beta_1 x} \right) \left( x^{\alpha_2-1} e^{-\beta_2 x} \right) \left( x^{\alpha_3-1} e^{-\beta_3 x} \right) \, dx \\ 
  \ &= \ \frac{\beta_1^{\alpha_1} \beta_2^{\alpha_2} \beta_3^{\alpha_3} }{\Gamma(\alpha_1) \Gamma(\alpha_2) \Gamma(\alpha_3)}
 \int_0^\infty  x^{\alpha_1 + \alpha_2 + \alpha_3 - 3} e^{-(\beta_1 + \beta_2 + \beta_3 )x} \, dx \\ 
 \ &= \ \frac{\beta_1^{\alpha_1} \beta_2^{\alpha_2} \beta_3^{\alpha_3} \Gamma( \alpha_1 + \alpha_2 + \alpha_3 - 2) }{(\beta_1 + \beta_2 + \beta_3)^{(\alpha_1 + \alpha_2 + \alpha_3 - 2)} \Gamma(\alpha_1) \Gamma(\alpha_2) \Gamma(\alpha_3) },
\end{align*}
where the last line follows from the fact that the integral is the normalizing constant of a gamma distribution with parameters $\alpha_1 + \alpha_2 + \alpha_3 - 2$ and $\beta_1 + \beta_2 + \beta_3 $.
\end{proof}

\begin{proposition}
Fix $p_1, p_2, p_3 > 0$ and $r \geq 1$, and let $f_{r, p}(k) = { {k+r -1} \choose k} (1-p)^r p^k $ denote the probability mass function of the negative binomial distribution with parameters $r, p$. Then
\[ \EE_{k \sim f_{r, p_1} }\left[f_{r, p_2}(k)  f_{r, p_3}(k) \right] \ = \ \frac{\prod_{i=1}^3 (1 - p_i)^r}{(1- p_1 p_2 p_3)^r} \sum_{k=0}^{r-1} { {r-1} \choose k} 2^{-2k} \frac{\Gamma(2k + r)}{\Gamma(r) \Gamma(k+1)^2} \left( \frac{4 p_1 p_2 p_3}{(1-p_1 p_2 p_3)^2}  \right)^k  .\]
\end{proposition}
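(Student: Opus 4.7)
The plan is to reduce the claim to a generating-function identity in one variable and then invoke a terminating hypergeometric transformation. Substituting $f_{r,p_i}(k) = \binom{k+r-1}{k}(1-p_i)^r p_i^k$ and pulling $\prod_i(1-p_i)^r$ out of the sum, with $q = p_1 p_2 p_3$ the claim reduces to proving
\[
S_r(q) \;:=\; \sum_{k=0}^\infty \binom{k+r-1}{k}^{\!3}\, q^k \;=\; \frac{1}{(1-q)^r}\sum_{k=0}^{r-1}\binom{r-1}{k}\,\frac{\Gamma(2k+r)}{\Gamma(r)(k!)^2}\,\frac{q^k}{(1-q)^{2k}},
\]
where I used $4^k \cdot 2^{-2k}=1$ to absorb the factors in the statement.

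Since $\binom{k+r-1}{k}=(r)_k/k!$, the left side equals the hypergeometric function ${}_3F_2(r,r,r;1,1;q)$. The key step is to establish the quadratic transformation
\[
{}_3F_2\!\left(\!\begin{matrix}r,r,r\\ 1,1\end{matrix};\,q\right) \;=\; (1-q)^{-r}\;{}_3F_2\!\left(\!\begin{matrix}1-r,\,r/2,\,(r+1)/2\\ 1,1\end{matrix};\,\frac{-4q}{(1-q)^2}\right).
\]
For positive integer $r$ the factor $(1-r)_k$ vanishes when $k\geq r$, so the right-hand series terminates at $k=r-1$. Applying $\binom{r-1}{k}=(-1)^k(1-r)_k/k!$ together with the Legendre duplication identity $(r)_{2k}=4^k(r/2)_k((r+1)/2)_k$ rewrites the transformed series term-by-term as the claimed finite sum, with the $4^k$ factors pairing with the $2^{-2k}$ prefactors.

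The main obstacle is proving the quadratic transformation. Observing that both sides, multiplied by $(1-q)^{3r-2}$, are polynomials in $q$ of degree at most $2(r-1)$, it suffices to match $2r-1$ coefficients. One route is induction on $r$: the base $r=1$ is trivial since both sides equal $1/(1-q)$, and the inductive step uses Pascal's identity $\binom{k+r}{k}=\binom{k+r-1}{k}+\binom{k+r-1}{k-1}$ applied to each factor of the cube, combined with the standard contiguous relations for ${}_3F_2$, to propagate the identity from $r$ to $r+1$. An alternative that avoids the hypergeometric machinery entirely uses the Egorychev representation
\[
S_r(q) \;=\; [y_1^{r-1}y_2^{r-1}y_3^{r-1}]\,\dfrac{\prod_{i=1}^3(1+y_i)^{r-1}}{1-q\prod_{i=1}^3(1+y_i)};
\]
after the substitution $u_i=1+y_i$ and the rewriting $1-q u_1 u_2 u_3 = (1-q)\bigl[1-\tfrac{q}{1-q}(u_1 u_2 u_3-1)\bigr]$, I would expand the denominator as a geometric series in $\tfrac{q}{1-q}(u_1 u_2 u_3-1)$, expand $u_1 u_2 u_3-1$ via the elementary symmetric polynomials $e_1+e_2+e_3$ of the $y_i$, and collect the contributions to the monomial $y_1^{r-1}y_2^{r-1}y_3^{r-1}$ by iterated Chu-Vandermonde to obtain the claimed closed form.
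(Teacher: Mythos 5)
Your proposal follows essentially the same route as the paper: reduce the sum to ${}_3F_2(r,r,r;1,1;p_1p_2p_3)$, apply the quadratic transformation to obtain a series terminating at $k=r-1$ via the $(1-r)_k$ factor, and simplify with $\binom{r-1}{k}=(-1)^k(1-r)_k/k!$ and the Legendre duplication formula. The only difference is that the paper simply quotes the quadratic transformation as a known classical identity, whereas you sketch (without completing) two possible proofs of it; since that identity is standard, this does not change the substance of the argument.
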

\begin{proof}
Expanding out, we have 
\begin{align*}
\EE_{k \sim f_{r, p_1} }\left[f_{r, p_2}(k)  f_{r, p_3}(k) \right] &= \sum_{k=0}^\infty { {k+r -1} \choose k}^3 \prod_{i=1}^3 (1-p_i)^r p_i^k \\
&= \left( \prod_{i=1}^3 (1-p_i)^r \right) \sum_{k=0}^\infty { {k+r -1} \choose k}^3 (p_1 p_2 p_3)^k \\
&= \left( \prod_{i=1}^3 (1-p_i)^r \right) {}_{3} F_{2} \left(r, r, r; 1,1; p_1 p_2 p_3 \right)
\end{align*}
where $ {}_{3} F_{2}(a,b,c; e,f; x)$ is the generalized hypergeometric function. From the identity
\[ {}_{3} F_{2}(a,b,c; a-b + 1, a-c +1; z) = (1-z)^{-a} {}_{3} F_{2} \left(a - b - c + 1, \frac{a}{2}, \frac{a+1}{2}; a-b + 1, a-c +1;  - \frac{4 z}{(1-z)^2} \right)  \]
we have 
\begin{align*}
{}_{3} F_{2} \left(r, r, r; 1,1; z \right) &=  (1-z)^{-r} {}_{3} F_{2} \left(1 - r, \frac{r}{2}, \frac{r+1}{2}; 1, 1; - \frac{4 z}{(1-z)^2} \right).
\end{align*}
Observe that whenever $m$ is a non-negative integer, we have
\[ {}_{3} F_{2} \left(-m, b, c; 1, 1; z \right) = \sum_{k=0}^m (-1)^k {m \choose k} \frac{\Gamma(b+k) \Gamma(c + k)}{\Gamma(b) \Gamma(c) \Gamma(k+1)^2} z^k. \]
Putting it together, we have for any $z > 0$ and any integer $r \geq 1$,
\begin{align*}
{}_{3} F_{2} \left(r, r, r; 1,1; z \right) &=  (1-z)^{-r} \sum_{k=0}^{r-1} (-1)^k {r-1 \choose k} \frac{\Gamma\left( \frac{r}{2} + k \right) \Gamma\left( \frac{r+1}{2} + k \right)}{\Gamma\left( \frac{r}{2} \right) \Gamma\left( \frac{r+1}{2} \right)\Gamma(k+1)^2} \left( - \frac{4 z}{(1-z)^2} \right)^k \\
&=  (1-z)^{-r} \sum_{k=0}^{r-1} \left(\frac{4 z}{(1-z)^2} \right)^k { {r-1} \choose k} 2^{-2k} \frac{\Gamma(2k + r)}{\Gamma(r) \Gamma(k+1)^2}
\end{align*}
where the last line follows from the Legendre duplication formula: 
\[ \Gamma(x) \Gamma(x + 1/2) = 2^{1-2x} \sqrt{\pi} \Gamma(2x). \]
Substituting in $z = p_1 p_2 p_3 $ gives us the proposition statement.
\end{proof}

\section{Proofs from~\pref{sec:theory}}

\subsection{Proof of \pref{lem:splitting}}

Let $Z_t = \EE_{\theta \sim \pi}[P_\theta(y_{1:t}; x_{1:t})]$, so that we have $W_t = Z_t e^{\sum_{i=1}^t H(x_i)}$. Observe that $Z_t$ is exactly the normalizing constant arising in Bayes' rule:
\[ \pi_t(\theta) =  \frac{1}{Z_t} \pi(\theta) P_\theta(y_{1:t}; x_{1:t}). \]
Thus, for any $x_{t+1}, y_{t+1}$, we have
\begin{align*}
\frac{Z_{t+1}}{Z_t} &= \frac{1}{Z_t} \EE_{\theta \sim \pi}\left[ P_\theta(y_{1:t+1}; x_{1:t+1})  \right] \\
&=  \EE_{\theta \sim \pi}\left[ \frac{P_\theta(y_{1:t}; x_{1:t})}{Z_t}  P_\theta(y_{t+1}; x_{t+1}) \right] \\
&= \EE_{\theta \sim \pi_t}\left[ P_\theta(y_{t+1}; x_{t+1}) \right].
\end{align*}
Moreover, by Bayes's rule, we also have
\[ \pi_{t+1}(\theta) 
=  \frac{ \pi_t(\theta) P_\theta(y_{t+1}; x_{t+1})}{\EE_{\theta \sim \pi_t}\left[ P_\theta(y_{t+1}, x_{t+1}) \right]} 
= \frac{Z_{t}}{Z_{t+1}} \pi_t(\theta) P_\theta(y_{t+1}; x_{t+1}). \]
Putting it all together, 
\begin{align*}
W_{t+1}^2 \avg(\pi_{t+1}) &= W_{t+1}^2 \EE_{\theta, \theta' \sim \pi_{t+1}}\left[ d(\theta, \theta') \right] \\
&=  W_{t+1}^2 \EE_{\theta, \theta' \sim \pi_{t}}\left[ \frac{Z_t^2}{Z_{t+1}^2}  P_\theta(y_{t+1}; x_{t+1}) P_{\theta'}(y_{t+1}; x_{t+1}) d(\theta, \theta') \right] \\
&= W_{t}^2 e^{2H(x_{t+1})} \EE_{\theta, \theta' \sim \pi_{t}}\left[ \frac{Z_t^2}{Z_{t+1}^2}  P_\theta(y_{t+1}; x_{t+1}) P_{\theta'}(y_{t+1}; x_{t+1}) d(\theta, \theta') \right].
\end{align*}
Taking expectations over $y_{t+1}$ and applying the definition of splitting finishes the argument. \qed

\subsection{Proof of \pref{prop:gaussian-dimension}}

Let $(x_1,y_1), \ldots, (x_n,y_n)$ be given. For $\theta, \theta' \in \Theta$,
\begin{align*}
\left| \sum_{i=1}^n \log P_\theta(y_i;x_i) - \log P_{\theta'}(y_i;x_i) \right|
&= \frac{1}{2\sigma^2}\left| \sum_{i=1}^n (y_i - \theta(x_i))^2 - (y_i - \theta'(x_i))^2\right| \\
&\leq \frac{1}{2\sigma^2}  \sum_{i=1}^n \left| (y_i - \theta(x_i))^2 - (y_i - \theta'(x_i))^2\right| \\
&\leq \frac{1}{2\sigma^2} \sum_{i=1}^n \left| \theta(x_i) - \theta'(x_i) \right| \left( 2|y_i| +  \left| \theta(x_i) + \theta'(x_i) \right| \right) \\
&\leq   \frac{2B}{\sigma^2} \sum_{i=1}^n \left| \theta(x_i) - \theta'(x_i) \right|.
\end{align*}
Thus, $N_{ll}(\epsilon, \Theta, n) \leq N_{1}(\sigma^2\epsilon/2B, \Theta, n)$, where $N_{1}$ denotes uniform covering with respect to $\ell_1$ distance. By known bounds on the covering number in terms of the pseudo-dimension, e.g.~\citep[Theorem~18.4]{anthony1999neural}, we have 
\[ N_{1}(\epsilon, \Theta, n) \leq e (d+1) \left( \frac{4eBn}{\epsilon} \right)^d. \]
Thus,
\[ N_{ll}(\epsilon, \Theta, n) \leq  e (d+1) \left( \frac{8eB^2n}{\epsilon \sigma^2} \right)^d. \] 
The definition of log-likelihood dimension finishes the argument. \qed

\subsection{Proof of \pref{prop:gaussian-entropy}}

For simplicity, let $\mu \in \RR$ and $\sigma^2 > 0$. Let $\Ncal(\cdot ; \mu, \sigma^2)$ denote the density of a Gaussian with mean $\mu$ and variance $\sigma^2$. Let $H = \frac{1}{2} + \frac{1}{2} \log \left( 2 \pi \sigma^2 \right)$ denote the entropy of $\Ncal(\cdot ; \mu, \sigma^2)$.

Suppose $Y \sim \Ncal(\mu, \sigma^2)$ and $X = \log \frac{1}{P_{\theta}(Y; x)} - H$, then
\begin{align*}
\EE[e^{\lambda X}] 
\ &= \ \EE\left[ \exp \left( \frac{\lambda}{2} \log \left( 2\pi \sigma^2 \right) + \lambda\frac{(Y -\mu)^2}{2 \sigma^2} - \frac{\lambda}{2} \log \left( 2\pi \sigma^2 \right) - \frac{\lambda}{2} \right) \right] \\
&=  e^{-\lambda/2} \EE\left[ \exp \left( \lambda \frac{(Y - \mu)^2}{2 \sigma^2} \right) \right] \\
&=  \frac{e^{-\lambda/2}}{(1-\lambda)^{1/2}}
\end{align*}
for $\lambda < 1$. Here, the last line follows from the fact that $\frac{(Y - \mu)^2}{\sigma^2}$ is chi-squared with one degree of freedom, and the known form of the chi-squared moment generating function. Thus, for $\lambda \in (0,1)$, we have
\[ \log \EE[e^{\lambda X}] = \frac{1}{2}\log \frac{1}{1-\lambda} - \frac{\lambda}{2} \ \leq \ \frac{\lambda^2}{2(1 - \lambda)}, \]
where the inequality follows from the bound $\log(x) \leq x - 1$ for $x > 0$. Thus, Gaussian location models are entropy sub-Gamma with variance factor 1 and scale parameter 1. \qed

\subsection{Proof of~\pref{lem:X_t-contraction}}
For $t \geq 1$, define $\Delta_t = 1 - \frac{W_t^2 \avg(\pi_{t})}{W_{t-1}^2 \avg(\pi_{t-1})}$. Let $\Fcal_{t}$ denote the sigma-field of all outcomes up to an including time $t$. Then if we query point $x_t$ which $\rho$-splits $\pi_{t+1}$, the definition of splitting implies that
\[ \EE[\Delta_{t+1} \mid x_t, \Fcal_{t}] \ \geq \ \rho. \]
Let $S_t = \sum_{i=1}^t (\Delta_t - \rho)$. The above implies that $S_t$ is a submartingale. Moreover, if $P_\theta(y; x) \leq c_1$ uniformly for all $\theta, x, y$ and $e^{H(x)} \leq c_2$ for all $x$, then
\begin{align*}
0 \ \leq \ \frac{W_{t+1}^2 \avg(\pi_{t+1})}{W_t^2 \avg(\pi_{t+1})} \ = \ e^{2H(x_{t+1})} \frac{\EE_{\theta, \theta' \sim \pi}[d(\theta, \theta') \prod_{i=1}^{t+1} P_{\theta}(y_i;x_i) P_{\theta'}(y_i;x_i) ]}{\EE_{\theta, \theta' \sim \pi}[d(\theta, \theta') \prod_{i=1}^{t} P_{\theta}(y_i;x_i) P_{\theta'}(y_i;x_i)]} \ \leq c_1^2 c_2^2.
\end{align*}
This implies $|S_{t+1} - S_t| \leq c_1^2 c_2^2$, and thus the Azuma-Hoeffding inequality~\citep{azuma1967weighted} tells us that with probability at least $1-\delta$,
\[ \sum_{i=1}^t \Delta_t \ = \ t\rho + S_{t} \ \geq \ t\rho - c_1 c_2 \sqrt{2t \log \frac{1}{\delta}}. \qed \]

\subsection{Proof of~\pref{lem:lower-bound-W_t}}

To prove~\pref{lem:lower-bound-W_t}, we will need the following lower bound.
\begin{lemma}
\label{lem:voronoi-partition}
Fix $\omega_n = ((x_1, y_1), \ldots, (x_n, y_n)) \in (\Xcal \times \Ycal)^n$ and let $M$ be an $\epsilon$-covering of $\Theta|_{\omega_n}$ with respect to $d_{ll}(\cdot, \cdot)$. Let $\Theta_1, \ldots, \Theta_{|M|}$ be the induced Voronoi partition of $\Theta$ (breaking ties arbitrarily). Then for any $\Theta_i$ and any $\theta^\star \in \Theta_i$
\[ \EE_{\theta \sim \pi}\left[ \prod_{i=1}^n P_{\theta}(y_i \mid x_i) \right] \ \geq \ e^{-2\epsilon} \pi(\Theta_i) \prod_{i=1}^n P_{\theta^\star}(y_i \mid x_i). \]
\end{lemma}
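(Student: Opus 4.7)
The plan is to bound the expectation from below by restricting the integral to the cell $\Theta_i$, and then to use the fact that $d_{ll}$ is a pseudo-metric to replace the integrand with a uniform lower bound expressed in terms of $\theta^\star$.

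First I would split the prior expectation as
\[
\EE_{\theta \sim \pi}\!\left[\prod_{i=1}^n P_\theta(y_i\mid x_i)\right] \;\geq\; \int_{\Theta_i} \pi(\theta)\prod_{i=1}^n P_\theta(y_i\mid x_i)\,d\theta,
\]
which follows because the integrand is nonnegative. So it suffices to show that $\prod_i P_\theta(y_i\mid x_i) \geq e^{-2\epsilon}\prod_i P_{\theta^\star}(y_i\mid x_i)$ for every $\theta \in \Theta_i$.

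The key step is to observe that $d_{ll}$ satisfies the triangle inequality: for any $a,b,c \in \RR^n_{\geq 0}$,
\[
\Bigl|\sum_{i=1}^n \log\tfrac{a_i}{c_i}\Bigr| = \Bigl|\sum_{i=1}^n \log\tfrac{a_i}{b_i} + \sum_{i=1}^n \log\tfrac{b_i}{c_i}\Bigr| \leq d_{ll}(a,b) + d_{ll}(b,c).
\]
By construction of the Voronoi partition, both $\theta$ and $\theta^\star$ are assigned to the same center $m_i \in M$, whose projection onto $\omega_n$ is at $d_{ll}$-distance at most $\epsilon$ from each. Applying the triangle inequality to the projections then gives $d_{ll}(\theta|_{\omega_n}, \theta^\star|_{\omega_n}) \leq 2\epsilon$, i.e.\
\[
\Bigl|\sum_{i=1}^n \log P_\theta(y_i\mid x_i) - \sum_{i=1}^n \log P_{\theta^\star}(y_i\mid x_i)\Bigr| \leq 2\epsilon.
\]
Dropping the absolute value gives $\sum_i \log P_\theta(y_i\mid x_i) \geq \sum_i \log P_{\theta^\star}(y_i\mid x_i) - 2\epsilon$, and exponentiating yields the pointwise bound we want.

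Plugging this back in,
\[
\int_{\Theta_i} \pi(\theta)\prod_{i=1}^n P_\theta(y_i\mid x_i)\,d\theta \;\geq\; e^{-2\epsilon}\prod_{i=1}^n P_{\theta^\star}(y_i\mid x_i)\int_{\Theta_i}\pi(\theta)\,d\theta \;=\; e^{-2\epsilon}\,\pi(\Theta_i)\prod_{i=1}^n P_{\theta^\star}(y_i\mid x_i),
\]
which is the desired inequality. There is no real obstacle here; the only subtle point is that the Voronoi partition is defined on $\Theta$ via the projection map $\theta \mapsto \theta|_{\omega_n}$, so one must be careful that ``both $\theta$ and $\theta^\star$ are in $\Theta_i$'' means their projections share a nearest neighbor in $M$, which is precisely what is needed to invoke the triangle inequality for $d_{ll}$.
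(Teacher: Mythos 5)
Your proof is correct and follows essentially the same route as the paper: restrict the prior expectation to the cell $\Theta_i$ using nonnegativity, apply the triangle inequality for $d_{ll}$ through the Voronoi center $m_i$ (each point in the cell is within $\epsilon$ of its nearest center, since $M$ is an $\epsilon$-cover) to get $d_{ll}(\theta,\theta^\star)\leq 2\epsilon$ and hence the pointwise ratio bound $e^{-2\epsilon}$, and then integrate over the cell to pick up the factor $\pi(\Theta_i)$. The paper phrases the last step as a sum of conditional expectations over the disjoint cells rather than an explicit integral over $\Theta_i$, but this is only a cosmetic difference.
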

\begin{proof}
Fix $\Theta_i$ and let $m_i \in M$ denote the Voronoi `center.' Then for any $\theta, \theta' \in \Theta_i$, we have
\[ d_{ll}(\theta, \theta') \ \leq \ d_{ll}(\theta, m_i) + d_{ll}(m_i, \theta') \ \leq \ 2 \epsilon,  \]
where we have used the fact that $M$ is an $\epsilon$-cover. Thus, we have
\[ \prod_{i=1}^n P_{\theta'}(y_i \mid x_i) \ \geq \ e^{-2\epsilon} P_{\theta}(y_i \mid x_i) .\]
Finally, because $\Theta_1, \ldots, \Theta_{|M|}$ are disjoint, we have
\begin{align*}
\EE_{\theta \sim \pi}\left[ \prod_{i=1}^n P_{\theta}(y_i \mid x_i) \right]
\ \geq \ \sum_{j=1}^n \pi(\Theta_j) \EE_{\theta \sim \pi}\left[ \prod_{i=1}^n P_{\theta}(y_i \mid x_i) \mid \theta \in \Theta_j  \right]
\ \geq \ e^{-2\epsilon} \pi(\Theta_i) P_{\theta^\star}(y_i \mid x_i). 
\end{align*}
\end{proof}

We will also require the following result which follows directly from well-known tail bounds for sub-Gamma random variables.
\begin{lemma}
\label{lem:sub-Gamma-concentration}
Suppose $\Theta$ is entropy sub-Gamma with variance factor $v > 0$ and scale parameter $c>0$. If $y_i \sim P_{\theta}(\cdot; x_i)$ for $i=1, \ldots, t$, then with probability at least $1 - \delta$,
\[ \sum_{i=1}^t \log \frac{1}{P_{\theta}(y_i; x_i)} \leq \sum_{i=1}^t H(x_i) + \sqrt{ 2 t v \log \frac{1}{\delta}} + c \log \frac{1}{\delta} .\]
\end{lemma}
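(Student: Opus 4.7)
The plan is to apply standard Cramér–Chernoff methodology to the sum $S_t := \sum_{i=1}^t X_i$, where $X_i := \log \frac{1}{P_\theta(y_i; x_i)} - H(x_i)$. By the definition of entropy sub-Gamma (and since the $y_i$'s are independent draws from their respective $P_\theta(\cdot; x_i)$), each $X_i$ is an independent, mean-zero sub-Gamma random variable with variance factor $v$ and scale $c$. The desired inequality is precisely $\pr(S_t \geq \sqrt{2 t v \log(1/\delta)} + c \log(1/\delta)) \leq \delta$.

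First I would verify that sums of independent sub-Gamma variables are sub-Gamma with additive variance factor: for $\lambda \in (0, 1/c)$,
\[ \log \EE[e^{\lambda S_t}] = \sum_{i=1}^t \log \EE[e^{\lambda X_i}] \leq \sum_{i=1}^t \frac{\lambda^2 v}{2(1 - c \lambda)} = \frac{\lambda^2 (t v)}{2(1 - c \lambda)}, \]
so $S_t$ is sub-Gamma with variance factor $t v$ and scale $c$. Then Markov's inequality applied to $e^{\lambda S_t}$ gives, for any $s > 0$ and $\lambda \in (0, 1/c)$,
\[ \pr(S_t \geq s) \leq \exp\!\left( -\lambda s + \frac{\lambda^2 t v}{2(1 - c \lambda)} \right). \]

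Finally I would optimize over $\lambda$: the standard sub-Gamma tail bound (see e.g., Boucheron, Lugosi, and Massart, Chapter 2) shows that choosing $\lambda$ appropriately yields $\pr(S_t \geq \sqrt{2 t v \log(1/\delta)} + c \log(1/\delta)) \leq \delta$. Substituting back the definition of $X_i$ and rearranging gives the claimed inequality. There is no real obstacle here; the argument is a direct application of the definition of sub-Gamma combined with the textbook tail bound, and the only place to be careful is noting that independence of the $y_i$'s (hence of the $X_i$'s) enables the additive variance factor when passing from a single coordinate to the sum.
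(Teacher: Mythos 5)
Your proposal is correct and follows essentially the same route as the paper: both arguments show that the sum of the independent centered log-likelihood terms is sub-Gamma with variance factor $tv$ and scale $c$ via additivity of the log moment generating function, and then invoke the standard sub-Gamma tail bound from Boucheron, Lugosi, and Massart. No meaningful differences.
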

\begin{proof}
Observe that for independent mean-zero sub-Gamma random variables $X_1, \ldots, X_n$ with variance factor $v >0$ and scale parameter $c > 0$, the random variable $Z = \sum_{i} X_i$ satisfies
\[ \log \EE\left[ e^{\lambda Z}  \right] = \log  \EE\left[ \prod_{i=1}^n e^{\lambda X_i}  \right] = \sum_{i=1}^n  \log  \EE\left[ e^{\lambda X_i}  \right] \leq  \frac{n v \lambda^2}{2(1 - c\lambda)}  \]
for $\lambda \in (0,1/c)$.
Thus, $Z$ is sub-Gamma with variance factor $nv$ and scale parameter $c$. The argument is finished via standard concentration results on sub-Gamma random variables, e.g. \citep[Chapter 2.4]{boucheron2013concentration}.
\end{proof}

With Lemmas~\ref{lem:voronoi-partition} and~\ref{lem:sub-Gamma-concentration} in hand, we can turn to proving~\pref{lem:lower-bound-W_t}.
\begin{proof}[Proof of~\pref{lem:lower-bound-W_t}]
Recall that we can write
\[  W_t \ = \ \exp\left(  \sum_{i=1}^t H(x_i) \right) \EE_{\theta \sim \pi} \left[ \prod_{i=1}^t P_{\theta}(y_i ; x_i) \right]. \]
Let $\omega_t = ((x_1, y_1), \ldots, (x_t, y_t))$ denote our data. By Lemma~\ref{lem:voronoi-partition}, we have
\[ \EE_{\theta \sim \pi}\left[ \prod_{i=1}^t P_{\theta}(y_i \mid x_i) \right] \ \geq \ e^{-2} \pi(\Theta_i) \prod_{j=1}^t P_{\theta_i}(y_j \mid x_j) \]
where $\Theta_1, \ldots, \Theta_M$ is the Voronoi partition induced by a minimal $1$-covering of $\Theta|_{\omega_n}$, $\Theta_i$ is any of these partition elements, and $\theta_i$ is any element in $\Theta_i$. Let $S$ denote the set of indices $i$ such that $\pi(\Theta_i) < \delta/(2M)$. Then we have
\[ \pr \left(  \exists i \in S \text{ s.t. } \theta^\star \in \Theta_i \right) \ = \ \sum_{i \in S} \pi(\Theta_i) \ \leq \ \frac{|S| \delta}{2 M} \ \leq \ \frac{\delta}{2}. \]
Thus, if $\Theta^\star$ is the element of the partition that $\theta^\star$ falls into, we have with probability at least $1-\delta/2$
\begin{align*}
\EE_{\theta \sim \pi}\left[ \prod_{i=1}^t P_{\theta}(y_i \mid x_i) \right] 
\ &\geq \ e^{-2} \pi(\Theta^\star) \prod_{j=1}^t P_{\theta^\star}(y_j \mid x_j) \\
\ &\geq \ \frac{1}{M} \exp\left( -2 - \log \frac{2}{\delta} \right) \prod_{j=1}^t P_{\theta^\star}(y_j \mid x_j)   \\
\ &\geq \ \exp\left( -2 - d \log (c t) - \log \frac{2}{\delta} \right) \prod_{j=1}^t P_{\theta^\star}(y_j \mid x_j),
\end{align*}
where the last line follows from the fact that $M \leq (c t)^d$.

Finally, observe that with probability at least $1- \delta/2$, Lemma~\ref{lem:sub-Gamma-concentration} implies
\[ \prod_{j=1}^t P_{\theta^\star}(y_j \mid x_j) 
\ = \ \exp\left( - \sum_{j=1}^t \log \frac{1}{ P_{\theta^\star}(y_j \mid x_j)} \right) 
\ \geq \ \exp\left( -\sum_{i=1}^t H(x_i) - \sqrt{2 t v \log \frac{2}{\delta}} - c' \log \frac{2}{\delta} \right). \]
A union bound finishes the argument.
\end{proof}

\subsection{Proof of~\pref{thm:dbal-upper-bound}}
Combining Lemma~\ref{lem:X_t-contraction} with a union bound, we have that with probability $1-\delta/2$
\[ W_{t}^2 \avg(\pi_t) \ \leq \ \exp\left(- t\rho + c_1 c_2 \sqrt{2t \log \frac{2t(t+1)}{\delta}} \right) \avg(\pi) \]
for all $t\geq 1$, simultaneously. Similarly, combining Lemma~\ref{lem:lower-bound-W_t} with a union bound gives us with probability at least $1-\delta$,
\[ W_t^2 \ \geq \ \exp\left( - d \log(c t) - 2 - \log \frac{4t(t+1)}{\delta}- \sqrt{ 2 t v \log \frac{4t(t+1)}{\delta}} -  c' \log \frac{4t(t+1)}{\delta} \right) \]
for all $t\geq 1$. Thus, with probability $1-\delta$, both of these occur simultaneously. Plugging in the value of $T$ from the theorem statement,
\begin{align*}
\avg(\pi_T) \ &\leq \ \avg(\pi) \exp\left(- T\rho + c_1 c_2 \sqrt{2T \log \frac{2T(T+1)}{\delta}} + d \log(c T) \right.\\
&\hspace{5em} \left. + 2 + \log \frac{4T(T+1)}{\delta} + \sqrt{2 T v \log \frac{4T(T+1)}{\delta}} +  c' \log \frac{4T(T+1)}{\delta}  \right) \\ &\leq \ \avg(\pi) \exp\left(- T\rho + 4c_1 c_2 \sqrt{T v \log \frac{4T(T+1)}{\delta}} + (d + c' + 1)\log \left( \frac{4c T(T+1)}{\delta} \right) \right)
\end{align*}
The above is less than $\epsilon$ when we have
\[ T \geq \max \left\{ 9 \left( \frac{12 c_1 c_2}{\rho} \right)^2 v \log\left( \left( \frac{12 c_1 c_2}{\rho} \right)^2 \cdot \frac{4}{\delta} \right),
\frac{27}{\rho}(d + c' +1) \log\left( \frac{3}{\rho}(d + c' +1) \cdot \frac{4c}{\delta} \right), \frac{3}{\rho} \log \frac{\avg(\pi)}{\epsilon} \right\}.\]
Here, we have made use of the fact that if $a \geq 1$, $b \geq e$ and $x \geq 9a \log(ab)$, then $x \geq a \log(bx (x+1)).$ \qed

\subsection{Proof of~\pref{lem:combine-split}}
Observe by the product measure assumption of $P_\theta(\cdot ; x_1, x_2)$, we have
\begin{align*}
&\EE_{\theta^\star \sim \pi} \EE_{y_1,y_2 \sim P_{\theta^\star}(x_1, x_2)} \EE_{\theta, \theta' \sim \pi}
\left[  P_\theta(y_1, y_2; x_1, x_2) P_{\theta'}(y_1, y_2; x_1, x_2) d(\theta, \theta') \right] \\
&= \EE_{\theta, \theta', \theta^\star \sim \pi} \left[ d(\theta, \theta') 
\EE_{y_1\sim P_{\theta^\star}(x_1)}\left[ P_{\theta}(y_1; x_1) P_{\theta'}(y_1; x_1) \right] 
\EE_{y_2\sim P_{\theta^\star}(x_2)}\left[ P_{\theta}(y_2; x_2) P_{\theta'}(y_2; x_2) \right]   \right] \\
&=: \EE_{\theta, \theta', \theta^\star \sim \pi} \left[ d(\theta, \theta') 
 \alpha_1(\theta, \theta', \theta^\star)  \alpha_2(\theta, \theta', \theta^\star)  \right].
\end{align*}
Let $U$ be the random variable that takes on value $\alpha_1(\theta, \theta', \theta^\star)$ and let $V$ denote the random variable that takes on value $\alpha_2(\theta, \theta', \theta^\star)$. Here, $\theta, \theta', \theta^\star$ occur with probability $\frac{\pi(\theta) \pi(\theta') \pi(\theta^\star) d(\theta, \theta')}{\avg(\pi)}$. Then it is not hard to see that $\EE[U] = (1- \rho_1)e^{-2h}$ and $\EE[V] = (1 - \rho_2)e^{-2h}$. Note that $U$ and $V$ lie in the interval $[0,1]$ almost surely. Let $A = 1-U$ and $B=1-V$.

Let us first consider the case where $h = 0$. Then we have
\[ \EE[AB] = 1 - \EE[U] - \EE[V] + \EE[UV] = 1 - (1-\rho_1) - (1-\rho_2) + \EE[UV] \ = \ \rho_1 + \rho_2 - 1 + \EE[UV]. \]
Observe that $AB \geq 0$ almost surely, and so we have
\[ \EE[UV] \ \geq \ 1 - \rho_1 - \rho_2. \]
Substituting in our definitions of $U$ and $V$ gives us the result.

Now consider the case where $0 \leq h \leq \frac{\rho_1 + \rho_2}{6}$. The same argument as before shows that
\[ \EE[UV] \geq  (2 - \rho_1 - \rho_2)e^{-2h} - 1 = (2 - \rho)e^{-2h} - 1, \]
where we have made the substitution $\rho = \rho_1 + \rho_2$. To prove the lemma, we will show that the above is greater than $(1- 2\rho)e^{-4h}$. This is equivalent to showing
\[  (2 - \rho)e^{2h} - e^{4h} - 1 + 2 \rho \geq 0. \]
The left-hand side is decreasing for $h \geq 0$. Moreover, we also have the inequality $h \leq \frac{\rho}{6} \leq \frac{1}{2} \log(1 + \frac{\rho}{2})$. Thus,
\begin{align*}
(2 - \rho)e^{2h} - e^{4h} - 1 + 2 \rho  \geq (2 - \rho)\left(1 + \frac{\rho}{2} \right) - \left(1 + \frac{\rho}{2} \right)^2 - 1 + 2 \rho = \rho - \frac{\rho^2}{2} \geq 0.  \qed %
\end{align*}

\subsection{Proof of~\pref{thm:deterministic-lower-bound}}

Let $\pi$ be a prior distribution as in the theorem statement. Suppose we draw less than $1/2\tau$ unlabeled examples, then with probability at least $(1-\tau)^{1/2\tau} \geq 1/2$ none of these $\rho$-split $\pi$. Let us condition on this event. By induction on Lemma~\ref{lem:combine-split}, we have that any collection of $n \leq 1/\rho$ of these points does not $n \rho$-split $\pi$. 

Suppose that we query $n$ of these points (say $x_1, \ldots, x_n)$, and receive responses $y_1, \ldots, y_n$. Let $\pi_n$ denote this posterior. By \pref{lem:splitting}, we have
\begin{align*}
\EE_{y_{1:n}} \left[ Z_n^2 \avg(\pi_n) \right] 
& \geq (1 - n \rho) \avg(\pi),
\end{align*}
where $Z_n = \EE_{\theta \sim \pi}\left[ P_{\theta}(y_{1:n}; x_{1:n}) \right] \leq 1$. Thus,
\[ \EE_{y_{1:n}} \left[ \avg(\pi_n) \right] \ \geq \  (1 - n \rho) \avg(\pi). \]
For a random variable $U$ satisfying $U \leq c$ almost surely, the reverse Markov inequality gives us
\[ \pr\left( U > \alpha \right) \geq \frac{\EE[U] - \alpha}{c - \alpha}\]
for any $\alpha \leq \EE[X]$. Applying this to the random variable $\frac{\avg(\pi_n)}{\avg(\pi)}$ and assuming $n \leq \frac{1}{2\rho}$, we have that $\avg(\pi_n) \geq \epsilon$ with probability at least $1/3$. Putting it all together gives us the theorem statement. \qed

\subsection{Proof of~\pref{thm:lowent-lower-bound}}

We will need the following lemma.
\begin{lemma}
\label{lem:all-split-lowent}
Let $n \leq \sqrt{1/\rho}$, and let $h \in \RR$ satisfy $0 \leq h \leq $. Suppose $x_1 , \ldots, x_n$ all satisfy $H(x_i) = h \leq \rho/6n$ and have splitting index $\leq \rho$. Then the combined query $x_{1:n}$ has splitting index less than $n^2 \rho$.
\end{lemma}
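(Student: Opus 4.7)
\emph{Proof plan.} The idea is to generalize the derivation in \pref{lem:combine-split} directly from two queries to $n$, rather than iterating the pairwise bound (which would fail because the iterated sub-queries have mismatched entropies, outside the scope of that lemma). Exactly as in that proof, the product structure of $P_\theta(\cdot\,; x_{1:n})$ together with \pref{assum:homog-entropy} yields
\[ \frac{s(x_{1:n})}{\avg(\pi)} \ = \ e^{2nh}\, \EE\!\left[ \prod_{i=1}^n U_i \right], \]
where $U_i := \EE_{y_i \sim P_{\theta^\star}(x_i)}\!\left[ P_\theta(y_i; x_i) P_{\theta'}(y_i; x_i) \right]$ and the outer expectation weights $(\theta, \theta', \theta^\star) \sim \pi^{\otimes 3}$ by $d(\theta,\theta')/\avg(\pi)$. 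Each $U_i$ lies in $[0,1]$ almost surely (using the hypothesis $P_\theta(\cdot;x) \leq 1$), and the per-query splitting hypothesis translates directly into $\EE[U_i] \geq (1-\rho)\,e^{-2h}$.

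The workhorse is the Weierstrass product inequality $\prod_i(1 - A_i) \geq 1 - \sum_i A_i$ valid for $A_i \in [0,1]$, applied with $A_i = 1 - U_i$. Taking expectations gives
\[ \EE\!\left[ \prod_{i=1}^n U_i \right] \ \geq \ 1 - n\bigl(1 - (1-\rho)\,e^{-2h}\bigr), \]
hence $s(x_{1:n})/\avg(\pi) \geq g(y) := n(1-\rho)\, y^{n-1} - (n-1)\, y^n$ with $y := e^{2h}$. Writing $y = 1 + \delta$ produces the clean factorization $g(1+\delta) = (1+\delta)^{n-1}\bigl[1 - n\rho - (n-1)\delta\bigr]$, so whenever the bracket is non-negative one gets $g(y) \geq 1 - n\rho - (n-1)\delta$ for free from $(1+\delta)^{n-1} \geq 1$.

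The proof then reduces to verifying $(n-1)\delta \leq n(n-1)\rho$. The hypotheses $h \leq \rho/(6n)$ and $n^2\rho \leq 1$ together imply $2h \leq 1/3$, so $\delta = e^{2h} - 1 \leq 4h \leq 2\rho/(3n)$; therefore $(n-1)\delta \leq 2\rho/3 \leq n(n-1)\rho$ for every $n \geq 2$ (the $n=1$ case is trivial). The part I expect to require the most care is the $e^{2nh}$ prefactor: the Weierstrass lower bound is rather coarse, and without the assumption that $h$ is small compared to $\rho/n$, the exponential prefactor would swamp the linear decrement $n(1 - (1-\rho)e^{-2h})$. The hypothesis $h \leq \rho/(6n)$ is precisely what leaves enough slack in the factorization above to reach the target splitting value $n^2\rho$.
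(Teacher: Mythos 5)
Your proof is correct, but it takes a genuinely different route from the paper. The paper proves this lemma by induction, repeatedly halving the query sequence and invoking the pairwise bound of \pref{lem:combine-split} on the two halves (which, contrary to your parenthetical worry, keeps the entropies matched at $nh/2$ each -- it is the naive left-to-right iteration that would mismatch them); the quadratic loss $n^2\rho$ is exactly the artifact of the factor $2(\rho_1+\rho_2)$ incurred at each of the $\log_2 n$ levels of that recursion, and the paper only handles powers of two explicitly. You instead run the $n$-fold analogue of the two-query computation directly: the identity $s(x_{1:n})/\avg(\pi) = e^{2nh}\,\EE[\prod_i U_i]$, the Weierstrass inequality $\prod_i(1-A_i) \geq 1 - \sum_i A_i$ (whose $n=2$ case is precisely the step $\EE[UV] \geq \EE[U]+\EE[V]-1$ in the paper's pairwise proof), and the clean factorization $g(1+\delta) = (1+\delta)^{n-1}\bigl[1 - n\rho - (n-1)\delta\bigr]$, with the hypotheses $h \leq \rho/6n$ and $n \leq \sqrt{1/\rho}$ giving $(n-1)\delta \leq 2\rho/3$ and keeping the bracket positive. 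All of these steps check out. Notably, your intermediate bound shows the combined query has splitting value at most $n\rho + 2\rho/3$, which is linear in $n$ and strictly stronger than the stated $n^2\rho$; carried through \pref{thm:lowent-lower-bound}, this would improve the query lower bound there from order $1/\sqrt{\rho}$ to order $1/\rho$, matching the deterministic case. The only cosmetic gap is that you conclude splitting value $\leq n^2\rho$ rather than the strict inequality in the statement, but the paper's own argument has the same slack and only the non-strict version is used downstream.
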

\begin{proof}
We will show the claim for $n$ a power of 2. Extending to other integers is straightforward.

The proof is by induction. Where we first observe that any subsequence $i_1, i_2, \ldots, i_k \in \{1,\ldots,n\}$ satisfies that 
\[ H(x_{i_1}, \ldots, x_{i_k}) = \sum_{j=1}^k H(x_{i_j}) = k h \leq \frac{\rho k}{6n} \leq \frac{\rho}{6}. \]
Now for $n=1$, then the claim trivially holds. For $n \geq 2$, observe that by our inductive hypothesis, we have $x_{1:n/2}$ and $x{n/2+1:n}$ each have splitting index less than $\frac{n^2 \rho}{4}$. Applying \pref{lem:combine-split}, completes the argument.
\end{proof}

Turning to the proof of~\pref{thm:lowent-lower-bound}, let $\pi$ be a prior distribution as in the theorem statement. Suppose we draw less than $1/2\tau$ unlabeled examples, then with probability at least $(1-\tau)^{1/2\tau} \geq 1/2$ none of these $\rho$-split $\pi$. Let us condition on this event.

Now let $n \leq \frac{1}{2\sqrt{\rho}}$. Using the fact that $n < \sqrt{1/\rho}$ and $h < \rho^{3/2}/6$, we can apply \pref{lem:all-split-lowent}, to see that any collection of $n$ of these points does not $n^2 \rho$-split $\pi$. \pref{lem:splitting} then implies that
\begin{align*}
\EE_{y_1, \ldots, y_n} \left[ W_n^2 \avg(\pi_n) \right] 
& \geq (1 - n^2 \rho) \avg(\pi).
\end{align*}
Recall $W_n = e^{\sum_{i=1}^n H(x_i)} \EE_{\theta \sim \pi} \left[ \prod_{i=1}^n P_{\theta}(y_i; x_i) \right]$. By our assumptions that $H(x) \leq \rho^{3/2}/6$, $P_{\theta}(y; x) \leq 1$, $n \leq \sqrt{1/\rho}$ and $\rho \leq 1/4$, we have $W_n \leq 3/2$ almost surely.
Thus,
\[ \EE_{y_{1:n}} \left[ \avg(\pi_n) \right] \ \geq \  \frac{2}{3} (1 - n^2 \rho) \avg(\pi). \]
For a random variable $U$ satisfying $U \leq c$ almost surely, the reverse Markov inequality gives us
\[ \pr\left( U > \alpha \right) \geq \frac{\EE[U] - \alpha}{c - \alpha}\]
for any $\alpha \leq \EE[X]$. Applying this to the random variable $U = \frac{\avg(\pi_n)}{\avg(\pi)}$ and threshold $\alpha = \frac{\epsilon}{\avg(\pi)}$, we have 
\begin{align*}
 \pr\left( \avg(\pi_n) >  \epsilon \right) &\geq \frac{\frac{2}{3} (1 - n^2 \rho) \avg(\pi) - \epsilon}{\avg(\pi) - \epsilon} \\
 &\geq \frac{ \frac{8\epsilon}{3}(1 - n^2 \rho) - \epsilon }{4\epsilon - \epsilon} > \frac{1}{3}
\end{align*}
where we have used the fact that $n \leq \frac{1}{2\sqrt{\rho}}$ and $f(x) = \frac{cx - \epsilon}{x - \epsilon}$ is increasing in $x$ when $c \in (0,1)$. Thus, $\avg(\pi_n) \geq \epsilon$ with probability at least $1/3$, finishing the argument. \qed

\section{Additional details on the simulation study}
\label{app:more-simul}

\begin{figure}
\includegraphics[width=\textwidth]{./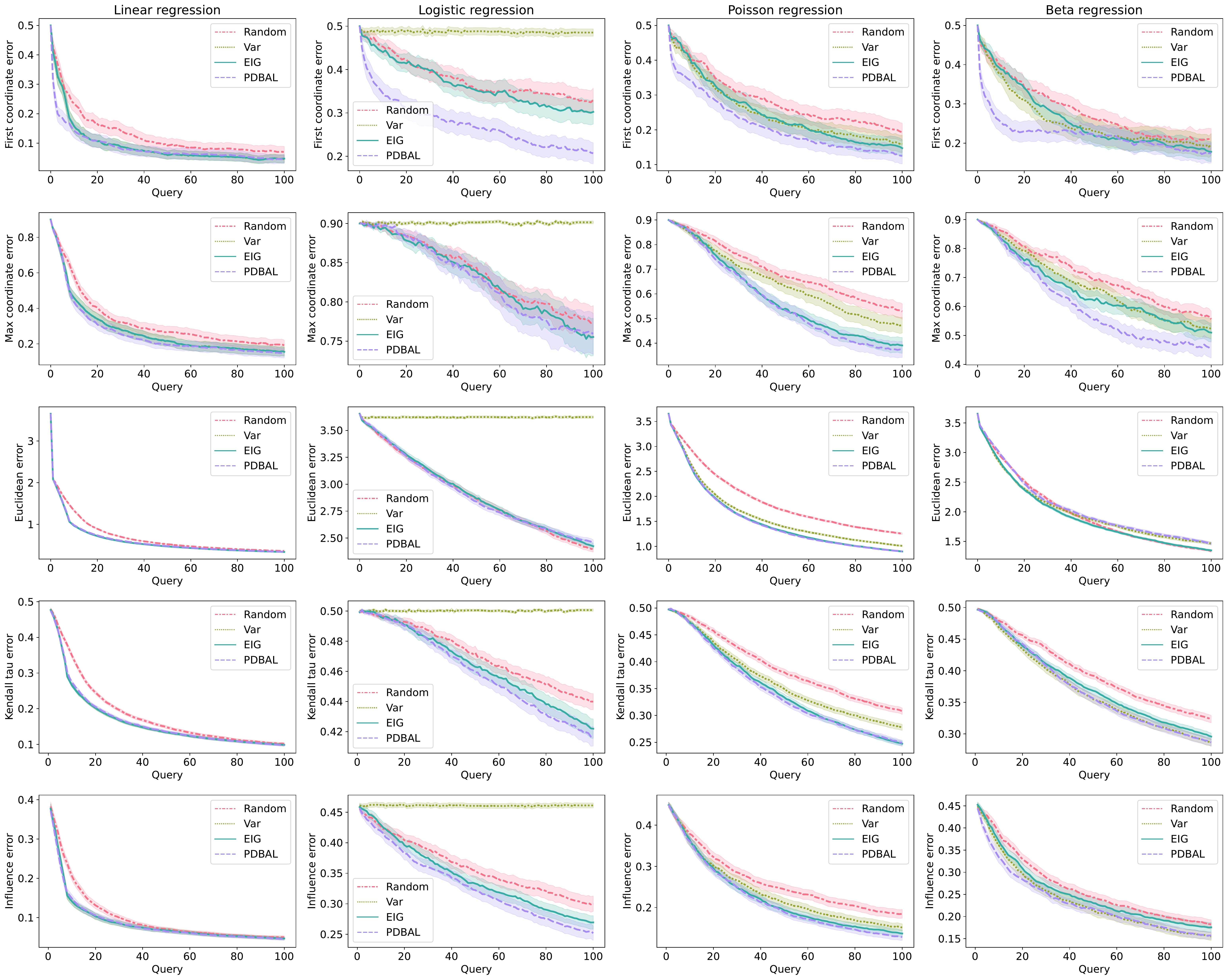}
\vspace{-1.5em}
\caption{Full set of synthetic regression simulations.}
\label{fig:simul_all}
\end{figure}

Here we further expand on the details of our regression setup. We consider the following models.
\begin{itemize}
	\item Linear regression with homoscedastic Gaussian noise: 
	\[ P_\theta(y ; x) = \Ncal(y \mid \langle x, \theta \rangle, \sigma^2), \]
	where $\sigma > 0$ is some known standard deviation. In our experiments it is set to $1/4$.
	\item Logistic regression:
	\[ P_\theta(y ; x) = \text{Bernoulli} \left(y \mid \mu \right), \]
	where $\mu = \frac{1}{1 + e^{-\langle x, \theta \rangle}}$.
	\item Poisson regression:
	\[ P_\theta(y ; x) = \text{Poisson} \left( y \mid  e^{\langle x, \theta \rangle} \right). \]
	\item Beta regression using the well-known mean parameterization~\citep{ferrari2004beta}:
	\[ P_\theta(y ; x) = \text{Beta} \left( y \mid  \phi \mu, \phi(1- \mu) \right), \]
	where $\mu = \frac{1}{1 + e^{-\langle x, \theta \rangle}}$ and $\phi>0$ is a fixed and known constant.
\end{itemize}

We consider five objectives and their corresponding distances over parameters.
\begin{itemize}
	\item What is the sign of the first coordinate?
	\[ d_{\text{first}}( \theta, \theta' ) = \ind[ \sign(\theta_1) \neq \sign(\theta'_1)]. \]
	\item Which coordinate has largest absolute magnitude?
	\[ d_{\text{max}}( \theta, \theta' ) = \ind[ \argmax_{i} |\theta_i| \neq \argmax_{i} |\theta'_i| ]. \]
	\item Can we identify the parameter?
	\[ d_{\text{euclidean}}( \theta, \theta' ) = \| \theta - \theta \|_2. \]
	\item What is the order of the magnitudes of the coordinates?
	\[ d_{\text{kendall}}( \theta, \theta' ) =  \frac{1}{2} \left( 1 - \tau(|\theta|, |\theta'|)  \right) \]
	where $\tau(|\theta|, |\theta'|)$ is the Kendall's $\tau$ correlation between of the pairs $(|\theta_1|, |\theta_1'|), \ldots, (|\theta_d|, |\theta_d'|)$.
	\item What is the influence of the first $d/2$ coordinates on the predicted sign?
	\[ d_{\text{influence}}( \theta, \theta' ) =  \pr_{x} \left(  \sign \left(\langle x_{1:d/2}, \theta_{1:d/2}  \rangle\right) \neq  \sign \left(\langle x_{1:d/2}, \theta'_{1:d/2}  \rangle\right)\right), \]
	where $x_{1:d/2}$ denotes $x$ restricted to its first $d/2$ coordinates.
\end{itemize}

For each query, we first sampled 2K fresh data points from the distribution and then chose the next query from this pool of points. For methods requiring posterior samples, we collected 300 MCMC samples from the NUTS algorithm, using 2 parallel chains, a burnin of 750 steps, and a thinning factor of 5. We evaluated the model by drawing the same number of MCMC samples with the same parameters.

\pref{fig:simul_all} depicts the results of all of our simulations. One notable observation is that \varsamp does very poorly on logistic regression tasks. The reason for this is that in our data generating process, it is possible to sample data points whose covariates are all zero. Such data points maximize posterior predictive variance but provide no information on the underlying regression coefficients.

\section{Additional details on the drug discovery experiment}
\label{app:gdsc-details}

\paragraph{Data preprocessing} We downloaded the publicly available \texttt{GDSC2-raw-data} dataset from \url{https://www.cancerrxgene.org/downloads/bulk_download} and pre-processed it using the R package \texttt{gdscIC50} \citep{lightfoot2016gdscic50}. This preprocessing step transforms the raw cell counts of each experiment to cell viability (fraction of surviving cells) adjusting for low and high dimethyl sulfoxide (DMSO) controls. We then obtain the subset of experiments corresponding to the top $M=20$ cell lines and $L=100$ drugs that appear the most times in the dataset, at all $7$ concentrations. Then, we transform the viability using the logistic transform by setting $y_{ij_d}=\text{logit}(\text{clip}(\text{viability}_{ij_d}, 0.005, 0.995))$. Since there are multiple observations for each cell line/drug/dose triplet, we aggregate them by averaging, yielding our final dataset.

\paragraph{Statistical model} We fit a Bayesian factor model to the full dataset and use the predictions $\mu_{ij_d}$ as the reference for computing the progress of active learning. As more experiments are revealed, all active learning strategies converge to the same solution given by the full data model fit. The model has the form
\begin{equation*}
\begin{aligned}
y_{i{j_d}} &\sim \textrm{Normal}(\mu_{i{j_d}}, \sigma^2)
\\ \mu_{i{j_d}} &= a + b_i + c_{j_d} + \boldsymbol{w}_{i}^\top \boldsymbol{v}_{j_d}.
\end{aligned}
\end{equation*}
The model is completed with standard Horseshoe priors \citep{carvalho2009handling} for regularization and an auto-regressive prior \citep{besag1974spatial} to encourage smoothness along the dose-response curves
\begin{equation*}
\begin{aligned}
b_i &\sim \textrm{Normal}( 0, \lambda_{b_i}^2) &\\
w_{i,r} &\sim \textrm{Normal}(0, \lambda_{w_{i,r}}^2) &\\
(c_{j_1},\hdots,c_{j_7}) &\sim AR(\eta) \times \Pi_{d=1}^7\textrm{Normal}(0, \lambda_{c_{j_d}}^2)  \\
({w}_{j_{1,r}},\hdots,{w}_{j_{7,r}})& \sim AR(\eta) \times \Pi_{d=1}^7\textrm{Normal}(0, \lambda_{{w}_{j_d,r}}^2)  \\
\lambda_{b_i} & \sim C^+(0,1) \\
\lambda_{c_{j_d}} & \sim C^+(0,1) \\
\lambda_{w_{i,r}} & \sim C^+(0,1) \\
\lambda_{{v}_{j_d,r}} & \sim  C^+(0,1) \\
(1/\sigma^2) & \sim \text{Exp}(1)
\end{aligned}
\end{equation*}
where $\boldsymbol{x} \sim AR(\eta)$ for a vector $\boldsymbol{x}\in\mathbb{R}^d$ means $p(x \mid \eta) \propto \exp(-(\eta/2)\sum_{s=2}^d (x_s-x_{s-1})^2)$.  We set $\eta=0.1$ to add smoothness to the prior along the dose-response curve, but do not fine-tune this parameter. The embedding dimension is set to $q=4$, which we find suffices to provide a good fit to the data. \pref{fig:gdsc-model-fit} shows the fit to the data and examples of dose-response curves. Despite the low dimensionality of the embeddings, the model is able to capture over 75\% of the variation in viability. Bayesian inference is conducted using a simple Gibbs sampler which can be derived analytically in closed conjugate form. To do so, we expand the half-Cauchy prior into a scale mixture to get updates that only involve normal and inverse-gamma distributions. Since the model is Gaussian, the \pdbal scores are evaluated using the formula in \pref{prop:gaussian-shortcut}.
\begin{figure}
    \centering
    \begin{subfigure}[b]{0.6\textwidth}
    \centering
    \includegraphics[width=0.9\textwidth]{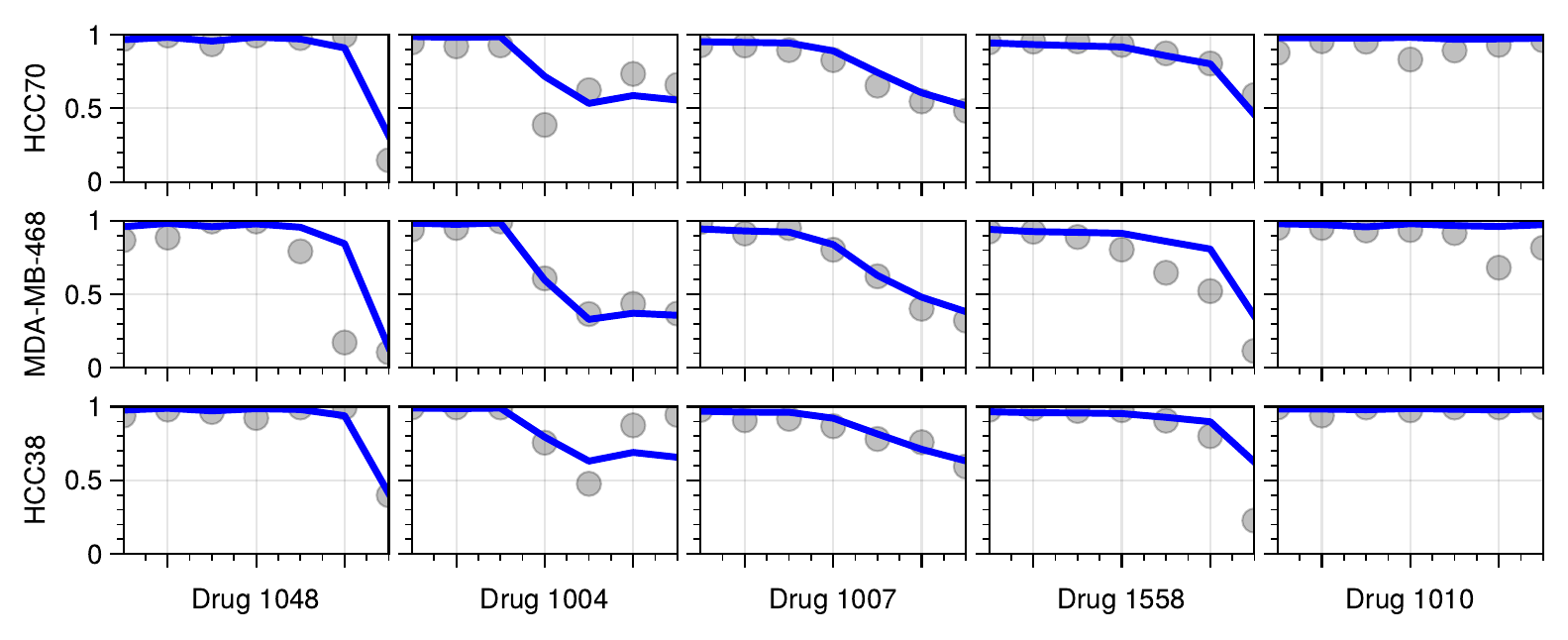}
    \end{subfigure}\hspace{-2cm}~
    \begin{subfigure}[b]{0.4\textwidth}
    \centering
    \includegraphics[width=0.63\textwidth]{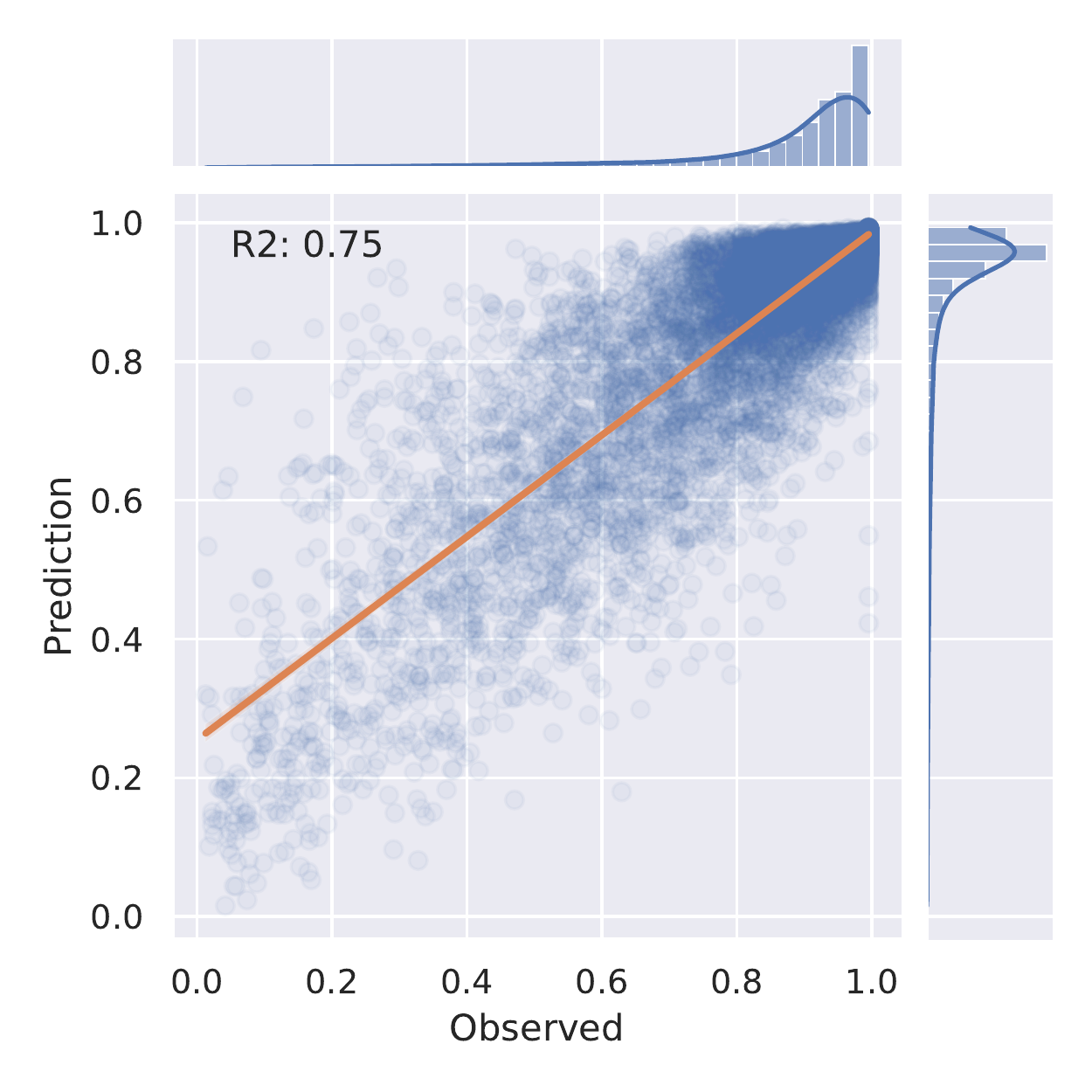}
    \end{subfigure}
    \caption{Bayesian factor model fit to the GDSC dataset. \emph{Left}: examples of raw data (points) and fitted curves (lines). \emph{Right}: scatter plot comparing fitted vs observed points.}
    \label{fig:gdsc-model-fit}
\end{figure}

\paragraph{Experiments} We study the potential of \pdbal to adaptively screen anti-cancer drugs in a retrospective experiment on GDSC. At each step, the algorithm is allowed to conduct a single trial of a drug tested against a cancer cell line. We consider coarse- and fine-grained settings for drug selection. In the coarse setting, the algorithm selects the drug and cell line and observes the responses at each of the seven doses; in the fine-grain setting, the algorithm additionally specifies a dose. Each setup corresponds to $n=2K$ and $n=14K$ possible experiments, respectively. Performance is evaluated as the error over each (cell line, drug, dose) when compared to what the underlying probabilistic model would learn from the full data.

Each algorithm was tested over the same set of 7 random seeds. Each run had a warm start, where one observation for each cell line and drug was randomly selected. The Bayesian model was updated after each selection with 10 parallel chains, each with 200 burn-in Gibbs sampling cycles. Each chain collected 10 samples with a thinning factor of 10. To further accelerate the evaluation, the \pdbal, \eig, and variance scores to select the next experiment were also parallelized across 10 processes. For \pdbal, the coarse-grained experiments took approximately 12 hours of computation and the fine-grained experiment took 48 hours on a cluster equipped with Intel ``Cascade Lake" CPUs. \eig scoring was slow due to numerical integration, making it infeasible to score the large number of possible experiments in the fined-grained setup. Therefore, it was only evaluated on the coarse-grained setup.

\vfill

\end{document}